\newtheorem{theorem}{Theorem}[section]
\newtheorem{lemma}[theorem]{Lemma}
\newtheorem{remark}[theorem]{Remark}
\newtheorem{counter-example}[theorem]{Counter example}
\newtheorem{open question}[theorem]{Open question}
\newtheorem{definition}[theorem]{Definition}
\newcommand{\cb}{{\cal B}}
\newcommand{\cd}{{\cal D}}
\newcommand{\cc}{{\cal C}}
\newcommand{\cq}{{\cal Q}}
\newcommand{\ct}{{\cal T}}
\newcommand{\ch}{{\cal H}}
\newcommand{\cw}{{\cal W}}
\newcommand{\cl}{{\cal L}}
\newcommand{\cf}{{\cal F}}
\newcommand{\cx}{{\cal X}}
\newcommand{\cy}{{\cal Y}}
\newcommand{\cs}{{\cal S}}
\newcommand{\cn}{{\cal N}}
\newcommand{\x}{\mathbf{x}}
\newcommand{\f}{\mathbf{f}}
\newcommand{\y}{\mathbf{y}}
\newcommand{\z}{\mathbf{z}}
\newcommand{\bt}{\mathbf{t}}
\newcommand{\bw}{\mathbf{w}}
\newcommand{\bv}{\mathbf{v}}
\newcommand{\ba}{\mathbf{a}}
\newcommand{\bu}{\mathbf{u}}
\newcommand{\be}{\mathbf{e}}
\newcommand{\bb}{\mathbf{b}}
\DeclareMathOperator*{\sign}{sign}
\newcommand{\ball}{\mathbb{B}}
\newcommand{\reals}{{\mathbb R}}
\newcommand{\sphere}{{\mathbb S}}
\newcommand{\var}{\mathrm{VAR}}
\newcommand{\fat}{\mathrm{Fat}}
\newcommand{\med}{\mathrm{med}}
\newcommand{\rep}{\mathrm{rep}}
\newcommand{\len}{\mathrm{len}}
\newcommand{\tr}{\mathrm{tr}}
\DeclareMathOperator*{\E}{\mathbb{E}}
\newcommand{\inner}[1]{\left\langle #1 \right\rangle}
\newcommand{\todo}[1]{{\bf TODO: #1}}
\title{Generalization Bounds for Neural Networks via Approximate Description Length}
\author{
Amit Daniely\thanks{School of Computer Science and Engineering, The Hebrew University, Jerusalem, and Google Research, Tel-Aviv.} \hspace{1cm}
Elad Granot\thanks{School of Computer Science and Engineering, The Hebrew University, Jerusalem} 
}
\begin{document}
\maketitle

\thispagestyle{empty}

\begin{abstract}
We investigate the sample complexity of networks with bounds on the magnitude of its weights. 
In particular, we consider the class
\[
\cn = \left\{W_t\circ\rho\circ W_{t-1}\circ\rho\ldots\circ \rho\circ W_{1} : W_1,\ldots,W_{t-1}\in M_{d\times d}, W_t\in M_{1,d}  \right\}
\]
where the spectral norm of each $W_i$ is bounded by $O(1)$, the Frobenius norm is bounded by $R$, and $\rho$ is the sigmoid function $\frac{e^x}{1 + e^x}$ or the smoothened ReLU function $ \ln\left(1 + e^x\right)$.
We show that for any depth $t$, if the inputs are in $[-1,1]^d$, the sample complexity of $\cn$ is $\tilde O\left(\frac{dR^2}{\epsilon^2}\right)$. This bound is optimal up to log-factors, and substantially improves over the previous state of the art of $\tilde O\left(\frac{d^2R^2}{\epsilon^2}\right)$, that was established in a recent line of work~\cite{neyshabur2015norm, bartlett2017spectrally, neyshabur2017pac, golowich2017size, arora2018stronger, neyshabur2018role}.

We furthermore show that this bound remains valid if instead of considering the magnitude of the $W_i$'s, we consider the magnitude of $W_i - W_i^0$, where $W_i^0$ are some reference matrices, with spectral norm of $O(1)$. By taking the $W_i^0$ to be the matrices at the onset of the training process, we get sample complexity bounds that are sub-linear in the number of parameters, in many {\em typical} regimes of parameters.  

To establish our results we develop a new technique to analyze the sample complexity of families $\ch$ of predictors. 
We start by defining a new notion of a randomized approximate description of functions $f:\cx\to\reals^d$. We then show that if there is a way to approximately describe functions in a class $\ch$ using $d$ bits, then $\frac{d}{\epsilon^2}$ examples suffices to guarantee uniform convergence. Namely, that the empirical loss of all the functions in the class is $\epsilon$-close to the true loss. Finally, we develop a set of tools for calculating the approximate description length of classes of functions that can be presented as a composition of linear function classes and non-linear functions.

\end{abstract}

\newpage
\tableofcontents

\thispagestyle{empty}

\newpage

\setcounter{page}{1}

\section{Introduction}

We analyze the sample complexity of networks with bounds on the magnitude of their weights. Let us consider a prototypical case, where the input space is $\cx = [-1,1]^d$, the output space is $\reals$, the number of layers is $t$, all hidden layers has $d$ neurons, and the activation function is $\rho:\reals\to\reals$.
The class of functions computed by such an architecture is
\[
\cn = \left\{W_t\circ\rho\circ W_{t-1}\circ\rho\ldots\circ \rho\circ W_{1} : W_1,\ldots,W_{t-1}\in M_{d\times d}, W_t\in M_{1,d}  \right\}
\]
As the class $\cn$ is defined by $(t-1)d^2 + d = O(d^2)$ parameters, classical results (e.g. \cite{AnthonyBa99}) tell us that order of $d^2$ examples are sufficient and necessary in order to learn a function from $\cn$ (in a standard worst case analysis). However, modern networks often succeed to learn with substantially less examples. One way to provide alternative results, and a potential  explanation to the phenomena, is to take into account the magnitude of the weights. This approach was a success story in the days of SVM~\cite{BartlettMe02} and Boosting~\cite{SchapireFrBaLe97}, provided a nice explanation to generalization with sub-linear (in the number of parameters) number of examples, and was even the deriving force behind algorithmic progress. It seems just natural to adopt this approach in the context of modern networks. For instance, it is natural to consider the class
\[
\cn_{R} = \left\{W_t\circ\rho\circ W_{t-1}\circ\rho\ldots\circ \rho\circ W_{1} :  \forall i, \|W_i\|_F\le R,  \|W_i\|\le O(1)\right\}
\]
where $\|W\| = \max_{\|\x\|=1}\|W\x\|$ is the spectral norm and $\|W\|_F=\sqrt{\sum_{i,j=1}^dW_{ij}^2}$ is the Frobenius norm. 
This class has been analyzed in several recent works~\cite{neyshabur2015norm, bartlett2017spectrally, neyshabur2017pac, golowich2017size, arora2018stronger, neyshabur2018role}. Best known results show a sample complexity of $\tilde{O}\left( \frac{d^2R^2}{\epsilon^2} \right)$ (for the sake of simplicity, in the introduction, we ignore the dependence on the depth in the big-O notation). In this paper we prove, for various activations, a stronger bound of $\tilde{O}\left( \frac{dR^2}{\epsilon^2} \right)$, which is optimal, up to log factors, for constant depth networks.

How good is this bound? Does it finally provide sub-linear bound in typical regimes of the parameters? To answer this question, we need to ask how large $R$ is. While this question of course don't have a definite answer, empirical studies (e.g. \cite{sutskever2013importance}) show that it is usually the case that the norm (spectral, Frobenius, and others) of the weight matrices is at the same order of magnitude as the norm of the matrix in the onset of the training process. In most standard training methods, the initial matrices are random matrices with independent (or almost independent) entries, with mean zero and variance of order $\frac{1}{d}$. The Frobenius norm of such a matrix is of order $\sqrt{d}$. Hence, the magnitude of $R$ is of order $\sqrt{d}$. Going back to our $\tilde{O}\left( \frac{dR^2}{\epsilon^2} \right)$ bound, we get a sample complexity of  $\tilde{O}\left( \frac{d^2}{\epsilon^2} \right)$, which is unfortunately still linear in the number of parameters. 

Since our bound is almost optimal, we can ask whether this is the end of the story? Should we abandon the aforementioned approach to network sample complexity? A more refined examination of the training process suggests another hope for this approach. Indeed, the training process doesn't start from the zero matrix, but rather form a random initialization matrix. Thus, it stands to reason that instead of considering the magnitude of the weight matrices $W_i$, we should consider the magnitude of $W_i - W_i^0$, where $W_i^0$ is the initial weight matrix. Indeed, empirical studies~\cite{nagarajan2019generalization} show that the Frobenius norm of $W_i - W_i^0$ is often order of magnitude smaller than the Frobenius norm of $W_i$. Following this perspective, it is natural to consider the class
\[
\cn_{R}(W^0_1,\ldots,W^0_t) = \left\{W_t\circ\rho\circ W_{t-1}\circ\rho\ldots\circ \rho\circ W_{1} : \| W_i - W_i^0 \|\le O(1) , \| W_i - W_i^0 \|_F \le R \right\}
\]
For some fixed matrices, $W^0_1,\ldots,W^0_t$ of spectral norm\footnote{The bound of $O(1)$ on the spectral norm of the $W^0_i$'s and $W_i - W_i^0$ is again motivated by the practice of neural networks -- the spectral norm of $W_i^0$, with standard initializations, is $O(1)$, and empirical studies~\cite{nagarajan2019generalization, sutskever2013importance} show that the spectral norm of $W_i - W_i^0$ is usually very small.} $O(1)$. It is natural to expect that considering balls around the initial $W_i^0$'s instead of zero, shouldn't change the sample complexity of the class at hand. In other words, we can expect that the sample complexity of $\cn_{R}(W^0_1,\ldots,W^0_t)$  is approximately $\tilde{O}\left( \frac{dR^2}{\epsilon^2} \right)$ -- the sample complexity of $\cn_R$. Such a bound would finally be sub-linear, as in practice, it is often the case that $R^2\ll d$.

This approach was pioneered by  \citet{bartlett2017spectrally} who considered the class
\[
\cn^{2,1}_{R}(W^0_1,\ldots,W^0_t) = \left\{W_t\circ\rho\circ W_{t-1}\circ\rho\ldots\circ \rho\circ W_{1} : \| W_i - W_i^0 \|\le O(1) , \| W_i - W_i^0 \|_{2,1} \le R \right\}
\]
where $\|W\|_{2,1} = \sum_{i=1}^d\sqrt{\sum_{j=1}^dW_{ij}^2}$. For this class they proved a sample complexity bound of $\tilde{O}\left( \frac{dR^2}{\epsilon^2} \right)$. Since, $\|W\|_{2,1} \le \sqrt{d}\|W\|_{F}$, this implies a sample complexity bound of $\tilde{O}\left( \frac{d^2R^2}{\epsilon^2} \right)$ on $\cn_{R}(W^0_1,\ldots,W^0_t)$, which is still not sublinear. We note that  $\|W\|_{2,1} = \Theta(\sqrt{d})$ even if $W$ is a random matrix with variance that is calibrated so that $\|W\|_{F} = \Theta(1)$ (namely, each entry has variance $\frac{1}{d^2}$).
In this paper we finally prove a sub-linear sample complexity bound of $\tilde{O}\left( \frac{dR^2}{\epsilon^2} \right)$ on $\cn_{R}(W^0_1,\ldots,W^0_t)$.

To prove our results, we develop a new technique for bounding the sample complexity of function classes. Roughly speaking, we define a notion of approximate description of a function, and count how many bits are required in order to give an approximate description for the functions in the class under study. We then show that this number, called the {\em approximate description length (ADL)}, gives an upper bound on the sample complexity. 
The advantage of our method over existing techniques is that it behaves nicely with compositions. That is, once we know the approximate description length of a class $\ch$ of functions from $\cx$ to $\reals^d$, we can also bound the ADL of $\rho\circ\ch$, as well as $\cl\circ\ch$, where $\cl$ is a class of linear functions. This allows us to utilize the compositional structure of neural networks.

\section{Preliminaries}

\subsection{Notation}
We denote by $\med(x_1,\ldots,x_k)$ the median of $x_1,\ldots,x_k\in\reals$. 
For vectors $\x^1,\ldots,\x^k\in\reals^d$ we denote $\med(\x^1,\ldots,\x^k) = \left(\med(x_1^1,\ldots,x_1^k),\ldots,\med(x_d^1,\ldots,x_d^k)\right)$.
We use $\log$ to denote $\log_2$, and $\ln$ to denote $\log_e$
An expression of the form $f(n)\lesssim g(n)$ means that there is a universal constant $c>0$ for which $f(n)\le c g(n)$.
For a finite set $A$ and $f:A\to\reals$ we let $\E_{x\in A}f = \E_{x\in A}f(a) = \frac{1}{|A|}\sum_{a\in A}f(a)$.
We denote $\ball_M^d = \{\x\in\reals^d : \|\x\| \le M\}$ and $\ball^d = \ball_1^d$. Likewise, we denote  $\sphere^{d-1} = \{\x\in\reals^d : \|\x\| =1\}$.
We denote the Frobenius norm of a matrix $W$ by $\|W\|_F^2  = \inner{W,W} =  \sum_{ij}W_{ij}^2$, while the spectral norm is denoted by $\|W\| = \max_{\|\x\| = 1}\|W\x\|$.
For a pair of vectors $\x,\y\in\reals^d$ we denote by $\x\y\in\reals^d$ their point-wise product $\x\y = (x_1y_1,\ldots,x_dy_d)$. For a scalar $a$ we denote by $\vec{a}\in\reals^d$ the vector whose all coordinates are $a$. 
Let $V$ be a finite dimensional inner product space.
A {\em standard Gaussian} in $V$ is a centered Gaussian vector $X\in V$ such that $\var(\inner{\bu,X})=\|\bu\|^2$ for any $\bu\in V$. For a subspace $U\subset V$ we denote by $P_U$ the orthogonal projection on $U$.

\subsection{Uniform Convergence and Covering Numbers}
Fix an instance space $\cx$, a label space $\cy$ and a loss $\ell:\reals^d\times\cy\to [0,\infty)$. We say that $\ell$ is Lipschitz / Bounded / etc. if for any $y\in\cy$, $\ell(\cdot,y)$ is. Fix a class $\ch$ from $\cx$ to $\reals^d$.
For a distribution $\cd$ and a sample $S\in\left(\cx\times\cy\right)^m$ we define the {\em representativeness} of $S$ as
\[
\rep_\cd(S,\ch) = \sup_{h\in\ch}\ell_{\cd}(h) - \ell_S(h)\text{ where }\ell_\cd(h)=\E_{(x,y)\sim\cd}\ell(h(x),y)\text{ and }\ell_S(h)=\frac{1}{m}\sum_{i=1}^m\ell(h(x_i),y_i)
\]
We note that if $\rep_\cd(S,\ch)\le \epsilon$ then any algorithm that is guaranteed to return a function $\hat h\in\ch$ will enjoy a generalization bound $\ell_\cd(h) \le \ell_S(h) + \epsilon$. In particular, the ERM algorithm will return a function whose loss is optimal, up to an additive factor of $\epsilon$.
We will focus on  bounds on $\rep_\cd(S,\ch)$ when $S\sim\cd^m$. To this end, we will rely on the connection between representativeness and the {\em covering numbers} of $\ch$.

\begin{definition}
Fix a class $\ch$ of functions from $\cx$ to $\reals^d$, an integer $m$, $\epsilon>0$ and $1\le p\le \infty$.
We define $N_p(\ch,m,\epsilon)$ as the minimal integer for which the following holds. For every $A\subset \cx$ of size $\le m$ there exists $\tilde \ch \subset \left(\reals^d\right)^\cx$ such that $\left|\tilde \ch\right|\le N_p(\ch,m,\epsilon)$ and for any $h\in\ch$ there is $\tilde h\in\tilde\ch$ with $\left(\E_{x\in A} \left\|h(x) - \tilde h(x)\right\|_\infty^p\right)^\frac{1}{p}\le \epsilon$. For $p=2$, we denote $N(\ch,m,\epsilon) = N_2(\ch,m,\epsilon)$
\end{definition}

\begin{lemma}\cite{shalev2014understanding}
Let $\ell:\reals^d\times\cy\to \reals$ be $B$-bounded. Then
\[
\E_{S\sim\cd^m} \rep_\cd(S,\ch) \le B2^{-M+1} + \frac{12B}{\sqrt{m}}\sum_{k=1}^M 2^{-k}\sqrt{\ln\left(  N(\ell\circ \ch,m,B2^{-k}) \right)}
\]
Furthermore, with probability at least $1-\delta$,
\[
 \rep_\cd(S,\ch) \le B2^{-M+1} + \frac{12B}{\sqrt{m}}\sum_{k=1}^M 2^{-k}\sqrt{\ln\left(  N(\ell\circ \ch,m,B2^{-k}) \right)} + B\sqrt{\frac{2\ln\left(2/\delta\right)}{m}}
\]
\end{lemma}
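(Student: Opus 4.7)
The plan is to combine a standard Dudley-type chaining argument with McDiarmid's bounded differences inequality. Throughout, write $g(x,y)=\ell(h(x),y)$, so $|g|\le B$, and view $\ell\circ\ch$ as a class of real-valued functions on $\cx\times\cy$ (so the $\|\cdot\|_\infty$ in the covering definition collapses to absolute value).

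First I would apply the usual symmetrization step: introducing a ghost sample $S'\sim\cd^m$ and Rademacher signs $\sigma_1,\ldots,\sigma_m$,
\[
\E_S\rep_\cd(S,\ch)\;\le\;2\,\E_S\E_\sigma\sup_{h\in\ch}\frac{1}{m}\sum_{i=1}^m\sigma_i\,\ell(h(x_i),y_i).
\]
Next comes chaining. Set $\alpha_k=B2^{-k}$ for $k=0,1,\ldots,M$; note that $\{0\}$ is already an $\alpha_0$-cover of $\ell\circ\ch$ since $|g|\le B$. For $k\ge 1$, fix a sample-dependent $L_2(P_S)$-cover $C_k$ of $\ell\circ\ch$ of size $N_k:=N(\ell\circ\ch,m,\alpha_k)$, and for each $g$ let $g_k\in C_k$ be a closest element, so $\|g-g_k\|_{L_2(P_S)}\le \alpha_k$. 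Writing the telescoping identity
\[
g \;=\; g_0 + \sum_{k=1}^M(g_k - g_{k-1}) + (g-g_M),
\]
taking $(1/m)\sum_i\sigma_i$ of both sides, and applying Cauchy--Schwarz to the tail term yields $\bigl|\tfrac{1}{m}\sum_i\sigma_i(g-g_M)(x_i,y_i)\bigr|\le\alpha_M$. For each $k\ge 1$, the increments $g_k-g_{k-1}$ range over at most $N_kN_{k-1}\le N_k^2$ distinct functions and satisfy $\|g_k-g_{k-1}\|_{L_2(P_S)}\le\alpha_k+\alpha_{k-1}=3\alpha_k$; Massart's finite-class lemma then bounds
\[
\E_\sigma\sup_g\frac{1}{m}\sum_i\sigma_i(g_k-g_{k-1})(x_i,y_i)\;\le\;\frac{3\alpha_k\sqrt{2\ln N_k^2}}{\sqrt{m}}\;=\;\frac{6\alpha_k\sqrt{\ln N_k}}{\sqrt{m}}.
\]
Summing the chain and multiplying by the factor of $2$ from symmetrization yields exactly the in-expectation bound $B2^{-M+1}+(12B/\sqrt{m})\sum_{k=1}^M 2^{-k}\sqrt{\ln N(\ell\circ\ch,m,B2^{-k})}$.

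For the high-probability version, I would observe that as a function of $S\in(\cx\times\cy)^m$, $\rep_\cd(S,\ch)$ has bounded differences of magnitude at most $B/m$: replacing a single $(x_i,y_i)$ changes each $\ell_S(h)$ by at most $B/m$, hence the same holds for the supremum. McDiarmid's inequality then gives, with probability at least $1-\delta$,
\[
\rep_\cd(S,\ch)\;\le\;\E_S\rep_\cd(S,\ch)+B\sqrt{\frac{2\ln(2/\delta)}{m}},
\]
and combining with the in-expectation bound finishes the proof. There is no genuine obstacle: the only small subtlety is using the $L_2$ metric (the $p=2$ case of $N_p$) rather than $L_\infty$, which is what makes Cauchy--Schwarz and Massart's lemma fit together cleanly and reproduces the precise constants $12$ and $2$ in the statement.
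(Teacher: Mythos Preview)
Your proof is correct and is precisely the standard argument: symmetrization, Dudley chaining at geometric scales $\alpha_k=B2^{-k}$, Massart's finite-class lemma on the increments, Cauchy--Schwarz on the tail, and McDiarmid for the high-probability upgrade. The constants line up as you observe: the triangle inequality gives $\|g_k-g_{k-1}\|_{L_2(P_S)}\le 3\alpha_k$, Massart contributes $\sqrt{2\ln N_k^2}=2\sqrt{\ln N_k}$, and the symmetrization factor~$2$ yields the~$12$.

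There is nothing to compare against in the paper itself: the lemma is stated with a citation to \cite{shalev2014understanding} and is not proved in the paper. Your argument is essentially the one given in that reference (Chapter~27), so in that sense you have reconstructed the intended proof. One very minor remark: your bounded-differences estimate actually gives the slightly sharper deviation $B\sqrt{\ln(1/\delta)/(2m)}$ (if $\ell\in[0,B]$), which is dominated by the $B\sqrt{2\ln(2/\delta)/m}$ appearing in the statement, so the claimed inequality holds a fortiori.
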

We conclude with a special case of the above lemma, which will be useful in this paper.
\begin{lemma}\label{lem:cover_to_gen}
Let $\ell:\reals^d\times\cy\to \reals$ be $L$-Lipschitz w.r.t. $\|\cdot\|_\infty$ and $B$-bounded. Assume that for any $0<\epsilon\le 1$, $\log\left(N(\ch,m,\epsilon)\right) \le \frac{n}{\epsilon^2} $.
Then
\[
\E_{S\sim\cd^m}\rep_\cd(S,\ch) \lesssim   \frac{(L+B)\sqrt{n} }{\sqrt{m}} \log(m)
\]
Furthermore, with probability at least $1-\delta$,
\[
\rep_\cd(S,\ch) \lesssim   \frac{(L+B)\sqrt{n} }{\sqrt{m}} \log(m) + B\sqrt{\frac{2\ln\left(2/\delta\right)}{m}}
\]

\end{lemma}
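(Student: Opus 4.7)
The plan is to reduce to the preceding lemma by translating the covering-number hypothesis on $\ch$ into one on $\ell\circ\ch$, then performing a standard dyadic (chaining-style) summation with the radius threshold chosen so that the truncation error is negligible.

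First I would observe the Lipschitz transfer of covers. Since $\ell$ is $L$-Lipschitz with respect to $\|\cdot\|_\infty$, for any $h,\tilde h:\cx\to\reals^d$, any $y\in\cy$, and any $x\in\cx$ we have
\[
\bigl|\ell(h(x),y)-\ell(\tilde h(x),y)\bigr|\le L\,\|h(x)-\tilde h(x)\|_\infty.
\]
Averaging over $x\in A$ and taking square roots, every $\epsilon$-cover of $\ch$ in the $N(\ch,m,\cdot)$ sense yields an $L\epsilon$-cover of $\ell\circ\ch$. Hence $N(\ell\circ\ch,m,L\epsilon)\le N(\ch,m,\epsilon)$, and consequently
\[
\log N(\ell\circ\ch,m,\epsilon')\;\le\;\log N(\ch,m,\epsilon'/L)\;\le\;\begin{cases}\;nL^2/\epsilon'^2 & \text{if } \epsilon'\le L,\\[2pt] \;n & \text{if } \epsilon'>L,\end{cases}
\]
where in the second line I used the hypothesis when $\epsilon'/L\le 1$ and monotonicity of $N(\ch,m,\cdot)$ together with the hypothesis at $\epsilon=1$ otherwise (ignoring the harmless $\log_2$ vs.\ $\ln$ constant).

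Next I would apply the preceding lemma to $\ell\circ\ch$ with the choice $M=\lceil\log m\rceil$ and split the sum at $k^*=\lceil\log(B/L)\rceil$. For $k\le k^*$ the relevant radius $B2^{-k}$ exceeds $L$, so I use the saturated bound $\sqrt{\ln N(\ell\circ\ch,m,B2^{-k})}\lesssim \sqrt{n}$, giving the geometric contribution
\[
\sum_{k\le k^*}2^{-k}\sqrt{\ln N(\ell\circ\ch,m,B2^{-k})}\;\lesssim\;\sqrt{n}\sum_{k\ge 1}2^{-k}\;\lesssim\;\sqrt{n}.
\]
For $k>k^*$ I use the informative bound $\sqrt{\ln N(\ell\circ\ch,m,B2^{-k})}\lesssim \sqrt{n}\,L\cdot 2^k/B$, so each term contributes $2^{-k}\cdot\sqrt{n}L\cdot 2^k/B=\sqrt{n}L/B$, and summing over at most $M$ such indices yields
\[
\sum_{k^*<k\le M}2^{-k}\sqrt{\ln N(\ell\circ\ch,m,B2^{-k})}\;\lesssim\;M\cdot\frac{\sqrt{n}L}{B}\;\lesssim\;\frac{\sqrt{n}L\log m}{B}.
\]

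Finally I would combine: the $B2^{-M+1}$ truncation term is $\le 2B/m\le B/\sqrt m$, and multiplying the two partial sums by $12B/\sqrt m$ produces $\frac{B\sqrt n}{\sqrt m}+\frac{L\sqrt n \log m}{\sqrt m}\lesssim \frac{(L+B)\sqrt n\,\log m}{\sqrt m}$, which is the desired expectation bound. The high-probability statement follows immediately by adding the $B\sqrt{2\ln(2/\delta)/m}$ deviation term supplied by the preceding lemma. The only point requiring any care (and the main potential obstacle) is treating the regime $B2^{-k}>L$ correctly: there the hypothesis on $\ch$ is not directly usable at scale $B2^{-k}/L>1$, and one must fall back to the scale-$1$ bound via monotonicity — otherwise the factor $L$ in the denominator would produce a spurious large summand.
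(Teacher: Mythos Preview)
Your argument is correct and essentially the same as the paper's: both reduce to the preceding chaining lemma via the Lipschitz transfer $N(\ell\circ\ch,m,\epsilon')\le N(\ch,m,\epsilon'/L)$ and then control the dyadic sum. The only cosmetic differences are that the paper uses the single combined bound $\ln N(\ell\circ\ch,m,B2^{-k})\le \frac{nL^22^{2k}}{B^2}+n$ (which absorbs your two regimes at once, since $\sqrt{a+b}\le\sqrt a+\sqrt b$), and takes $M=\log\sqrt{m/n}$ rather than your $M=\lceil\log m\rceil$; both choices yield the stated $\lesssim$ bound.
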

\begin{proof}
Denote 
\[
A = B2^{-M+1} + \frac{12B}{\sqrt{m}}\sum_{k=1}^M 2^{-k}\sqrt{\ln\left(  N(\ell\circ \ch,m,B2^{-k}) \right)}
\]
We will show that $A \lesssim \frac{(L+B)\sqrt{n} }{\sqrt{m}} \log(m)$.
We have, if $\frac{B2^{-k}}{L} \le 1$,
\[
\ln\left(N(\ell\circ \ch,m,B2^{-k})\right) \le \ln\left(N\left(\ch,m,\frac{B}{L}2^{-k}\right)\right) \le \frac{n L^22^{2k}}{B^2} + n
\]
Hence,
\[
A \le B2^{-M+1} + \frac{12B}{\sqrt{m}}\sum_{k=1}^M\frac{\sqrt{n} L}{B} 
+ \frac{12B}{\sqrt{m}}\sum_{k=1}^M2^{-k}\sqrt{n}
\le B2^{-M+1} + \frac{12(LM+B)\sqrt{n}}{\sqrt{m}} 
\]
Choosing $M=\log\left(\sqrt{\frac{m}{n}}\right)$ we get,
\[
A \le   \frac{12(L\log\left(\sqrt{\frac{m}{n}}\right)+B)\sqrt{n} + B\sqrt{n}}{\sqrt{m}} 
\]
\end{proof}

\subsection{Basic Inequalities}
\begin{lemma}\label{lem:median}
Let $X_1,\ldots,X_n$ be independent r.v. with that that are $\sigma$-estimators to $\mu$. Then
\[
\Pr\left(|\med(X_1,\ldots,X_n)-\mu|>k\sigma\right) <  \left(\frac{2}{k}\right)^n
\]
\end{lemma}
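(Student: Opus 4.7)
The plan is to reduce the claim to a binomial tail bound by exploiting independence. First I would unpack the definition: since each $X_i$ is a $\sigma$-estimator to $\mu$, the defining Chebyshev-type tail bound gives $\Pr(|X_i - \mu| > k\sigma) \le 1/k^2$ for every $k \ge 1$.

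The key combinatorial observation is that the event $\{|\med(X_1,\ldots,X_n) - \mu| > k\sigma\}$ is contained in the event that at least $\lceil n/2 \rceil$ of the $X_i$'s lie outside the interval $[\mu - k\sigma,\, \mu + k\sigma]$. Indeed, if $\med(X_1,\ldots,X_n) > \mu + k\sigma$ then, by the defining property of the median, at least $\lceil n/2 \rceil$ of the $X_i$'s exceed $\mu + k\sigma$, and in particular satisfy $|X_i - \mu| > k\sigma$; symmetrically if the median falls below $\mu - k\sigma$. Crucially, this step collapses both tails into one single event rather than applying a union bound over two events, which is what avoids an extra factor of $2$ in the final estimate.

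Setting $Z_i = \mathbf{1}\{|X_i - \mu| > k\sigma\}$, the $Z_i$ are independent Bernoulli variables with success probability $p \le 1/k^2$. A union bound over the $\binom{n}{\lceil n/2 \rceil}$ subsets of indices of size $\lceil n/2 \rceil$ then gives
\[
\Pr\Bigl(\textstyle\sum_i Z_i \ge \lceil n/2 \rceil\Bigr) \;\le\; \binom{n}{\lceil n/2 \rceil}\, p^{\lceil n/2 \rceil} \;\le\; 2^n \cdot k^{-2\lceil n/2 \rceil} \;\le\; \left(\frac{2}{k}\right)^n,
\]
using $\binom{n}{\lceil n/2 \rceil} \le 2^n$ and $2\lceil n/2 \rceil \ge n$.

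There is no serious obstacle here; this is the classical median-of-means amplification. The only minor care required is to be consistent about the convention for the median (so that $\med > \mu + k\sigma$ really does force at least $\lceil n/2 \rceil$ coordinates on the correct side), and the parity of $n$, both of which are absorbed painlessly into the $\lceil \cdot \rceil$ and the crude binomial bound. The strict inequality $<$ in the statement is easily justified since $\binom{n}{\lceil n/2\rceil} < 2^n$ for $n \ge 1$.
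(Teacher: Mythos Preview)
Your proposal is correct and follows essentially the same argument as the paper's proof: Chebyshev gives $\Pr(|X_i-\mu|>k\sigma)\le 1/k^2$, the median deviating forces at least $\lceil n/2\rceil$ coordinates to deviate, and a union bound over subsets of size $\lceil n/2\rceil$ yields $\binom{n}{\lceil n/2\rceil}(1/k^2)^{\lceil n/2\rceil} < 2^n k^{-2\lceil n/2\rceil}\le (2/k)^n$. Your write-up is in fact more careful than the paper's (you spell out the median-to-majority containment and the source of the strict inequality), but the route is identical.
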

\begin{proof}
We have that $\Pr(|X_i-\mu|>k\sigma)\le  \frac{1}{k^2}$. It follows that the probability that $\ge \frac{n}{2}$ of $X_1,\ldots,X_n$ fall outside of the segment $\left(\mu - k\sigma,\mu + k\sigma\right)$ is bounded by 
\[
\binom{n}{\left\lceil n/2\right\rceil}\left(\frac{1}{k^2}\right)^{\left\lceil n/2\right\rceil} < 2^n\left(\frac{1}{k^2}\right)^{\left\lceil n/2\right\rceil} \le \left(\frac{2}{k}\right)^n
\]
\end{proof}
\section{Simplified Approximate Description Length}\label{sec:adl_intro}
To give a soft introduction to our techniques, we first consider a simplified version of it.
We next define the {\em approximate description length} of a class $\ch$ of functions from $\cx$ to $\reals^d$, which quantifies the number of bits it takes  to  approximately describe a function from $\ch$. We will use the following notion of approximation

\begin{definition}
A random vector $X\in \reals^d$ is a $\sigma$-estimator to $\x\in\reals^d$ if
\[
\E X = \x\text{ and }\forall \bu\in\sphere^{d-1},\;\var(\inner{\bu,X}) = \E \inner{\bu, X- \x}^2 \le \sigma^2
\]
A random function $\hat f:\cx\to \reals^d$ is a $\sigma$-estimator to $f:\cx\to \reals^d$ if for any $x\in\cx$, $\hat f(x)$ is a $\sigma$-estimator to $f(x)$.
\end{definition}

A {\em $(\sigma, n)$-compressor} $\cc$ for a class $\ch$ takes as input a function $h\in \ch$, and outputs a (random) function $\cc h$ such that (i) $\cc h$ is a $\sigma$-estimator of $h$ and (ii) it takes $n$ bits to describe $\cc h$. Formally,

\begin{definition}
A {\em $(\sigma, n)$-compressor} for $\ch$ is a triplet $(\cc,\Omega,\mu)$ where $\mu$ is a probability measure on  $\Omega$, and $\cc$ is a function $\cc:\Omega\times \ch \to \left(\reals^d\right)^\cx$ such that 
\begin{enumerate}
\item
For any $h\in\ch$ and $x\in\cx$, $(\cc_\omega h)(x),\;\omega\sim\mu$ is a $\sigma$-estimator of $h(x)$.
\item
There are functions $E:\Omega\times \ch \to \{\pm 1\}^n$ and $D:\{\pm 1\}^n\to \left(\reals^d\right)^\cx$ for which $\cc = D\circ E$
\end{enumerate}
\end{definition}

\begin{definition}
We say that a class $\ch$ of functions from $\cx$ to $\reals^d$ has {\em approximate description length} $n$ if there exists a $(1,n)$-compressor for $\ch$ 
\end{definition}

It is not hard to see that if $(\cc,\Omega,\mu)$ is a $(\sigma, n)$-compressor for $\ch$, then 
\[
(\cc_{\omega_1,\ldots,\omega_k}h)(x):=\frac{\sum_{i=1}^k (\cc_{\omega_i}h)(x)  }{k}
\]
is a $\left(\frac{\sigma}{\sqrt{k}}  , kn\right)$-compressor for $\ch$. Hence, if the approximate description length of $\ch$ is $n$, then for any $1\ge \epsilon>0$ there exists an $\left(\epsilon, n\lceil \epsilon^{-2} \rceil\right)$-compressor for $\ch$.

We next connect the approximate description length, to covering numbers and representativeness. We separate it into two lemmas, one for $d=1$ and one for general $d$, as for $d=1$ we can prove a slightly stronger bound.

\begin{lemma}\label{lem:adl_to_cov_1d_intro}
Fix a class $\ch$ of functions from $\cx$ to $\reals$  with approximate description length $n$. Then,
\[
\log\left(N(\ch,m,\epsilon)\right) \le {n\left\lceil \epsilon^{-2} \right\rceil}
\]
Hence, if $\ell:\reals^d\times\cy\to \reals$ is $L$-Lipschitz and $B$-bounded, then for any distribution $\cd$ on $\cx\times\cy$
\[
\E_{S\sim\cd^m}\rep_\cd(S,\ch) \lesssim   \frac{(L+B)\sqrt{n} }{\sqrt{m}} \log(m)
\]
Furthermore, with probability at least $1-\delta$,
\[
\rep_\cd(S,\ch) \lesssim   \frac{(L+B)\sqrt{n} }{\sqrt{m}} \log(m) + B\sqrt{\frac{2\ln\left(2/\delta\right)}{m}}
\]
\end{lemma}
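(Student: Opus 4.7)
The plan is to pass from a $(1,n)$-compressor to an $(\epsilon,n\lceil\epsilon^{-2}\rceil)$-compressor by the averaging observation already recorded in the paper (averaging $k=\lceil\epsilon^{-2}\rceil$ independent copies divides the variance by $k$ and multiplies the bit-length by $k$). This produces, for every $h\in\ch$, a random function $\cc_\omega h$ whose \emph{range} (over all choices of $\omega$ and all $h$) lies in the image of the decoder $D:\{\pm 1\}^{n\lceil\epsilon^{-2}\rceil}\to (\reals)^\cx$, a set of size at most $2^{n\lceil\epsilon^{-2}\rceil}$. This set will be our candidate cover $\tilde\ch$.

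Next I would verify the covering bound. Fix $A\subset\cx$ with $|A|\le m$ and $h\in\ch$. In the $d=1$ case, the $\sigma$-estimator condition at the point $x$ reduces to $\E_\omega|(\cc_\omega h)(x)-h(x)|^2\le\epsilon^2$ (there is only one unit vector up to sign). Averaging over $x\in A$ and swapping expectations by Fubini gives
\[
\E_{\omega}\,\E_{x\in A}|(\cc_\omega h)(x)-h(x)|^2 \;=\; \E_{x\in A}\,\E_{\omega}|(\cc_\omega h)(x)-h(x)|^2 \;\le\;\epsilon^2,
\]
so there exists a particular $\omega^*$ with $\E_{x\in A}|(\cc_{\omega^*} h)(x)-h(x)|^2\le\epsilon^2$. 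The function $\cc_{\omega^*}h$ lies in $\tilde\ch$, which demonstrates $N(\ch,m,\epsilon)\le 2^{n\lceil\epsilon^{-2}\rceil}$, i.e.\ $\log N(\ch,m,\epsilon)\le n\lceil\epsilon^{-2}\rceil$. Note that for $d=1$ the $\ell_\infty$ norm appearing in the definition of $N$ coincides with the absolute value, so we land exactly on the covering-number definition.

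For the representativeness statement I would simply feed this covering bound into Lemma~\ref{lem:cover_to_gen}. For $0<\epsilon\le 1$ we have $\lceil\epsilon^{-2}\rceil\le\epsilon^{-2}+1\le 2\epsilon^{-2}$, hence $\log N(\ch,m,\epsilon)\le 2n/\epsilon^2$, which is of the form $n'/\epsilon^2$ with $n'=2n$. Applying Lemma~\ref{lem:cover_to_gen} with this $n'$ and absorbing the constant factor yields both the expectation bound $\E_{S\sim\cd^m}\rep_\cd(S,\ch)\lesssim\frac{(L+B)\sqrt{n}}{\sqrt m}\log(m)$ and the high-probability version with the extra $B\sqrt{2\ln(2/\delta)/m}$ term.

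There is no real obstacle here; the only subtle point is the Fubini step (which works cleanly because in one dimension the variance condition directly controls the second moment of the scalar error), together with the observation that averaging independent copies of a compressor preserves the compressor structure — the decoder for the average is obtained by concatenating the $k$ bit-strings and averaging the $k$ decoded functions. The higher-$d$ case, which would need a control on a worst-direction $\ell_\infty$ norm rather than a single-coordinate variance, is what forces a separate (and slightly weaker) statement later in the paper.
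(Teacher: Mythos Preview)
Your proposal is correct and follows essentially the same route as the paper: construct an $(\epsilon,n\lceil\epsilon^{-2}\rceil)$-compressor by averaging, take its range as the cover $\tilde\ch$ of size $\le 2^{n\lceil\epsilon^{-2}\rceil}$, swap the two expectations via Fubini to extract a deterministic $\omega^*$ (this is exactly the displayed line in the paper's proof), and then invoke Lemma~\ref{lem:cover_to_gen} for the representativeness conclusions. Your added remarks about why the $d=1$ case is clean and how the averaged compressor's decoder is built are accurate elaborations of steps the paper leaves implicit.
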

\begin{proof}
Fix a set $A\subset\cx$.
Let $(\cc,\Omega,\mu)$ be a $\left(n\left\lceil \epsilon^{-2} \right\rceil,\epsilon\right)$-compressor for $\ch$.
Let $\tilde\ch$ be the range of $\cc$. Note that $\left|\tilde\ch\right|\le 2^{n\left\lceil \epsilon^{-2} \right\rceil}$. Fix $h\in\ch$. It is enough to show that there is $\tilde h\in \tilde\ch$ with  $\E_{x\in A} \left( h(x) - \tilde h(x)\right)^2\le \epsilon^2$.
Indeed, 
\[
\E_{\omega\sim\mu} \E_{x\in A} \left(h(x) - (\cc_\omega h)(x)\right)^2 =  \E_{x\in A} \E_{\omega\sim\mu}\left(h(x) - (\cc_\omega h)(x)\right)^2 \le \epsilon^2 ~.
\]
Hence, there exists $\tilde h \in \tilde\ch$ for which $\E_{x\in A} \left(h(x) - \tilde h(x)\right)^2\le \epsilon^2$

\end{proof}

\begin{lemma}\label{lem:adl_to_cov_intro}
Fix a class $\ch$ of functions from $\cx$ to $\reals^d$ with approximate description length $n$. Then,
\[
\log\left(N_\infty(\ch,m,\epsilon)\right) \le \log\left(N(\ch,m,\epsilon)\right) \le {n\left\lceil 16\epsilon^{-2} \right\rceil}\lceil\log(dm)\rceil
\]
Hence, if $\ell:\reals^d\times\cy\to \reals$ is $L$-Lipschitz w.r.t. $\|\cdot\|_\infty$ and $B$-bounded, then for any distribution $\cd$ on $\cx\times\cy$
\[
\E_{S\sim\cd^m}\rep_\cd(S,\ch) \lesssim   \frac{(L+B)\sqrt{n\log(dm)} }{\sqrt{m}} \log(m)
\]
Furthermore, with probability at least $1-\delta$,
\[
\rep_\cd(S,\ch) \lesssim   \frac{(L+B)\sqrt{n\log(dm)} }{\sqrt{m}} \log(m) + B\sqrt{\frac{2\ln\left(2/\delta\right)}{m}}
\]
\end{lemma}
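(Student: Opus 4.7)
The plan is to construct an explicit sup-norm $\epsilon$-cover by combining averaging (to reduce the compressor's variance) with a coordinate-wise median-of-means (to upgrade concentration to $\ell_\infty$). Starting from the given $(1,n)$-compressor for $\ch$, I would first invoke the averaging observation preceding Lemma \ref{lem:adl_to_cov_1d_intro}: averaging $k_1:=\lceil 16\epsilon^{-2}\rceil$ independent copies yields an $(\epsilon/4,\,nk_1)$-compressor $\cc'$, whose output at each $x$ is an $(\epsilon/4)$-estimator of $h(x)$ describable in $nk_1$ bits. For each $h\in\ch$ I would then draw $K:=\lceil\log(dm)\rceil$ independent realizations $\cc'_{\omega_1}h,\ldots,\cc'_{\omega_K}h$ and set $\widehat h:=\med(\cc'_{\omega_1}h,\ldots,\cc'_{\omega_K}h)$, the coordinate-wise median. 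Each candidate $\widehat h$ is determined by $K$ strings of $nk_1$ bits, giving at most $2^{nk_1 K}=2^{n\lceil 16\epsilon^{-2}\rceil\lceil\log(dm)\rceil}$ candidates in total, which matches the claimed cover size.

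Next I would show that for any fixed $A\subset\cx$ with $|A|\le m$ and any $h\in\ch$, with positive probability the random function $\widehat h$ is $\epsilon$-close to $h$ in sup-$\ell_\infty$ on $A$. Fixing $x\in A$ and coordinate $j\in[d]$, the scalars $[\cc'_{\omega_i}h(x)]_j$ are independent $(\epsilon/4)$-estimators of $h(x)_j$, so Lemma \ref{lem:median} applied with $k=4$ yields
\[
\Pr\bigl(\bigl|[\widehat h(x)]_j-h(x)_j\bigr|>\epsilon\bigr)<(2/4)^K=2^{-K}.
\]
A union bound over the $dm$ pairs $(x,j)$ then gives $\Pr(\max_{x\in A}\|\widehat h(x)-h(x)\|_\infty>\epsilon)<dm\cdot 2^{-K}$, and with the slightly sharper form $\binom{K}{\lceil K/2\rceil}\le 2^K/\sqrt{K+1}$ inside the proof of Lemma \ref{lem:median} this is strictly less than one at $K=\lceil\log(dm)\rceil$. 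Consequently some realization of the $\omega_i$'s produces a cover element with $\max_{x\in A}\|\widehat h(x)-h(x)\|_\infty\le\epsilon$; this establishes the $N_\infty$ bound, and since $\bigl(\E_{x\in A}\|\cdot\|_\infty^2\bigr)^{1/2}\le\max_{x\in A}\|\cdot\|_\infty$, the same family is also a cover in the $L^2(A,\|\cdot\|_\infty)$ sense defining $N(\ch,m,\epsilon)$.

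The two representativeness bounds then follow immediately by feeding the covering-number estimate into Lemma \ref{lem:cover_to_gen}, where the ADL parameter $n$ is effectively replaced by $n\lceil\log(dm)\rceil$; this is what produces the extra $\sqrt{\log(dm)}$ factor visible in the conclusion. The main obstacle is making the median concentration tight enough that only $K=\lceil\log(dm)\rceil$ medians suffice (rather than $\lceil\log(dm)\rceil+1$), which forces us to go a hair beyond the bare $\binom{K}{\lceil K/2\rceil}<2^K$ estimate used in Lemma \ref{lem:median}; once this is done, the proof is a clean composition of averaging, median-of-means, and a union bound over the $dm$ scalar evaluations.
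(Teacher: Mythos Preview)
Your proposal is correct and follows the same approach as the paper: reduce the variance to $\epsilon/4$ by averaging $\lceil 16\epsilon^{-2}\rceil$ compressor outputs, then take a coordinate-wise median of $K=\lceil\log(dm)\rceil$ independent copies, and apply a union bound over the $dm$ scalar evaluations; the representativeness bounds then follow from Lemma~\ref{lem:cover_to_gen}.

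One small remark: you do not need the sharper central-binomial estimate $\binom{K}{\lceil K/2\rceil}\le 2^K/\sqrt{K+1}$. The strict inequality in Lemma~\ref{lem:median} already yields $\Pr\bigl(|[\widehat h(x)]_j-h(x)_j|>\epsilon\bigr)<2^{-K}$ for each of the $dm$ pairs, and summing finitely many strict inequalities gives $\Pr(\text{failure})<dm\cdot 2^{-K}\le 1$, which is exactly what the paper uses. So the ``main obstacle'' you flagged is not actually an obstacle.
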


\begin{proof}
Denote $k = \lceil\log(dm)\rceil$.
Fix a set $A\subset\cx$.
Let $\cc$ be a $\left(n\left\lceil 16\epsilon^{-2} \right\rceil,\frac{\epsilon}{4}\right)$-compressor for $\ch$.
Define
\[
(\cc'_{\omega_1,\ldots,\omega_k}h)(x) = \med\left((\cc_{\omega_1}h)(x),\ldots,(\cc_{\omega_k}f)(x)  \right)
\]
Let $\tilde\ch$ be the range of $\cc'$. Note that $\left|\tilde\ch\right|\le 2^{kn\left\lceil 16\epsilon^{-2} \right\rceil}$. Fix $h\in\ch$. It is enough to show that there is $\tilde h\in \tilde\ch$ with  $\max_{x\in A} \left\|h(x) - \tilde h(x)\right\|_{\infty}\le \epsilon$.
By lemma \ref{lem:median} we have that
\[
\Pr_{\omega_1,\ldots,\omega_k\sim\mu}\left( \exists x\in A,\;\left|(\cc'_{\omega_1,\ldots,\omega_k}h)(x) - h(x)  \right| > \epsilon  \right) < dm 2^{-k} \le 1
\]
In particular, there exists $\tilde h \in \tilde\ch$ for which $\max_{x\in A} \left\|h(x) - \tilde h(x)\right\|_{\infty}\le \epsilon$

\end{proof}

\subsection{Linear Functions}
We next bound the approximate description length of linear functions with bounded Frobenius norm. 
\begin{theorem}\label{thm:linear_intro}
Let class $\cl_{d_1,d_2,M} =\left\{ \x\in\ball^{d_1}\mapsto W\x : W\text{ is }d_2\times d_1\text{ matrix with }\|W\|_F\le M \right\}$ has approximate description length
\[
n\le  \left\lceil \frac{1}{4} + 2M^2\right\rceil 2 \left\lceil \log\left(  2d_1d_2(M+1) \right)   \right\rceil
\]
Hence, if $\ell:\reals^{d_2}\times\cy\to \reals$ is $L$-Lipschitz w.r.t. $\|\cdot\|_\infty$ and $B$-bounded, then for any distribution $\cd$ on $\cx\times\cy$
\[
\E_{S\sim\cd^m}\rep_\cd(S,\cl_{d_1,d_2,M}) \lesssim   \frac{(L+B)\sqrt{M^2\log(d_1d_2M)\log(d_2m)} }{\sqrt{m}} \log(m)
\]
Furthermore, with probability at least $1-\delta$,
\[
\rep_\cd(S,\cl_{d_1,d_2,M}) \lesssim   \frac{(L+B)\sqrt{M^2\log(d_1d_2M)\log(d_2m)} }{\sqrt{m}} \log(m) + B\sqrt{\frac{2\ln\left(2/\delta\right)}{m}}
\]
\end{theorem}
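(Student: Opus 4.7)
The ``hence'' parts of the theorem follow mechanically from the ADL bound together with Lemma~\ref{lem:adl_to_cov_intro} applied with output dimension $d_2$: every $f\in\cl_{d_1,d_2,M}$ maps $\ball^{d_1}$ into $\ball_M^{d_2}$, so the resulting bound on $\log N(\cl_{d_1,d_2,M},m,\epsilon)$ feeds directly into the representativeness estimate. The heart of the proof is therefore to exhibit a $(1,n)$-compressor for $\cl_{d_1,d_2,M}$ with $n\le\lceil 1/4+2M^2\rceil\cdot 2\lceil\log(2d_1d_2(M+1))\rceil$. I build it by constructing a single-sample compressor through unbiased probabilistic rounding followed by $\ell_2$-importance sampling, and then averaging $k=\lceil 1/4+2M^2\rceil$ independent copies using the averaging trick noted in Section~\ref{sec:adl_intro}.

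\textbf{The single-sample compressor.} Fix a grid spacing $\alpha=\Theta(1/\sqrt{d_1d_2})$ small enough that $2d_1d_2\alpha^2\le 1/4$. Given $W$ with $\|W\|_F\le M$, first produce $\tilde W$ by independently rounding each $W_{ij}$ to one of its two neighbouring multiples of $\alpha$, using the unique probability that makes $\E\tilde W_{ij}=W_{ij}$. Each $\tilde W_{ij}$ then lies in a grid of $O(M/\alpha)=O(M\sqrt{d_1d_2})$ values. Conditional on $\tilde W$, sample an index $(I,J)\in[d_2]\times[d_1]$ from $p_{ij}=\tilde W_{ij}^2/\|\tilde W\|_F^2$ and emit the triple $(I,J,\tilde W_{IJ})$ together with a coarse quantization of the scalar $\|\tilde W\|_F^2$. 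The decoder reconstructs $\cc W=(\|\tilde W\|_F^2/\tilde W_{IJ})E_{IJ}$, where $E_{IJ}$ is the standard matrix unit.

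\textbf{Variance and bits.} Unbiasedness is immediate: $\E\tilde W=W$ and, conditional on $\tilde W$, $\E[(\|\tilde W\|_F^2/\tilde W_{IJ})E_{IJ}]=\tilde W$. For any $\bu\in\sphere^{d_2-1}$ and $\x\in\ball^{d_1}$ a direct calculation yields $\E[\inner{\bu,\cc W\,\x}^2\mid\tilde W]=\|\tilde W\|_F^2\|\bu\|^2\|\x\|^2\le\|\tilde W\|_F^2$, and the deterministic parallelogram bound $\|\tilde W\|_F^2\le 2\|W\|_F^2+2\|\tilde W-W\|_F^2\le 2M^2+2d_1d_2\alpha^2\le 2M^2+1/4$ gives single-sample variance at most $2M^2+1/4$. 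Averaging $k=\lceil 1/4+2M^2\rceil$ independent copies brings this to $\le 1$. Each triple costs at most $\lceil\log(d_1d_2)\rceil+\lceil\log(O(M\sqrt{d_1d_2}))\rceil\le 2\lceil\log(2d_1d_2(M+1))\rceil$ bits, into which the $O(\log(d_1d_2M))$ bits needed for the quantization of $\|\tilde W\|_F^2$ comfortably fit. Multiplying by $k$ yields the claimed $n$.

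\textbf{Main obstacle.} Getting the constants right is what requires care: the factor $2$ in $2M^2$ comes from applying $(a+b)^2\le 2a^2+2b^2$ to $\|\tilde W\|_F\le\|W\|_F+\|\tilde W-W\|_F$, and the companion $1/4$ equals $2d_1d_2\alpha^2$ permitted by the chosen grid spacing. The other subtle point is making the compressor self-contained -- so the decoder reconstructs $\cc W$ from the $n$ bits alone, without being handed $\tilde W$ in full -- which is arranged by absorbing the encoding of $\|\tilde W\|_F^2$ into the per-sample bit budget rather than paying for it separately.
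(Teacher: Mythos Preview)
Your approach is close in spirit to the paper's but differs in one technical choice, and that difference is exactly where your bit accounting slips. The paper's ``random sketch'' (Lemma~\ref{lem:sketch}) samples an index with the \emph{mixed} probability $p_{ij}=\frac{W_{ij}^2}{2\|W\|_F^2}+\frac{1}{2d_1d_2}$ and then performs randomized rounding of $W_{ij}/p_{ij}$ to an \emph{integer}. The uniform floor $\frac{1}{2d_1d_2}$ guarantees $|W_{ij}/p_{ij}|\le 2d_1d_2|W_{ij}|\le 2d_1d_2M$, so the encoded value lies in a fixed integer range and the norm $\|W\|_F$ never has to be transmitted --- it is absorbed into the sampling distribution. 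Your construction instead grid-rounds first and uses pure $\ell_2$ importance sampling, which forces you to send $\|\tilde W\|_F^2$ alongside each copy.

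That extra payload is what breaks the exact constant you claim. Per copy you pay roughly $\lceil\log(d_1d_2)\rceil$ for the index, about $\log(M\sqrt{d_1d_2})$ for the grid value $\tilde W_{IJ}$, and about $\log(M^2d_1d_2)$ for $\|\tilde W\|_F^2$ (it is a multiple of $\alpha^2$ bounded by $(M+\alpha)^2$, hence takes $\approx M^2/\alpha^2=\Theta(M^2d_1d_2)$ values). The total is roughly $\tfrac{5}{2}\log(d_1d_2)+3\log M$, which exceeds $2\lceil\log(2d_1d_2(M+1))\rceil\approx 2\log(d_1d_2)+2\log M$; so the norm does \emph{not} ``comfortably fit'' into the per-sample slack. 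You still obtain an ADL of the same order, and the $\lesssim$ consequences for representativeness are unaffected, but you do not recover the displayed bound $n\le\lceil\tfrac14+2M^2\rceil\cdot 2\lceil\log(2d_1d_2(M+1))\rceil$. The paper's mixed-sampling-and-integer-rounding trick is precisely what eliminates this bookkeeping.
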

We remark that the above bounds on the representativeness coincides with standard bounds (\cite{shalev2014understanding} for instance), up to log factors. The advantage of these bound is that they remain valid for {\em any output dimension $d_2$}.

In order to prove theorem \ref{thm:linear_intro} we will use a randomized sketch of a matrix.
\begin{definition}
Let $\bw\in\reals^d$ be a vector. A {\em random sketch of $\bw$} is a random vector $\hat \bw$ that is samples as follows.  Choose $i$ w.p.
$p_{i} = \frac{w_{i}^2}{2\| \bw\|^2} + \frac{1}{2d}$. Then, w.p. $ \frac{w_{i}}{p_{i}} - \left\lfloor \frac{w_{i}}{p_{i}} \right\rfloor$ let $b=1$ and otherwise $b=0$.
Finally, let $\hat \bw = \left(\left\lfloor \frac{w_{i}}{p_{i}} \right\rfloor + b\right) \be_{i}$. A {\em random $k$-sketch of $\bw$} is an average of $k$-independent random sketches of $\bw$. A random sketch and a random $k$-sketch of a matrix is defined similarly, with the standard matrix basis instead of the standard vector basis.
\end{definition}
The following useful lemma shows that an sketch $\bw$ is a $\sqrt{ \frac{1}{4} + 2\|\bw\|^2}$-estimator of $\bw$.
\begin{lemma}\label{lem:sketch}
Let $\hat \bw$ be a random sketch of $\bw\in\reals^d$. Then,
\begin{enumerate}
\item
$\E\hat\bw = \bw$
\item
For any $\bu\in\sphere^{d-1}$, $\E\left(\inner{\bu,\hat \bw} -\inner{\bu, \bw} \right)^2 \le  \E\inner{\bu,\hat \bw}^2 \le \frac{1}{4} + 2\|\bw\|^2$
\end{enumerate}
\end{lemma}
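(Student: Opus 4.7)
The plan is to verify the two claims in sequence, conditioning on the random index $i$ chosen in the first step of the sketching procedure.

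For unbiasedness, I would first fix $i$ and show that $\E[\hat\bw \mid i] = \frac{w_i}{p_i}\be_i$. Writing $q_i = \frac{w_i}{p_i} - \lfloor \frac{w_i}{p_i}\rfloor \in [0,1)$, the bit $b$ is Bernoulli with mean $q_i$, so $\E[\lfloor w_i/p_i\rfloor + b \mid i] = \lfloor w_i/p_i\rfloor + q_i = w_i/p_i$. Summing against the marginal distribution of $i$ gives
\[
\E\hat\bw = \sum_i p_i \cdot \frac{w_i}{p_i}\be_i = \sum_i w_i\be_i = \bw,
\]
which is part (1). (This is the standard unbiased stochastic rounding trick; it is clean so nothing tricky happens here.)

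For part (2), the leftmost inequality $\E(\inner{\bu,\hat\bw}-\inner{\bu,\bw})^2\le \E\inner{\bu,\hat\bw}^2$ is just $\Var X\le \E X^2$ applied to $X=\inner{\bu,\hat\bw}$, using that $\E\inner{\bu,\hat\bw}=\inner{\bu,\bw}$ from part (1). The substantive step is the upper bound on $\E\inner{\bu,\hat\bw}^2$. Conditioning on $i$, I have $\inner{\bu,\hat\bw}=u_i\,(\lfloor w_i/p_i\rfloor + b)$, and the second moment of a $\{\lfloor w_i/p_i\rfloor,\lfloor w_i/p_i\rfloor+1\}$-valued Bernoulli with mean $w_i/p_i$ is
\[
\E\bigl[(\lfloor w_i/p_i\rfloor + b)^2 \mid i\bigr] = (w_i/p_i)^2 + q_i(1-q_i) \le (w_i/p_i)^2 + \tfrac14.
\]
Averaging over $i$ with weights $p_i$ gives
\[
\E\inner{\bu,\hat\bw}^2 \le \sum_i u_i^2\Bigl(\frac{w_i^2}{p_i} + \frac{p_i}{4}\Bigr).
\]

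The final step is to see that the choice of $p_i$ was engineered precisely to control the two pieces. Since $p_i \ge \frac{w_i^2}{2\|\bw\|^2}$, we get $w_i^2/p_i \le 2\|\bw\|^2$ and hence $\sum_i u_i^2\, w_i^2/p_i \le 2\|\bw\|^2\sum_i u_i^2 = 2\|\bw\|^2$. For the other piece, using $u_i^2\le \|\bu\|^2=1$ and $\sum_i p_i = 1$, we have $\sum_i u_i^2 p_i/4 \le 1/4$. Summing yields the claimed bound $\frac14 + 2\|\bw\|^2$. The only place any care is needed is the bookkeeping with the floor/fractional part in the conditional variance, which gives the $\tfrac14$; the rest is an application of the two defining properties of $p_i$ (one term matched to importance-sampling $w_i^2$, one uniform term that keeps $p_i$ bounded away from zero but here is used only through $\sum p_i = 1$).
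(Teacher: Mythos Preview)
Your proof is correct and follows essentially the same route as the paper: condition on the sampled index $i$, use the stochastic-rounding identity to get unbiasedness and the conditional second moment $(w_i/p_i)^2+q_i(1-q_i)\le (w_i/p_i)^2+\tfrac14$, then bound the two resulting sums via $p_i\ge w_i^2/(2\|\bw\|^2)$ and $\sum_i p_i=1$. Your use of the ``mean squared plus variance'' identity is slightly slicker than the paper's explicit algebraic expansion of $q_i(\lfloor w_i/p_i\rfloor+1)^2+(1-q_i)\lfloor w_i/p_i\rfloor^2$, but the argument is the same.
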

\begin{proof}
Items 1. is straight forward. To see item 2. note that
\begin{eqnarray*}
\E\left(\inner{\bu,\hat \bw} -\inner{\bu, \bw} \right)^2 &\le & \E\inner{\bu,\bw}^2
\\
 &=& \sum_{i}p_{i} \left(   \left[ \frac{w_{i}}{p_{i}} \right]\left(\left\lfloor \frac{w_{i}}{p_{i}} \right\rfloor + 1\right)^2  + 
\left(1 -  \left[ \frac{w_{i}}{p_{i}} \right]\right)
\left(\left\lfloor \frac{w_{i}}{p_{i}} \right\rfloor\right)^2   \right) u_i^2
\\
&=& \sum_{i}p_{i} \left(\left(\left\lfloor \frac{w_{i}}{p_{i}} \right\rfloor \right)^2+ 2\left[ \frac{w_{i}}{p_{i}} \right]\left\lfloor \frac{w_{i}}{p_{i}} \right\rfloor 
+ \left[ \frac{w_{i}}{p_{i}} \right]\right) u_i^2
\\
&=& \sum_{i}p_{i} \left(\left(\left\lfloor \frac{w_{i}}{p_{i}}\right\rfloor + \left[ \frac{w_{i}}{p_{i}} \right] \right)^2+ \left[ \frac{w_{i}}{p_{i}} \right] - \left[ \frac{w_{i}}{p_{i}} \right]^2\right) u_i^2
\\
&=& \sum_{i}p_{i} \left(\left(\frac{w_{i}}{p_{i}} \right)^2+ \left[ \frac{w_{i}}{p_{i}} \right]\left(1 - \left[ \frac{w_{i}}{p_{i}} \right]\right)\right) u_i^2
\\
&\le& \sum_{i}p_{i} \left(\left(\frac{w_{i}}{p_{i}} \right)^2+ \frac{1}{4}\right) u_i^2
\\
&\le& \frac{1}{4} \|\bu\|_{\infty}^2 + \sum_{i} \frac{w^2_iu_i^2}{p_{i}} 
\\
&\le& \frac{1}{4}  + \sum_{i} \frac{w^2_iu_i^2}{p_{i}} 
\end{eqnarray*}
Now, since $p_{i} = \frac{w_{i}^2}{2\| \bw\|^2} + \frac{1}{2d}$ we have
\[
\sum_{i} \frac{w^2_iu_i^2}{p_{i}} \le \sum_{i} \frac{w^2_{i}u_i^2}{\frac{w_{i}^2}{2\| \bw\|^2} } = 2\|\bw\|^2\sum_i u_i^2 =  2\|\bw\|^2
\]

\end{proof}

\begin{proof} (of theorem \ref{thm:linear_intro})
We construct a compressor for $\cl_{d_1,d_2,M}$ as follows. 
Given $W$, we will sample a $k$-sketch $\hat W$ of $W$ for $k = \left\lceil\frac{1}{4} + 2M^2\right\rceil$, and will return the function $\x\mapsto \hat W\x$. 
We claim that that $W\mapsto\hat W$ is a $(1, 2k \left\lceil\log(2d_1d_2(M+1))\right\rceil)$-compressor for $\cl_{d_1,d_2,M}$. Indeed,  to specify a sketch of $W$ we need $\left\lceil\log(d_1d_2)\right\rceil$ bits to describe the chosen index, as  well as $\log\left(2d_1 d_2 M + 2\right)$ bits to describe the value in that index. Hence, $2k \left\lceil\log(2d_1d_2(M+1))\right\rceil$ bits suffices to specify a $k$-sketch. It remains to show that for $\x\in \ball^{d_1}$, $\hat W\x$ is a $1$-estimator of $W\x$. Indeed, by lemma \ref{lem:sketch}, $\E\hat W = W$ and therefore
\[
\E \hat W\x = W\x
\]
Likewise, for $\bu\in\sphere^{d_2-1}$. We have
\[
\E\left(\inner{\bu,\hat W\x} -\inner{\bu, W\x} \right)^2  = \E\left(\inner{\hat W, \x\bu^T} -\inner{W,\x\bu^T} \right)^2 \le \frac{\frac{1}{4} + 2M^2}{k} \le 1
\]
\end{proof}

\subsection{Simplified Depth $2$ Networks}
To demonstrate our techniques, we consider the following class of functions. We let the domain $\cx$ to be $\ball^d$. We fix an activation function $\rho:\reals\to\reals$ that is assumed to be a polynomial
\[
\rho(x) = \sum_{i=0}^k a_ix^i
\]
with $\sum_{n=1}^n|a_n| = 1$.
For any $W\in M_{d,d}$ we define
\[
h_W(\x) = \frac{1}{\sqrt{d}}\sum_{i=1}^d \rho(\inner{\bw_i, \x})
\]
Finally, we let 
\[
\ch = \left\{ h_W : \forall i,\;\|\bw_i\| \le \frac{1}{2} \right\}
\]
In order to build compressors for classes of networks, we will utilize to compositional structure of the classes. Specifically, we have that
\[
\ch = \Lambda \circ \rho \circ \cf
\]
Where 
\[
\cf = \left\{x\mapsto W\x : W\text{ is }d\times d\text{ matrix with }\|\bw_i\| \le 1\text{ for all }i  \right\}
 \]
and 
\[
\Lambda(\x) = \frac{1}{\sqrt{d}}\sum_{i=1}^dx_i
\] 
As $\cf$ is a subset of $\cl_{d,d,\sqrt{d}}$, we know that there exists a $\left(1, O\left(d\log(d)\right) \right)$-compressor for it. We will use this compressor to build a compressor to $\rho\circ \cf$, and then to $\Lambda \circ \rho \circ \cf$. We will start with the latter, linear case, which is simpler
\begin{lemma}\label{lem:compose_linear_intro}
Let $X$ be a $\sigma$-estimator to $\x\in\reals^{d_1}$. Let $A\in M_{d_2,d_1}$ be a matrix of spectral norm $\le r$. Then, $AX$ is a $(r\sigma)$-estimator to $A\x$.
In particular, if $\cc$ is a $(1,n)$-compressor to a class $\ch$ of functions from $\cx$ to $\reals^d$. Then
\[
\cc'_\omega( \Lambda\circ h) = \Lambda \circ \cc_\omega  h
\]
is a $(1,n)$-compressor to $ \Lambda \circ \ch$
\end{lemma}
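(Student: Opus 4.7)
The plan is to handle the two parts in sequence, treating the first (the general estimator-under-linear-map claim) as the engine and the second (the specific statement for $\Lambda$) as an immediate specialization.

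For the first part, I would verify the two defining conditions of a $(r\sigma)$-estimator. The unbiasedness $\E[AX] = A\x$ is immediate from linearity of expectation together with $\E X = \x$. For the variance bound, the key observation is the adjoint trick: for any $\bu \in \sphere^{d_2-1}$,
\[
\inner{\bu, AX - A\x} = \inner{A^T\bu, X - \x}.
\]
Let $\bv = A^T \bu$, so $\|\bv\| \le \|A^T\| \cdot \|\bu\| = \|A\| \le r$. If $\bv = 0$ the variance is trivially $0$; otherwise write $\bv = \|\bv\| \cdot (\bv/\|\bv\|)$ and use that $\bv/\|\bv\| \in \sphere^{d_1 - 1}$, together with bilinearity of variance in the direction, to conclude
\[
\E \inner{\bu, AX - A\x}^2 = \|\bv\|^2 \, \E \inner{\bv/\|\bv\|, X - \x}^2 \le \|\bv\|^2 \sigma^2 \le r^2 \sigma^2.
\]

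For the second part, I would first observe that $\Lambda$ corresponds to the linear map $\x \mapsto \inner{\frac{1}{\sqrt{d}} \vec{1}, \x}$ whose spectral norm is $\bigl\|\frac{1}{\sqrt{d}} \vec{1}\bigr\| = 1$. Applying the first part pointwise in $x \in \cx$ with $A = \Lambda$, $r = 1$, $\sigma = 1$, and $X = (\cc_\omega h)(x)$, we get that $(\Lambda \circ \cc_\omega h)(x) = \Lambda \bigl((\cc_\omega h)(x)\bigr)$ is a $1$-estimator of $\Lambda(h(x))$, which is exactly the pointwise estimator condition for $\cc'$. The encoding/decoding structure carries over for free: if $\cc = D \circ E$ with $E : \Omega \times \ch \to \{\pm 1\}^n$ and $D : \{\pm 1\}^n \to (\reals^d)^\cx$, then one can define $E'(\omega, \Lambda \circ h) = E(\omega, h)$ and $D' = \Lambda \circ D$, so that $\cc' = D' \circ E'$ uses the same $n$ bits.

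The only mild subtlety — and really the only step requiring any care — is the normalization step in the variance argument (handling the $\bv = 0$ case and separating $\|\bv\|$ from the unit vector), since the definition of a $\sigma$-estimator only controls variance in unit directions. Once this is cleanly written, everything else is one line.
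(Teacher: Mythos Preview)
Your proposal is correct and follows essentially the same approach as the paper: unbiasedness by linearity of expectation, then the adjoint identity $\inner{\bu, AX-A\x} = \inner{A^T\bu, X-\x}$ together with $\|A^T\bu\|\le r$ to bound the variance. The paper's proof simply writes $\E\inner{A^T\bu,X-\x}^2 \le \|A^T\bu\|^2\sigma^2$ in one step (leaving the normalization implicit) and does not spell out the ``In particular'' part, so your treatment is if anything slightly more detailed than theirs.
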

\begin{proof}
We have $\E AX = A\E X = A\x$. Furthermore, for any $\bu\in\sphere^{d_2-1}$,
\[
\E\inner{\bu,AX-A\x}^2 = \E\inner{A^T\bu,X-\x}^2 \le \|A^T\bu\|^2\sigma^2 \le r^2\sigma^2
\]
\end{proof}
We next consider the composition of $\cf$ with the non-linear $\rho$. 
As opposed to composition with a linear function, we cannot just generate a compression version using $\cf$'s compressor and then compose with $\rho$. Indeed, if $X$ is a $\sigma$-estimator to $\x$, it is not true in general that $\rho(X)$ is an estimator of $\rho(\x)$. For instance, consider the case that $\rho(x) = x^2$, and $X = (X_1,\ldots,X_d)$ is a vector of independent standard Gaussians. $X$ is a $1$-estimator of $0\in\reals^d$. On the other hand, $\rho(X) = (X^2_1,\ldots,X^2_n)$ is not an estimator of $0 = \rho(0)$.
We will therefore take a different approach. Given $f\in\cf$, we will sample $k$ independent estimators $\{C_{\omega_i}f\}_{i=1}^k$ from $\cf$'s compressor, and define the compressed version of $\sigma\circ h$ as
\[
\cc'_{\omega_1,\ldots,\omega_k}f = \sum_{i=0}^d a_i\prod_{j=0}^i C_{\omega_i}f
\]
This construction is analyzed in the following lemma

\begin{lemma}\label{lem:compose_non_linear_intro}
If $\cc$ is a $\left(\frac{1}{2},n\right)$-compressor of a class $\ch$ of functions from $\cx$ to $\left[-\frac{1}{2},\frac{1}{2}\right]^d$. Then $\cc'$ 
is a $(1,n)$-compressor of $ \rho \circ \ch$
\end{lemma}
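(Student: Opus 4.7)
The plan is to verify the two defining properties of a compressor—unbiasedness and the coordinate-wise variance bound—for $\cc'$, and then to bookkeep the bit count. Fix $h\in\ch$, $x\in\cx$, and set $\beta=h(x)\in[-1/2,1/2]^d$. Since the $\omega_j$'s are drawn independently and each $C_{\omega_j}h$ is unbiased, $\E\prod_{j=1}^{i}(C_{\omega_j}h)(x)=\prod_{j=1}^{i}\E(C_{\omega_j}h)(x)=\beta^i$ (coordinate-wise); summing over $i$ with weights $a_i$ gives $\E(\cc'_{\omega}h)(x)=\rho(\beta)$, so unbiasedness is immediate.

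For the variance, fix a unit vector $\bu\in\sphere^{d-1}$ and set $X_i=\inner{\bu,\prod_{j=1}^{i}(C_{\omega_j}h)(x)-\beta^i}$. Because $\sum_i|a_i|=1$, the weighted Cauchy--Schwarz inequality gives $(\sum_i a_iX_i)^2\le\sum_i|a_i|\,X_i^2$, so it suffices to show $\E X_i^2\le 1$ for each $i$. Decompose $\alpha_j:=(C_{\omega_j}h)(x)=\beta+Y_j$ with the $Y_j$'s independent, mean-zero, and satisfying $\E\inner{\bv,Y_j}^2\le\|\bv\|^2/4$ (by the $(1/2)$-estimator hypothesis). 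Expanding the product and regrouping,
\[
X_i=\sum_{\emptyset\neq S\subseteq[i]}T_S,\qquad T_S=\sum_\ell u_\ell\,\beta_\ell^{\,i-|S|}\prod_{j\in S}Y_j^{(\ell)}.
\]
For $S\neq S'$, picking any $j\in S\triangle S'$ and conditioning on the other $Y_{j'}$'s makes $T_S$ (or $T_{S'}$) a mean-zero linear functional of $Y_j$, so $\E T_S T_{S'}=0$, and therefore $\E X_i^2=\sum_{S\neq\emptyset}\E T_S^2$. To bound a single $\E T_S^2$, single out some $j^*\in S$, condition on $\{Y_j\}_{j\neq j^*}$, and apply the $(1/2)$-estimator bound to the resulting linear functional in $Y_{j^*}$; combined with $\E(Y_j^{(\ell)})^2\le 1/4$ (the same bound applied to $\be_\ell$), this yields $\E T_S^2\le(1/4)^{|S|}\sum_\ell u_\ell^2\beta_\ell^{\,2(i-|S|)}$. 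Since $|\beta_\ell|\le 1/2$ and $\sum_\ell u_\ell^2=1$, one has $\sum_\ell u_\ell^2\beta_\ell^{\,2(i-|S|)}\le 4^{-(i-|S|)}$, hence $\E T_S^2\le 4^{-i}$ and $\E X_i^2\le(2^i-1)\,4^{-i}\le 1$.

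The main obstacle is the variance step: one must prevent a factor of $d$ from slipping in when the multi-linear product of estimators is expanded. That is exactly what the combination $\sum_\ell u_\ell^2=1$ together with the coordinate-wise bound $|\beta_\ell|\le 1/2$ buys—it kills the sum over $\ell$ without incurring a dimension factor. The bit count is then routine: the encoder of $\cc'$ concatenates the $k$ independent $n$-bit strings produced by the encoder of $\cc$, and the decoder splits them, decodes each one, and recombines them via the polynomial formula, giving a total description length of $kn$ bits (with $k$ the degree of $\rho$).
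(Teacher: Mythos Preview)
Your proof is correct and follows essentially the same route as the paper. The paper delegates the heart of the argument to Lemma~\ref{lem:estimator_aritmetics}, whose part~3 is exactly your subset expansion: write the product of estimators as $\prod_j(\beta+Y_j)$, expand over subsets, observe that cross terms $\E T_S T_{S'}$ vanish by the mean-zero property of the $Y_j$'s, and bound each $\E T_S^2$ by peeling off the $Y_j$'s one at a time using the $\sigma$-estimator bound together with $\|\beta\|_\infty\le 1/2$. The only cosmetic difference is that you combine the degree-$i$ contributions via weighted Cauchy--Schwarz (exploiting $\sum_i|a_i|=1$), whereas the paper uses the triangle inequality for standard deviations (part~2 of Lemma~\ref{lem:estimator_aritmetics}); both yield a final variance bound below~$1$. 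Your remark that the description length is $kn$ rather than $n$ is also correct; the ``$n$'' in the lemma statement is a typo for ``$kn$'', harmless since $k$ is a fixed constant depending only on~$\rho$.
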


Combining theorem \ref{thm:linear_intro} and lemmas \ref{lem:compose_linear_intro}, \ref{lem:compose_non_linear_intro} we have:

\begin{theorem}\label{thm:non_linear_intro}
$\ch$ has approximation length $\lesssim  d\log(d)$.
Hence, if $\ell:\reals\times\cy\to \reals$ is $L$-Lipschitz and $B$-bounded, then for any distribution $\cd$ on $\cx\times\cy$
\[
\E_{S\sim\cd^m}\rep_\cd(S,\ch) \lesssim   \frac{(L+B)\sqrt{d\log(d)} }{\sqrt{m}} \log(m)
\]
Furthermore, with probability at least $1-\delta$,
\[
\rep_\cd(S,\ch) \lesssim  \frac{(L+B)\sqrt{d\log(d)} }{\sqrt{m}} \log(m) + B\sqrt{\frac{2\ln\left(2/\delta\right)}{m}}
\]
\end{theorem}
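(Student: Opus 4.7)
The strategy is to build a short compressor for $\ch$ by exploiting the decomposition $\ch = \Lambda \circ \rho \circ \cf$ and applying, in sequence, Theorem \ref{thm:linear_intro} for $\cf$, Lemma \ref{lem:compose_non_linear_intro} to pass through $\rho$, and Lemma \ref{lem:compose_linear_intro} to pass through $\Lambda$. Once a compressor of length $\lesssim d\log d$ is in hand, Lemma \ref{lem:adl_to_cov_1d_intro} immediately yields the representativeness bounds.

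First, observe that $\cf$ is contained in $\cl_{d,d,\sqrt{d}/2}$: each row $\bw_i$ satisfies $\|\bw_i\|\le 1/2$, so $\|W\|_F \le \sqrt{d}/2$. Consequently, Theorem \ref{thm:linear_intro} supplies a $(1, O(d\log d))$-compressor for $\cf$. By the averaging remark following the definition of approximate description length, a constant number of independent copies of this compressor yields a $(1/2, O(d\log d))$-compressor. The constraint $\|\bw_i\|\le 1/2$ also ensures that for every $f\in\cf$ and $\x\in\ball^d$ the vector $f(\x)$ lies in $[-1/2,1/2]^d$, matching exactly the hypothesis of Lemma \ref{lem:compose_non_linear_intro}.

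Applying Lemma \ref{lem:compose_non_linear_intro} then produces a $(1, O(d\log d))$-compressor for $\rho\circ\cf$, where the polynomial degree of $\rho$ enters only as a (suppressed) constant factor in the description length. The function $\Lambda$ is a $1\times d$ linear map whose spectral norm equals $\frac{1}{\sqrt{d}}\|\vec 1\| = 1$, so Lemma \ref{lem:compose_linear_intro} transports this compressor to a $(1, O(d\log d))$-compressor for $\ch = \Lambda\circ\rho\circ\cf$. This is precisely the statement that $\ch$ has approximate description length $\lesssim d\log d$. The representativeness bounds then follow at once from Lemma \ref{lem:adl_to_cov_1d_intro} with $n = O(d\log d)$; since the codomain is $\reals$, no extra $\log(dm)$ factor is incurred.

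The only genuine obstacle, already handled inside Lemma \ref{lem:compose_non_linear_intro}, is that $\rho$ does not preserve the $\sigma$-estimator property: one cannot simply evaluate $\rho$ on a compressed input and expect an unbiased low-variance estimator of $\rho(f(\x))$, as the $X^2$ example after the lemma statement shows. The fix, baked into the definition of $\cc'$, is to expand $\rho$ as a polynomial and use \emph{independent} draws of the base compressor for each factor of each monomial, which keeps the estimator unbiased and controls its variance at the cost of only a constant blow-up in description length. With that lemma available, the rest of the argument is a mechanical application of the compositional toolkit.
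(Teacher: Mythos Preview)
Your proof is correct and follows exactly the route the paper takes: the paper simply states ``Combining theorem \ref{thm:linear_intro} and lemmas \ref{lem:compose_linear_intro}, \ref{lem:compose_non_linear_intro}'' and leaves the details implicit, whereas you have spelled out the chain $\cf \subset \cl_{d,d,\sqrt{d}/2} \to (1/2,O(d\log d))\text{-compressor} \to \rho\circ\cf \to \Lambda\circ\rho\circ\cf$ and then invoked Lemma \ref{lem:adl_to_cov_1d_intro}. The verifications you add (that $\cf$ lands in $[-1/2,1/2]^d$, that $\|\Lambda\|=1$, and that the polynomial degree of $\rho$ only contributes a constant factor to the description length) are precisely the checks the paper glosses over.
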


Lemma \ref{lem:compose_non_linear_intro} is implied by the following useful lemma:
\begin{lemma}\label{lem:estimator_aritmetics}
\begin{enumerate}
\item
If $X$ is a $\sigma$-estimator of $\x$ then $aX$ is a $\left(|a|\sigma\right)$-estimator of $a X$
\item
Suppose that for $n=1,2,3,\ldots$ $X_n$ is a $\sigma_n$-estimator of $\x_n\in\reals^d$.
Assume furthermore that  $\sum_{n=1}^\infty\x_n$ and $\sum_{n=1}^\infty\sigma_n$ converge to $\x\in\reals^d$ and $\sigma\in [0,\infty)$.
Then, $ \sum_{n=1}^\infty X_n$ is a $\sigma$-estimator of $\x$
\item
Suppose that $\{X_i\}_{i=1}^k$ are independent $\sigma_i$-estimators of $\x_i\in\reals^d$. Then $\prod_{i=1}^kX_i$ is a $\sigma'$-estimator of $\prod_{i=1}^k\x_i$  for $\sigma'^2 = \prod_{i=1}^k\left(\sigma_i^2 + \left\|\x_i\right\|_\infty^2\right)  - \prod_{i=1}^k\left\|\x_i\right\|_\infty^2$
\end{enumerate}
\end{lemma}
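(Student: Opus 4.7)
Items 1 and 2 follow directly from linearity. For item 1, $\E(aX) = a\x$ and, for unit $\bu$, $\var(\inner{\bu, aX - a\x}) = a^2 \var(\inner{\bu, X - \x}) \le a^2 \sigma^2$. For item 2, the summability of $\sum_n \sigma_n$ forces $\sum_n X_n$ to converge in $L^2$ coordinate-wise, so $\E\sum_n X_n = \x$ by interchange of sum and expectation. For any unit $\bu$, Minkowski's inequality in $L^2$ gives
\[
\Big\| \inner{\bu, \textstyle\sum_n X_n - \x}\Big\|_{L^2} = \Big\|\sum_n \inner{\bu, X_n - \x_n}\Big\|_{L^2} \le \sum_n \|\inner{\bu, X_n - \x_n}\|_{L^2} \le \sum_n \sigma_n = \sigma ,
\]
with no need for independence of the $X_n$'s.

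Item 3 is the substance, and I would prove it by induction on $k$. The base $k=1$ is immediate since the formula reduces to $\sigma'^2 = \sigma_1^2$. For the inductive step, set $Z = X_k \cdot Y$ with $Y = \prod_{i<k} X_i$, $\y = \prod_{i<k}\x_i$, and suppose the inductive hypothesis gives $Y$ as a $\tau$-estimator of $\y$ with $\tau^2 \le \prod_{i<k}(\sigma_i^2 + \|\x_i\|_\infty^2) - \prod_{i<k}\|\x_i\|_\infty^2$. Independence of $X_k$ and $Y$ yields $\E Z = \x_k \y = \prod_i \x_i$ coordinate-wise. The variance bound hinges on rewriting the same scalar $\sum_j u_j (X_k)_j (Y)_j$ in two ways: $\inner{\bu, X_k Y} = \inner{\bu X_k, Y}$ (useful once $X_k$ is conditioned on) and $\inner{\bu X_k, \y} = \inner{\bu \y, X_k}$ (useful once expectation over $Y$ is taken). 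Conditioning on $X_k$ and applying the estimator property of $Y$ at the non-unit vector $\bu X_k$ (via $\var(\inner{\bv, Y}) \le \|\bv\|^2 \tau^2$ by homogeneity), then taking expectation over $X_k$ and applying the estimator property of $X_k$ at $\bu \y$, yields
\[
\var(\inner{\bu, Z}) \le (\sigma_k^2 + \|\x_k\|_\infty^2)\tau^2 + \|\bu \y\|^2 \sigma_k^2 ,
\]
using the bound $\E \|\bu X_k\|^2 = \sum_j u_j^2\, \E (X_k)_j^2 \le \sigma_k^2 + \|\x_k\|_\infty^2$ for unit $\bu$.

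Finally, dominating $\|\bu \y\|^2 \le \|\y\|_\infty^2 \le \prod_{i<k}\|\x_i\|_\infty^2$ and substituting the inductive bound on $\tau^2$ telescopes the right-hand side exactly into $\prod_i(\sigma_i^2 + \|\x_i\|_\infty^2) - \prod_i \|\x_i\|_\infty^2$, closing the induction. The main obstacle is purely bookkeeping: selecting the correct reformulation of $\inner{\bu, X_k Y}$ at each step so that the estimator property is applied to the appropriate independent factor, and correctly tracking which $\|\cdot\|_\infty$ bounds appear after each Cauchy-Schwarz-style step. Once the two-form identity $\inner{\bu X, Z} = \inner{\bu Z, X}$ is in hand, no further analytic difficulty arises.
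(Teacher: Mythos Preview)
Your proof is correct. Items 1 and 2 are handled the same way as in the paper (which simply declares them ``straight forward''); your use of Minkowski for item 2 is the natural way to make this precise.

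For item 3 you take a genuinely different route from the paper. The paper argues directly: it normalizes to $\sigma_i=1$, expands $\prod_i X_i=\prod_i\bigl((X_i-\x_i)+\x_i\bigr)$ as a sum over subsets $A\subset[k]$ of products $\prod_{i\in A}(X_i-\x_i)\prod_{i\in A^c}\x_i$, and then shows by independence that all cross terms in the second moment vanish, bounding each surviving diagonal term $\E\inner{\bu,\prod_{i\in A}(X_i-\x_i)\prod_{i\in A^c}\x_i}^2$ by $\prod_{i\in A^c}\|\x_i\|_\infty^2$ via an iterated peeling argument. Summing over nonempty $A$ produces the claimed formula.

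Your inductive argument replaces this combinatorics with a single application of the law of total variance: condition on $X_k$, apply the estimator bound for $Y=\prod_{i<k}X_i$ at the (random) vector $\bu X_k$, and then apply the estimator bound for $X_k$ at the (deterministic) vector $\bu\y$. The algebraic telescoping you describe is exactly right. Your approach is shorter and arguably cleaner, since it hides the subset bookkeeping inside the induction; the paper's expansion, on the other hand, makes transparent which subset contributes which $\|\x_i\|_\infty$ factor, and incidentally shows why the bound is tight in the Gaussian case. Either argument is perfectly adequate here.
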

We note that the bounds in the above lemma are all tight. Specifically, (3) is tight in the case that $\{X_i\}_{i=1}^k$ are independent Gaussians with means $\{\x_i\}_{i=1}^k$ and co-variance matrices $\{\sigma^2_iI\}_{i=1}^k$.

\begin{proof} 
1. and 2. are straight forward. We next prove 3. By replacing each $X_i$ with $\frac{X_i}{\sigma_i}$ we can assume w.l.o.g. that $\sigma_1=\ldots=\sigma_k=1$. We have
\scriptsize
\begin{eqnarray*}
\E_{X_1,\ldots, X_k}\inner{\bu,\prod_{i=1}^kX_i - \prod_{i=1}^k\x_i}^2 &=& \E_{X_1,\ldots, X_k}\inner{\bu,\prod_{i=1}^k\left(\left(X_i-\x_i\right) + \x_i\right) - \prod_{i=1}^k\x_i}^2
\\
&=& \E_{X_1,\ldots, X_k}\inner{\bu,\sum_{A\subset [k]}\prod_{i\in A}\left(X_i-\x_i\right) \prod_{i\in A^c}\x_i - \prod_{i=1}^k\x_i}^2
\\
&=& \E_{X_1,\ldots, X_k}\left(\inner{\bu,\sum_{A\subset [k]}\prod_{i\in A}\left(X_i-\x_i\right) \prod_{i\in A^c}\x_i} - \inner{\bu,\prod_{i=1}^k\x_i}\right)^2
\\
&=& \E_{X_1,\ldots, X_k}\sum_{A\subset [k]}\sum_{B\subset [k]}   \inner{\bu,\prod_{i\in A}\left(X_i-\x_i\right) \prod_{i\in A^c}\x_i}\inner{\bu,\prod_{i\in B}\left(X_i-\x_i\right) \prod_{i\in B^c}\x_i}
\\
&& -2\E_{X_1,\ldots, X_k}\sum_{A\subset [k]}    \inner{\bu, \prod_{i=1}^k\x_i} \inner{\bu,\prod_{i\in A}\left(X_i-\x_i\right) \prod_{i\in A^c}\x_i}
\\
&& + \inner{\bu, \prod_{i=1}^k\x_i}^2
\\
&\stackrel{(1)}{=}& \E_{X_1,\ldots, X_k}\sum_{A\subset [k]}   \inner{\bu,\prod_{i\in A}\left(X_i-\x_i\right) \prod_{i\in A^c}\x_i}^2 -\inner{\bu, \prod_{i=1}^k\x_i}^2
\\
&\stackrel{(2)}{\le}& \sum_{A\subset [k], A\ne [k]}   \left\|\bu\prod_{i\in A^c}\x_i\right\|^2 
\\
&=& \sum_{A\subset [k], A\ne \emptyset}   \left\|\bu\prod_{i\in A}\x_i\right\|^2 
\\
&\stackrel{(3)}{\le}& \sum_{A\subset [k], A\ne \emptyset}   \prod_{i\in A}\left\|\x_i\right\|_\infty^2 
\\
&=&  \prod_{i=1}^k\left(1 + \left\|\x_i\right\|_\infty^2\right)  - \prod_{i=1}^k\left\|\x_i\right\|_\infty^2
\end{eqnarray*}
\normalsize
\begin{enumerate}
\item[(1)] If $A\ne B$, then w.l.o.g.  $k\in A\setminus B$. In this case we have
\scriptsize
\begin{eqnarray*}
 && \E_{X_1,\ldots, X_k}\inner{\bu,\prod_{i\in A}\left(X_i-\x_i\right) \prod_{i\in A^c}\x_i}\inner{\bu,\prod_{i\in B}\left(X_i-\x_i\right) \prod_{i\in A^c}\x_i}
 \\ &=&
 \E_{X_1,\ldots, X_{k-1}}\E_{X_k}\inner{\bu,\prod_{i\in A}\left(X_i-\x_i\right) \prod_{i\in A^c}\x_i}\inner{\bu,\prod_{i\in B}\left(X_i-\x_i\right) \prod_{i\in B^c}\x_i}
\\ &=&
 \E_{X_1,\ldots, X_{k-1}}\inner{\bu,\prod_{i\in B}\left(X_i-\x_i\right) \prod_{i\in B^c}\x_i}\E_{X_k}\inner{\bu,\prod_{i\in A}\left(X_i-\x_i\right) \prod_{i\in A^c}\x_i}
\\ &=&
 \E_{X_1,\ldots, X_{k-1}}\inner{\bu,\prod_{i\in B}\left(X_i-\x_i\right) \prod_{i\in B^c}\x_i}\inner{\bu,\prod_{i\in A\setminus [k]}\left(X_i-\x_i\right)\stackrel{=0}{\overbrace{\E_{X_k}(X_k-\x_k)}} \prod_{i\in A^c}\x_i}
\\ &=& 0
\end{eqnarray*}
\normalsize
Similarly, if $A\ne\emptyset$, then w.l.o.g.  $k\in A$. In this case we have
\scriptsize
\begin{eqnarray*}
\E_{X_1,\ldots, X_k}    \inner{\bu, \prod_{i=1}^k\x_i} \inner{\bu,\prod_{i\in A}\left(X_i-\x_i\right) \prod_{i\in A^c}\x_i} &=&\E_{X_1,\ldots, X_{k-1}}\E_{X_k}    \inner{\bu, \prod_{i=1}^k\x_i} \inner{\bu,\prod_{i\in A}\left(X_i-\x_i\right) \prod_{i\in A^c}\x_i}
\\
&=& \E_{X_1,\ldots, X_{k-1}}\inner{\bu, \prod_{i=1}^k\x_i}\inner{\bu,\prod_{i\in A\setminus [k]}\left(X_i-\x_i\right)\stackrel{=0}{\overbrace{\E_{X_k}(X_k-\x_k)}} \prod_{i\in A^c}\x_i}
\end{eqnarray*}
\normalsize
\item[(2)] Fix a set $A$ that is w.l.o.g. $A=\{1,\ldots,k'\}$. We note that if $X\in\reals^d$ is a $1$-estimator to $0$, then for any vector $\z\in\reals^d$
\[
\E_X\|\z X\|^2 = \sum_{i=1}^d z_i^2\E_X X_i^2 = \sum_{i=1}^d z_i^2\E_X\inner{\be_i,X}^2  \le \sum_{i=1}^d z_i^2 = \|\z\|^2
\]
It follows that
\begin{eqnarray*}
\E_{X_1,\ldots, X_{k'-1}}\left\| \bu\prod_{i = k'+1}^k\x_i\prod_{i=1}^{k'-1}\left(X_i-\x_i\right) \right\|^2 &=&\E_{X_1,\ldots, X_{k'-2}}\E_{X_{k'-1}}\left\| \bu\prod_{i = k'+1}^k\x_i\prod_{i=1}^{k'-1}\left(X_i-\x_i\right) \right\|^2
\\ &\le &
\E_{X_1,\ldots, X_{k'-2}}\left\| \bu\prod_{i = k'+1}^k\x_i\prod_{i=1}^{k'-2}\left(X_i-\x_i\right) \right\|^2
\\ & \vdots &
\\ &\le &
\left\| \bu\prod_{i = k'+1}^k\x_i \right\|^2
\\ &= &
\left\| \bu\prod_{i \in A^c}\x_i \right\|^2
\end{eqnarray*}
Hence,
\scriptsize
\begin{eqnarray*}
\E_{X_1,\ldots, X_k}   \inner{\bu,\prod_{i\in A}\left(X_i-\x_i\right) \prod_{i\in A^c}\x_i}^2  &=&  \E_{X_1,\ldots, X_{k'-1}}\E_{X_{k'}}   \inner{\bu\prod_{i = k'+1}^k\x_i\prod_{i=1}^{k'-1}\left(X_i-\x_i\right) , \left(X_{k'}-\x_{k'}\right)}^2
\\
&\stackrel{X_{k'}\text{ is }1\text{-estimator of }\x_k}{\le} & \E_{X_1,\ldots, X_{k'-1}} \left\| \bu\prod_{i = k'+1}^k\x_i\prod_{i=1}^{k'-1}\left(X_i-\x_i\right) \right\|^2
\\
&\le &  \left\| \bu\prod_{i \in A^c}\x_i \right\|^2
\end{eqnarray*}
\normalsize
\item[(3)] If $\z = \bu\prod_{i\in A}\x_i$ then for any $j\in[d]$, $|z_j| \le |u_j| \prod_{i\in A}\|\x_i\|_\infty$. Hence,
\[
\|\z\|^2 \le \prod_{i\in A}\|\x_i\|_\infty \sum_{j=1}^d u_j^2 = \prod_{i\in A}\|\x_i\|_\infty
\]

\end{enumerate}

\end{proof}

\section{Approximation Description Length}
In this section we refine the definition of approximate description length that were given in section \ref{sec:adl_intro}.
We start with the encoding of the compressed version of the functions. Instead of standard strings, we will use what we call {\em bracketed string}.  The reason for that often, in order to create a compressed version of a function, we concatenate compressed versions of other functions. 
This results with strings with a nested structure. For instance, consider the case that a function $h$ is encoded by the concatenation of $h_1$ and $h_2$. Furthermore, assume that $h_1$ is encoded by the string $01$, while $h_2$ is encoded by the concatenation of $h_3, h_4$ and $h_5$ that are in turn encoded by the strings $101$, $0101$ and $1110$. The encoding of $h$ will then be
\[
[[01][ [101] [0101] [1110] ]]
\]
We note that in section \ref{sec:adl_intro} we could avoid this issue since the length of the strings and the recursive structure were fixed, and did not depend on the function we try to compress. Formally, we define

\begin{definition}
A {\em bracketed string} is a rooted tree $S$, such that (i) the children of each edge are ordered, (ii) there are no nodes with a singe child, and (iii) the leaves are labeled by $\{0, 1\}$. The {\em length}, $\len(S)$ of $S$ is the number of its leaves. 
\end{definition}
Let $S$ be a bracketed string. There is a linear order on its leaves that is defined as follows. Fix a pair of leaves, $v_1$ and $v_2$,  and let $u$ be their 
LCA. Let $u_1$ (resp. $u_2$) be the child of $u$ that lie on the path to $v_1$ (resp. $v_2$). We define $v_1<v_2$ if $u_1<u_2$ and $v_1>v_2$ otherwise (note that necessarily $u_1\ne u_2$). Let $v_1,\ldots,v_n$ be the leaves of $T$, ordered according to the above order, and let $b_1,\ldots,b_n$ be the corresponding bits. The string associated with $T$ is $s = b_1\ldots b_n$. We denote by $\cs_n$ the collection of bracketed strings of length $\le n$, and by $\cs = \cup_{n=1}^\infty\cs_n$ the collection of all bracketed strings.

The following lemma shows that in log-scale, the number of bracketed strings of length $\le n$ differ from standard strings of length $\le n$ by only a constant factor 
\begin{lemma}
$|\cs_n|\le 32^n$
\end{lemma}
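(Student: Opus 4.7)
My plan is to give, for each $\ell \le n$, an injective binary encoding of the bracketed strings of length exactly $\ell$ into binary strings of length at most $5\ell-2$, and then sum the resulting bound $2^{5\ell-2}$ over $\ell$ to get something strictly less than $32^n$.

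The first step is to control the total number of nodes in a bracketed string of length $\ell$. Condition (ii) says no node has a single child, so every node is either a leaf (zero children) or internal with at least two children. Writing $k$ for the number of internal nodes and double-counting edges, the tree has $\ell + k - 1$ edges, and at the same time at least $2k$ edges (one per child of each internal node). Hence $k \le \ell - 1$, and the total number of nodes is at most $2\ell - 1$.

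For the encoding, I would use a standard balanced-parentheses description of the tree concatenated with the leaf labels. Traverse the tree in DFS order; emit a $0$ each time a node is entered and a $1$ each time it is left. Since ordering of children is part of the data, this balanced binary string of length $2m \le 4\ell - 2$ determines the tree shape completely, and its length is self-delimiting because a balanced prefix is uniquely identified by being the first balanced prefix. Then append the $\ell$ leaf labels in the order leaves are encountered. The decoder reads bits until the running paren-count returns to zero to recover the tree, then reads the remaining $\ell$ bits as the labels in DFS order. The total encoding length is at most $(4\ell-2) + \ell = 5\ell - 2$, so there are at most $2^{5\ell-2}$ bracketed strings of length exactly $\ell$. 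Summing,
\[
|\cs_n| \;\le\; \sum_{\ell=1}^n 2^{5\ell-2} \;=\; \tfrac{1}{4}\sum_{\ell=1}^n 32^\ell \;\le\; \tfrac{32^{n+1}}{4\cdot 31} \;=\; \tfrac{8}{31}\cdot 32^n \;<\; 32^n.
\]

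There is no real obstacle here; the bound $32^n$ is very loose (the true growth rate is $(3+2\sqrt2)\cdot 2 \approx 11.66$, coming from the little Schröder numbers). The only point that needs a little care is the node-count bound in step one, since without the "no single child" hypothesis $m$ could be unbounded in $\ell$ and the encoding length would not be linear in $\ell$.
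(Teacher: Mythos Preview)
Your proof is correct and follows essentially the same approach as the paper: encode the tree shape by a balanced-parenthesis string (the paper phrases this via Catalan numbers, you via a DFS encoding, which is exactly the bijection underlying the Catalan count) and append the leaf labels, using the ``no single child'' condition to bound the number of nodes by $2\ell-1$. The paper is terser and works directly with $n$ rather than summing over $\ell$, but the substance is the same.
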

\begin{proof}
By adding a pair of brackets around each bit, each bracketed string can be described by $2n-1$ correctly matched pairs of brackets, and a string of length $\le n$. As the number of ways to correctly match $k$ pairs of brackets is the Catalan number  $C_k = \frac{1}{k+1}\binom{2k}{k} \le 2^{2k}$, we have, $|\cs_n|\le 2^{4n-2}2^{n+1}$
\end{proof}

We next revisit the definition of a compressor for a class $\ch$. The definition of compressor  will now have a third parameter, $n_s$, in addition to $\sigma$ and $n$.
We will make three changes in the definition. The first, which is only for the sake of convenience, is that we will use bracketed strings rather than standard strings.   
The second change, is that the length of the encoding string will be bounded only {\em in expectation}. The final change is that the compressor  can now output a {\em seed}. That is, given a function $h\in \ch$ that we want to compress, the compressor can generate both a non-random seed $E_s(h)\in \cs_{n_s}$ and a random encoding $E(\omega,h)\in \cs$ with $\E_{\omega\sim\mu}\len(E(\omega, h)) \le n$.  Together, $E_s(h)$ and $E(\omega,h)$ encode a $\sigma$-estimator. Namely, there is a function $D:\cs_{n_s}\times\cs \to \left(\reals^d\right)^\cx$ such that $D(E_s(h),E(\omega, h)),\;\omega\sim\mu$ is a $\sigma$-estimator of $h$. The advantage of using seeds is that it will allow us to generate many independent estimators, at a lower cost. In the case that $n\ll n_s$, the cost of generating $k$ independent estimators of $h\in\ch$ is $n_s + kn$ bits (in expectation) instead of $k(n_s+n)$ bits. Indeed, we can encode $k$ estimators by a single seed $E_s(h)$ and $k$ independent ``regular" encodings $E(\omega_k, h),\ldots, E(\omega_k, h)$. The formal definition is given next.

\begin{definition}
A {\em $(\sigma, n_s, n)$-compressor} for $\ch$ is a $5$-tuple  $\cc = (E_s,E,D,\Omega,\mu)$ where $\mu$ is a probability measure on  $\Omega$, and $E_s,E,D$ are functions $E_s: \ch \to \ct^{n_s}$, $E:\Omega\times \ch \to \ct$,  and $D:\ct^{n_s}\times\ct \to \left(\reals^d\right)^\cx$ such that for any $h\in\ch$ and $x\in\cx$
\begin{enumerate}
\item
$D(E_s(h),E(\omega, h)),\;\omega\sim\mu$ is a $\sigma$-estimator of $h$.
\item
$\E_{\omega\sim\mu}\len(E(\omega, h)) \le n$
\end{enumerate}
\end{definition}

We finally revisit the definition of approximate description length. We will add an additional parameter, to accommodate the use of seeds. Likewise, the approximate description length will now be a function of $m$ -- we will say that $\ch$ has approximate description length $(n_s(m),n(m))$ if there is a $(1, n_s(m), n(m))$-compressor for the restriction of $\ch$ to any set  $A\subset\cx$ of size at most $m$. Formally:

\begin{definition}
We say that a class $\ch$ of functions from $\cx$ to $\reals^d$ has {\em approximate description length} $(n_s(m),n(m))$ if for any set $A\subset\cx$ of size $\le m$ there exists a $(1, n_s(m), n(m))$-compressor for $\ch|_A$ 
\end{definition}
It is not hard to see that if $\ch$ has approximate description length $(n_s(m),n(m))$, then for any $1\ge \epsilon>0$ and a set $A\subset\cx$ of size $\le m$, there exists an $\left(\epsilon, n_s(m), n(m)\lceil \epsilon^{-2} \rceil\right)$-compressor for $\ch|_A$. 
We next connect the approximate description length, to covering numbers and representativeness. The proofs are similar the the proofs of lemmas \ref{lem:adl_to_cov_1d_intro} and \ref{lem:adl_to_cov_intro}.

\begin{lemma}\label{lem:adl_to_cov_1d}
Fix a class $\ch$ of functions from $\cx$ to $\reals$  with approximate description length $(n_s(m),n(m))$. Then,
\[
\log\left(N(\ch,m,\epsilon)\right) \lesssim n_s(m) + \frac{n(m)}{\epsilon^2}
\]
Hence, if $\ell:\reals^d\times\cy\to \reals$ is $L$-Lipschitz and $B$-bounded, then for any distribution $\cd$ on $\cx\times\cy$
\[
\E_{S\sim\cd^m}\rep_\cd(S,\ch) \lesssim   \frac{(L+B)\sqrt{n_s(m) + n(m)} }{\sqrt{m}} \log(m)
\]
Furthermore, with probability at least $1-\delta$,
\[
\rep_\cd(S,\ch) \lesssim   \frac{(L+B)\sqrt{n_s(m)+n(m)} }{\sqrt{m}} \log(m) + B\sqrt{\frac{2\ln\left(2/\delta\right)}{m}}
\]
\end{lemma}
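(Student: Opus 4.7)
The plan is to follow the template of Lemma \ref{lem:adl_to_cov_1d_intro}, replacing its deterministic length bound with a Markov-style truncation argument that exploits the seed/encoding split and the in-expectation length bound built into the refined definition of ADL. Applying Lemma \ref{lem:cover_to_gen} to the resulting cover bound will then yield the representativeness estimate, essentially for free, by taking $n := n_s(m)+n(m)$ (noting that for $\epsilon\le 1$ one has $n_s(m)+n(m)/\epsilon^2 \le (n_s(m)+n(m))/\epsilon^2$).

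First I would fix $A\subset\cx$ of size $\le m$ and pass from the given $(1,n_s(m),n(m))$-compressor of $\ch|_A$ to an $(\epsilon/2,n_s(m),cn(m)\lceil\epsilon^{-2}\rceil)$-compressor $(E_s,E,D,\Omega,\mu)$ for a suitable constant $c$, using the averaging construction discussed just before the lemma. For each $h\in\ch$, Markov's inequality, applied separately to the nonnegative random variables $\E_{x\in A}\bigl(h(x)-D(E_s(h),E(\omega,h))(x)\bigr)^2$ (with mean $\le (\epsilon/2)^2$) and $\len(E(\omega,h))$ (with mean $\le cn(m)\lceil\epsilon^{-2}\rceil$), gives that each of them exceeds $4$ times its mean with probability at most $1/4$. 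By a union bound, with probability at least $1/2$ over $\omega\sim\mu$ both are simultaneously controlled, so there exists at least one $\omega^\ast$ for which the decoded function $\tilde h := D(E_s(h),E(\omega^\ast,h))$ satisfies $\E_{x\in A}(h(x)-\tilde h(x))^2\le \epsilon^2$ and $\len(E(\omega^\ast,h)) \le N$ for $N=O(n(m)/\epsilon^2)$.

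Next I would form the cover $\tilde\ch = \{D(s_1,s_2) : s_1\in\cs_{n_s(m)},\, s_2\in\cs_N\}$. By the counting lemma $|\cs_k|\le 32^k$, we get $\log|\tilde\ch|\le 5(n_s(m)+N) \lesssim n_s(m)+n(m)/\epsilon^2$, and the previous paragraph shows every $h\in\ch$ is $\epsilon$-close to some member of $\tilde\ch$ in the $L^2(A)$ sense. This yields the claimed bound on $\log N(\ch,m,\epsilon)$. Feeding this into Lemma \ref{lem:cover_to_gen} with $n = n_s(m)+n(m)$ gives both representativeness bounds.

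The only mildly delicate step is the Markov truncation: the bracketed-string length is bounded only in expectation, and one must be careful to use a constant-factor blowup (here factor $4$) in both the length and the $L^2$ error so that a union bound leaves positive probability for the intersection. Once that is set up, the rest reduces to counting bracketed strings and an invocation of Lemma \ref{lem:cover_to_gen}.
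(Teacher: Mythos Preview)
Your plan is correct and is essentially the intended argument: the paper does not spell out a proof here but simply says the proofs are ``similar to the proofs of Lemmas~\ref{lem:adl_to_cov_1d_intro} and~\ref{lem:adl_to_cov_intro}.'' The one new ingredient over the simplified case is exactly the Markov-plus-union-bound truncation you describe, needed because the encoding length is now controlled only in expectation; your handling of this step, and the subsequent counting via $|\cs_k|\le 32^k$ together with Lemma~\ref{lem:cover_to_gen}, is correct.
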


\begin{lemma}\label{lem:adl_to_cov}
Fix a class $\ch$ of functions from $\cx$ to $\reals^d$ with approximate description length $(n_s(m),n(m))$. Then,
\[
\log\left(N(\ch,m,\epsilon)\right)\le \log\left(N_\infty(\ch,m,\epsilon)\right) \lesssim n_s(m) + \frac{ n(m) \log(dm)}{\epsilon^2}
\]
Hence, if $\ell:\reals^d\times\cy\to \reals$ is $L$-Lipschitz w.r.t. $\|\cdot\|_\infty$ and $B$-bounded, then for any distribution $\cd$ on $\cx\times\cy$
\[
\E_{S\sim\cd^m}\rep_\cd(S,\ch) \lesssim   \frac{(L+B)\sqrt{n_s(m) + n(m)\log(dm)} }{\sqrt{m}} \log(m)
\]
Furthermore, with probability at least $1-\delta$,
\[
\rep_\cd(S,\ch) \lesssim   \frac{(L+B)\sqrt{n_s(m) + n(m)\log(dm)} }{\sqrt{m}} \log(m) + B\sqrt{\frac{2\ln\left(2/\delta\right)}{m}}
\]
\end{lemma}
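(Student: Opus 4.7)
The plan is to adapt the argument of Lemma \ref{lem:adl_to_cov_intro}, with two additions needed to handle (i) the presence of a seed of length $n_s(m)$ and (ii) the fact that the non-seed encoding length is now random, bounded only in expectation. First, fix $A\subset\cx$ with $|A|\le m$ and invoke the hypothesis to obtain a $(1,n_s(m),n(m))$-compressor for $\ch|_A$. Averaging $\lceil 16\epsilon^{-2}\rceil$ independent non-seed encodings under a single shared seed boosts this, exactly as in the simplified setting, to an $(\epsilon/4,\, n_s(m),\, n(m)\lceil 16\epsilon^{-2}\rceil)$-compressor.

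Next, perform a median trick in the spirit of Lemma \ref{lem:adl_to_cov_intro}. Set $K=\lceil\log(8dm)\rceil$, draw $K$ independent non-seed encodings under a single shared seed, decode them, and take the coordinate-wise median of the resulting functions. By Lemma \ref{lem:median} applied with deviation parameter $k=4$ (since each encoding yields an $(\epsilon/4)$-estimator and we want deviation at most $\epsilon$), for each fixed pair $(x,j)\in A\times[d]$ the median deviates by more than $\epsilon$ with probability at most $(2/4)^K = 2^{-K}$. A union bound over the at most $dm$ such pairs bounds the aggregate median-failure probability by $dm\cdot 2^{-K}\le 1/8$.

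To handle the random length, note that the seed contributes a deterministic $n_s(m)$ bits, and the $K$ non-seed encodings have combined expected length $K\cdot n(m)\lceil 16\epsilon^{-2}\rceil$; by Markov, the combined non-seed length exceeds $8K\cdot n(m)\lceil 16\epsilon^{-2}\rceil$ with probability at most $1/8$. A union bound over the two failure events yields probability at least $3/4>0$ that the realization simultaneously (a) is $\epsilon$-accurate in $\|\cdot\|_\infty$ at every point of $A$ and (b) has total description length at most $T := n_s(m) + O\!\left(n(m)\log(dm)/\epsilon^2\right)$. Merging the seed and the $K$ non-seed bracketed strings as children of a single root gives a bracketed string of length $\le T$, so the number of admissible outputs is at most $|\cs_T|\le 32^T$. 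Taking logarithms gives the claimed bound on $\log N_\infty(\ch,m,\epsilon)$, and the representativeness bounds then follow from Lemma \ref{lem:cover_to_gen} applied with $n = n_s(m) + n(m)\log(dm)$ (after rescaling $\epsilon$).

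The main obstacle is step three: in Lemma \ref{lem:adl_to_cov_intro} the encoding length was deterministic, so only the median event had to succeed on a single draw, whereas here the Markov step introduces a second failure mode that must be combined with the median failure without blowing up $K$. Calibrating both the Markov constant and the median margin to be $O(1)$ keeps the combined failure probability strictly below $1$ while preserving the $\log(dm)$ factor in the final count; a tighter requirement on the length (e.g.\ within a $1+o(1)$ factor of expectation) would have degraded the rate.
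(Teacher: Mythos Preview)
Your proposal is correct and follows essentially the same approach that the paper intends; the paper does not spell out the argument but simply says the proof is ``similar to the proofs of lemmas \ref{lem:adl_to_cov_1d_intro} and \ref{lem:adl_to_cov_intro},'' and your write-up is precisely the natural adaptation: share the deterministic seed across all draws, boost by averaging to an $(\epsilon/4)$-estimator, apply the coordinate-wise median over $K=\Theta(\log(dm))$ independent draws, use Markov to truncate the random encoding length, and count the resulting bracketed strings via $|\cs_T|\le 32^T$. The only minor remark is that the closing phrase ``after rescaling $\epsilon$'' is unnecessary: since $\epsilon\le 1$ you have $n_s(m)\le n_s(m)/\epsilon^2$, so the covering bound already has the form $n/\epsilon^2$ with $n\lesssim n_s(m)+n(m)\log(dm)$, and Lemma~\ref{lem:cover_to_gen} applies directly.
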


\subsection{Linear Operations}

\begin{lemma}\label{lem:adl_lin_1}
Let $\ch_1,\ch_2$ be classes of functions from $\cx$ to $\reals^d$ with approximate description length of $(n^1_s(m),n^1(m))$ and $(n^2_s(m),n^2(m))$. Then $\ch_1 + \ch_2$ has approximate description length of $(n^1_s(m)+ n^2_s(m),2n^1(m) + 2n^2(m))$
\end{lemma}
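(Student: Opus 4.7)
Fix a set $A \subset \cx$ with $|A| \le m$, and let $\cc_i = (E^i_s, E^i, D^i, \Omega_i, \mu_i)$ be a $(1, n^i_s(m), n^i(m))$-compressor for $\ch_i|_A$ for $i = 1, 2$. Given $h \in (\ch_1 + \ch_2)|_A$, fix once and for all a decomposition $h = h_1 + h_2$ with $h_i \in \ch_i$. I will construct a $(1, n^1_s(m) + n^2_s(m), 2 n^1(m) + 2 n^2(m))$-compressor for $\ch_1 + \ch_2$ on $A$ by (i) using the two seeds in parallel, and (ii) drawing \emph{two} independent random encodings from each $\cc_i$ in order to halve the variance before summing.

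Concretely, the seed for $h$ is the bracketed string obtained by placing $E^1_s(h_1)$ and $E^2_s(h_2)$ as the two children of a fresh root; its length is $n^1_s(m) + n^2_s(m)$. The probability space is $\Omega = \Omega_1 \times \Omega_1 \times \Omega_2 \times \Omega_2$ with the product measure, and given $\omega = (\omega_1, \omega'_1, \omega_2, \omega'_2)$ the random encoding is the bracketed concatenation of $E^1(\omega_1, h_1), E^1(\omega'_1, h_1), E^2(\omega_2, h_2), E^2(\omega'_2, h_2)$; its expected length is $2 n^1(m) + 2 n^2(m)$. Decoding produces
\[
\hat h(x) \;=\; \tfrac{1}{2}\bigl(X_1(x) + X'_1(x)\bigr) + \tfrac{1}{2}\bigl(X_2(x) + X'_2(x)\bigr),
\]
where $X_i = D^i(E^i_s(h_i), E^i(\omega_i, h_i))$ and $X'_i = D^i(E^i_s(h_i), E^i(\omega'_i, h_i))$.

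It remains to check that for each $x \in A$, $\hat h(x)$ is a $1$-estimator of $h(x)$. Unbiasedness is immediate from linearity: $\E X_i(x) = \E X'_i(x) = h_i(x)$, so $\E \hat h(x) = h_1(x) + h_2(x) = h(x)$. For the variance bound, fix $\bu \in \sphere^{d-1}$ and write $Y_i = \inner{\bu, X_i(x) - h_i(x)}$ and similarly $Y'_i$. By definition of a $1$-estimator, $\E Y_i^2, \E (Y'_i)^2 \le 1$; by independence of the four random draws (and the fact that the seeds are deterministic functions of $h$), all the $Y, Y'$ variables are independent. Hence
\[
\E \inner{\bu, \hat h(x) - h(x)}^2 = \var\!\left(\tfrac{Y_1 + Y'_1}{2} + \tfrac{Y_2 + Y'_2}{2}\right) \le \tfrac{1}{4}(1+1) + \tfrac{1}{4}(1+1) = 1,
\]
which is the required bound.

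The only mild subtlety is purely bookkeeping: we must confirm that the encoding/decoding really fits the compressor framework with bracketed strings, i.e.\ that concatenating the $E^i$-outputs as children of a new root yields a bracketed string whose length is the sum and that $D$ can unambiguously split it back into four pieces. Both are immediate from the definition of bracketed strings, since each child subtree is itself a well-formed bracketed string. There is no hard analytic step; the essential idea is the $2\times$ oversampling that converts the naive $\sqrt{2}$-estimator $X_1 + X_2$ into a $1$-estimator at the cost of doubling only the random (not the seed) part of the encoding.
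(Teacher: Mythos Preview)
Your proof is correct and is precisely the natural argument: concatenate the two seeds, take two independent random encodings from each compressor so that averaging halves each variance, and then use independence to add the variances rather than the standard deviations. The paper does not actually supply a proof of this lemma (it is stated without proof in Section~4.1), so there is nothing to compare against; your write-up is exactly what the authors presumably had in mind.
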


\begin{lemma}\label{lem:adl_lin_2}
Let $\ch$ be a class of functions from $\cx$ to $\reals^d$ with approximate description length of $(n_s(m),n(m))$. Let $A$ be $d_2\times d_1$ matrix.
Then $A\circ \ch_1$ has approximate description length $ \left(n_s(m), \left\lceil \|A\|^2\right\rceil  n(m)\right)$
\end{lemma}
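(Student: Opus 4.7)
The plan is to start from the given $(1, n_s(m), n(m))$-compressor $\cc = (E_s, E, D, \Omega, \mu)$ for $\ch|_A$ (for any $A\subset\cx$ of size $\le m$) and show that post-composing with the matrix $A$, together with averaging $k = \lceil \|A\|^2 \rceil$ independent draws of the payload, yields the required compressor for $A\circ\ch|_A$.

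First, I would apply Lemma \ref{lem:compose_linear_intro} pointwise: since $A$ has spectral norm $\|A\|$, whenever $X$ is a $\sigma$-estimator of $\x\in\reals^{d_1}$, the vector $AX$ is an $(\|A\|\sigma)$-estimator of $A\x$. Consequently, replacing the decoder $D$ by $A\circ D$ immediately produces a $(\|A\|, n_s(m), n(m))$-compressor for $A\circ\ch|_A$, but with noise $\|A\|$ instead of $1$.

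To bring the noise level back down to $1$, I would use the independent-averaging trick that appeared already in the introductory section. Set $k = \lceil \|A\|^2 \rceil$ and, given $h\in\ch|_A$, keep the same seed $E_s(h)$ (which depends only on $h$, not on $\omega$) while drawing $k$ i.i.d.\ payloads $E(\omega_1, h),\ldots, E(\omega_k, h)$ with $\omega_i\sim\mu$. The new decoder, on input $(E_s(h), E(\omega_1,h),\ldots,E(\omega_k,h))$, recovers $X_i = D(E_s(h), E(\omega_i, h))$ and returns $Y(x) = \tfrac{1}{k}\sum_{i=1}^k A X_i(x)$. Because the $\omega_i$'s are independent, the summands $AX_i(x)$ are independent $(\|A\|)$-estimators of $(Ah)(x)$, so $Y(x)$ is a $(\|A\|/\sqrt{k})$-estimator, hence a $1$-estimator of $(Ah)(x)$ by the choice of $k$.

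For the description cost, the seed length is unchanged at $n_s(m)$, and the total expected payload length is $k\cdot n(m) = \lceil\|A\|^2\rceil n(m)$ by linearity of expectation; concatenating the $k$ payload strings into a single bracketed string is exactly the situation the bracketed-string formalism is designed to handle, so no extra asymptotic cost is incurred. The only delicate point worth highlighting is precisely why the seed mechanism is needed here: without a shared seed, running the base compressor $k$ independent times would multiply $n_s(m)$ by $k$ as well, giving a worse bound of $(\lceil\|A\|^2\rceil n_s(m), \lceil\|A\|^2\rceil n(m))$; sharing the seed across the $k$ independent runs is what allows the overhead to be additive in $n_s$ and only multiplicative in $n$, matching the statement of the lemma.
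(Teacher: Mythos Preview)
Your proposal is correct and matches the paper's intended (though unwritten) argument: the paper states the lemma without proof, but the averaging-with-shared-seed mechanism you invoke is exactly what the paper sets up just before Lemma~\ref{lem:adl_to_cov_1d}, where it notes that a $(1,n_s(m),n(m))$-compressor yields an $(\epsilon,n_s(m),n(m)\lceil\epsilon^{-2}\rceil)$-compressor; combining this with Lemma~\ref{lem:compose_linear_intro} gives the result just as you describe. Your remark that the seed mechanism is precisely what keeps $n_s(m)$ from being multiplied by $\lceil\|A\|^2\rceil$ is the right observation and is the whole point of separating seed from payload.
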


\begin{definition}
Denote by $\cl_{d_1,d_2,r,R}$ the class of all $d_2\times d_1$ matrices of spectral norm at most $r$ and Frobenius norm at most $R$.
\end{definition}

\begin{lemma}\label{lem:adl_lin_3}
Let $\ch$ be a class of functions from $\cx$ to $\reals^{d_1}$ with approximate description length $(n_s(m), n(m))$. Assume furthermore that for any $x\in \cx$ and $h\in\ch$  we have that $\|h(x)\|\le B$.
Then, $\cl_{d_1,d_2,r,R}\circ\ch$ has approximate description length
\[
\left(n_s(m), n(m)O(r^2+1) + O\left((d_1 + B^2)(R^2 + 1)\log(Rd_1d_2 + 1)\right)\right)
\]
\end{lemma}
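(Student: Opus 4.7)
The plan is to construct a compressor for $\cl_{d_1,d_2,r,R}\circ\ch$ by combining, with independent randomness, (a) an amplified version of $\ch$'s given compressor and (b) a random matrix sketch of $W$, in the spirit of Theorem~\ref{thm:linear_intro} and Lemma~\ref{lem:sketch}. Concretely, given a target $Wh$ with $h\in\ch$ and $W\in\cl_{d_1,d_2,r,R}$, I would (i) invoke the $(1,n_s(m),n(m))$-compressor for $\ch|_A$, reusing its seed but drawing $k_1\asymp r^2+1$ independent non-seed encodings and averaging, to obtain an estimator $\hat h$ of $h$ with $\sigma_h^2\le\frac{1}{3(r^2+1)}$, and (ii) independently produce a random $k_2$-sketch $\hat W$ of $W$, viewed as a vector in $\reals^{d_1d_2}$ of Euclidean norm $\le R$, with $k_2\asymp (B^2+d_1)(R^2+1)$, so that Lemma~\ref{lem:sketch} gives $\sigma_W^2\le \frac{1/4+2R^2}{k_2}\le \frac{1}{3(B^2+d_1)}$. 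The decoder simply returns $\hat W\hat h$. The seed length is inherited from $\ch$'s compressor ($n_s(m)$ bits), while the non-seed encoding has expected length $k_1n(m)+O(k_2\log(Rd_1d_2+1))$, since each elementary sketch of $W$ takes $O(\log(d_1d_2))$ bits for its index and $O(\log(Rd_1d_2+1))$ bits for its value; this matches the claimed bound.

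The crux of the argument is checking that $\hat W\hat h(x)$ is a $1$-estimator of $Wh(x)$. Unbiasedness is immediate from independence of $\hat W$ and $\hat h$. For the variance, the natural decomposition at a fixed $\bu\in\sphere^{d_2-1}$ is
\[
\inner{\bu,\hat W\hat h(x)-Wh(x)} = \inner{(\hat W-W)^T\bu,\hat h(x)} + \inner{W^T\bu,\hat h(x)-h(x)},
\]
and the cross term vanishes in expectation because $\E_{\hat W}[\hat W-W]=0$ and $\hat h$ is independent of $\hat W$. The second squared term is bounded by $\sigma_h^2\|W^T\bu\|^2\le \sigma_h^2 r^2\le 1/3$. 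For the first, I would rewrite it as $\inner{\hat W-W,\bu\hat h(x)^T}$ and bound it, using the $\sigma_W$-estimator property against the rank-one $d_2\times d_1$ matrix $\bu\hat h(x)^T$ of Frobenius norm $\|\hat h(x)\|$, by $\sigma_W^2\|\hat h(x)\|^2$; taking expectation over $\hat h$ gives $\sigma_W^2\,\E\|\hat h(x)\|^2\le \sigma_W^2(\|h(x)\|^2+d_1\sigma_h^2)\le \sigma_W^2(B^2+d_1)\le 1/3$. Adding the two contributions keeps the total variance strictly below $1$.

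The main obstacle is precisely this variance bookkeeping: the general product-of-estimators statement (Lemma~\ref{lem:estimator_aritmetics}(3)) is written for pointwise products in $\reals^d$, not matrix--vector products, so I must open up the decomposition by hand and exploit the rank-one structure of $\bu\hat h(x)^T$ together with the only per-coordinate (not Frobenius) variance bound available for $\hat h$. This is what forces the $(B^2+d_1)$ factor in $k_2$: the bound $\E\|\hat h(x)\|^2\le B^2+d_1\sigma_h^2$ is the best uniform estimate one can make from a $\sigma_h$-estimator alone, and it is this term that multiplies $(R^2+1)$ in the final bit count. The remaining ingredients---choosing $k_1$ and $k_2$ to balance the two variance contributions against $1$, and counting bits per elementary sketch---are routine.
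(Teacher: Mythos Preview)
Your proposal is correct and essentially identical to the paper's proof: the same decomposition $\hat W\hat h - Wh = (\hat W - W)\hat h + W(\hat h - h)$, the same use of Lemma~\ref{lem:sketch} against the rank-one matrix $\hat h(x)\bu^T$, the same bound $\E\|\hat h(x)\|^2\le B^2 + d_1\sigma_h^2$, and the same choices $k_1\asymp r^2+1$, $k_2\asymp (d_1+B^2)(R^2+1)$. The only differences are cosmetic (constants in $k_1,k_2$ and the order of $\bu$ and $\hat h(x)$ in the rank-one matrix).
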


\begin{proof}
Fix as set $A\subset\cx $ of size $m$.
We will construct a compressor to $\cl_{d_1,d_2,r,R}\circ\ch$ as follows. Given $h\in\ch$ and $W\in  \cl_{d_1,d_2,r,R}$ we first pay a seed cost  $n_s(m)$ to use $\ch$'s compressor. 
Then, we use $\ch$'s compressor to generate a $\sqrt{\frac{1}{k_1}}$-estimator $\hat h$ of $h$, at the cost of $k_1n(m)$ bits. Then, we take $\hat W$ to be a $k_2$-sketch of $W$, at the costs of $k_2O\left(\log\left(d_1 d_2 R + 1\right)\right)$ bits.
Finally, we output the estimator $\hat h \circ \hat W$. Fix $a\in A$. We must show that $\hat WX:= \hat W\hat h(a)$ is a $1$-estimator of $\x = h(a)$. Indeed, for $\bu\in\sphere^{d_2-1}$ we have,
\scriptsize
\begin{eqnarray*}
\E_X\E_{\hat W} \inner{\bu, \hat W X- W\x}^2 &=&\E_X\E_{\hat W} \inner{\bu, \hat W X- WX}^2 + 2 \inner{\bu, \hat W X- WX}\inner{\bu,  W X- W\x} +  \inner{\bu,  W X- W\x}^2
\\
&=&\E_X\E_{\hat W} \inner{\bu, \hat W X- WX}^2 + 2 \E_X\inner{\bu, \stackrel{=0}{\overbrace{\E_{\hat W}\left[\hat W - W\right]}}X}\inner{\bu,  W X- W\x} +  \E_X\E_{\hat W}\inner{\bu,  W X- W\x}^2
\\
&=&\E_X\E_{\hat W} \inner{\bu, \hat W X- WX}^2 +  \inner{\bu,  W X- W\x}^2
\\
&=&\E_X\E_{\hat W} \inner{\hat W - W,   X\bu^T}^2 +  \inner{W^T\bu,   X- \x}^2
\\
&\stackrel{\text{Lemma }\ref{lem:sketch}}{\le} &\frac{2\|W\|^2_F + 1}{k_2}\E_X \|X\|^2  + \frac{1}{k_1}\|W\bu\|^2
\\
& \stackrel{(1)}{\le} & \frac{2\|W\|^2_F + 1}{k_2}\left[ \E_X \|X - \x\|^2 + \|\x\|^2 \right]+ \frac{1}{k_1}\|W\|^2
\\
&\stackrel{(2)}{\le}  &  \frac{2\|W\|^2_F + 1}{k_2}\left[ \frac{1}{k_1} d_1 + \|\x\|^2 \right]+ \frac{1}{k_1}\|W\|^2
\\
&\le &  \frac{2R^2 + 1}{k_2}\left[ \frac{1}{k_1} d_1 + B^2 \right]+ \frac{1}{k_1}r^2
\end{eqnarray*}
\normalsize
\begin{enumerate}
\item[(1)]  We have
\begin{eqnarray*}
\E_X \|X - \x\|^2 &=& \E_X \|X \|^2 - 2\inner{X,\x} +  \| \x\|^2 
\\& =&
 \E_X \|X \|^2 - 2\inner{\E X,\x} +  \| \x\|^2 
 \\&=&
  \E_X \|X \|^2 -   \| \x\|^2
\end{eqnarray*}
\item[(2)]  We have
\begin{eqnarray*}
\E_X \|X - \x\|^2 &=& \sum_{i=1}^{d_1} \E (X_i - x_i)^2
\\& =&
 \sum_{i=1}^{d_1} \E \inner{X - \x,\be_i}^2
 \\&\le &
   \sum_{i=1}^{d_1} \frac{1}{k_1} = \frac{d_1}{k_1}
\end{eqnarray*}
\end{enumerate}
Finally, by choosing $k_1 = \left\lceil2r^2\right\rceil + 1$ and $k_2 = 2(d_1 + B^2)(2R^2 + 1)$ we get the result.
\end{proof}

\subsection{Non-Linear Operations}
\begin{lemma}\label{lem:extimator_of_omposition}
Suppose that $\{X_n\}_{n=1}^\infty$ are independent $\sigma$-estimators to $\x\in\reals^d$.
Let $\rho(\bt) =  \sum_{n=0}^\infty \ba_n \bt^{ n}$. Let $U = \ba_0 +  \sum_{n=1}^\infty \hat \ba_n Y_n$ where $Y_n = \prod_{i=1}^nX_i$ and $\hat \ba_n = \frac{\ba_n}{p_1}$ w.p. $p_i$ and $0$ otherwise. Then $U$ is $\sigma$'-estimator of $\rho(\x)$ with $\sigma' = \sum_{n=1}^\infty \sqrt{\frac{\|\ba_n\|_\infty^{2}}{p_n}\left(\left(\sigma^2 +\|\x\|_\infty^2\right) ^{n} + (1-p_n)d\|\x\|_\infty^{2n}\right)} $.
\end{lemma}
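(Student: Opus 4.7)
The plan is to verify unbiasedness of $U$ first, and then bound the variance term-by-term via Minkowski's inequality. For unbiasedness I would exploit that each $\hat\ba_n$ is independent of $\{X_j\}$ (the natural interpretation of the construction), so $\E[\hat\ba_n Y_n] = \E\hat\ba_n \cdot \E Y_n$ coordinate-wise. Since $\E\hat\ba_n = p_n\cdot(\ba_n/p_n) = \ba_n$, and each coordinate $(Y_n)_i = \prod_{j=1}^n X_{j,i}$ is a product of independent scalars with common mean $x_i$, one gets $\E Y_n = \x^n$ (coordinate-wise $n$-th power). Hence $\E U = \ba_0 + \sum_{n\ge 1}\ba_n\x^n = \rho(\x)$.

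For the variance, fix $\bu\in\sphere^{d-1}$ and set $Z_n = \inner{\bu,\hat\ba_n Y_n - \ba_n\x^n}$, so the $Z_n$ are mean-zero and $\inner{\bu,U-\rho(\x)} = \sum_{n\ge 1} Z_n$. Minkowski's inequality in $L^2(\Omega)$ — which does not require independence, and therefore is insensitive to the fact that the $Y_n$'s share underlying $X_j$'s — gives
\[
\sqrt{\E\inner{\bu,U-\rho(\x)}^2} \;\le\; \sum_{n\ge 1}\sqrt{\E Z_n^2}.
\]
It then suffices to bound each $\sqrt{\E Z_n^2}$ by the corresponding term of $\sigma'$.

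For a fixed $n$, introduce $\bv = \bu\ba_n$ (coordinate-wise product), noting $\|\bv\|^2 \le \|\ba_n\|_\infty^2\|\bu\|^2 = \|\ba_n\|_\infty^2$. Since $\hat\ba_n$ equals $\ba_n/p_n$ on an event of probability $p_n$ (independent of $Y_n$) and $0$ otherwise,
\[
\E\inner{\bu,\hat\ba_n Y_n}^2 \;=\; p_n\cdot\tfrac{1}{p_n^2}\E\inner{\bv,Y_n}^2 \;=\; \tfrac{1}{p_n}\E\inner{\bv,Y_n}^2.
\]
Combining $\E\inner{\bv,Y_n}^2 = \E\inner{\bv,Y_n-\x^n}^2 + \inner{\bv,\x^n}^2$ with $\E[\hat\ba_n Y_n] = \ba_n\x^n$, a short expansion yields
\[
\E Z_n^2 \;=\; \tfrac{1}{p_n}\E\inner{\bv,Y_n-\x^n}^2 \;+\; \tfrac{1-p_n}{p_n}\inner{\bv,\x^n}^2.
\]
Lemma \ref{lem:estimator_aritmetics}(3) bounds the first term by $\|\bv\|^2\bigl[(\sigma^2+\|\x\|_\infty^2)^n - \|\x\|_\infty^{2n}\bigr]$, while Cauchy--Schwarz gives $\inner{\bv,\x^n}^2 \le \|\bv\|^2\,d\,\|\x\|_\infty^{2n}$. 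Substituting, using $\|\bv\|^2\le\|\ba_n\|_\infty^2$, and discarding the negative $-\|\x\|_\infty^{2n}$ contribution reproduces exactly the expression under the square root in the lemma statement.

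The main obstacle is the joint randomness of $\hat\ba_n$: because the Bernoulli event sets all of its coordinates simultaneously rather than independently, one has $\E[\hat a_{n,i}\hat a_{n,j}] = a_{n,i}a_{n,j}/p_n$ instead of $a_{n,i}a_{n,j}/p_n^2$, and this is precisely what produces the clean $1/p_n$ factor in the ``variance'' term together with the $(1-p_n)/p_n$ correction in the ``mean'' term. $L^2$ convergence of the series defining $U$ is free from the same Minkowski bound whenever $\sigma'<\infty$, so no separate summability argument is required.
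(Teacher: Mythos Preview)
Your argument is correct and essentially identical to the paper's: the paper invokes Lemma~\ref{lem:estimator_aritmetics}(2) to reduce to the per-term bound (which is exactly your Minkowski step), and then computes $\var(\inner{\bu,\hat\ba_nY_n})$ via the same expansion you wrote, bounding the product term by Lemma~\ref{lem:estimator_aritmetics}(3) and the residual $\inner{\bv,\x^n}^2$ term by Cauchy--Schwarz. The only cosmetic difference is that you keep the $-\|\x\|_\infty^{2n}$ from Lemma~\ref{lem:estimator_aritmetics}(3) and discard it at the end, whereas the paper drops it one line earlier.
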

\begin{remark}
In particular, if  $\|\ba_n\|_\infty\le B^n$, $\sqrt{\sigma^2 + \|\x\|^2_\infty}\le \frac{1}{6B}$ and $p_n = \begin{cases}1 & n\le \left\lceil\frac{\log_3(d)}{2}\right\rceil \\ 4^{-n}&\text{otherwise}\end{cases}$, We have $\sigma'\le 1$ and $\E\max\{n : \hat \ba_n\ne 0\}\le  \frac{\log_3(d) + 4}{2}$. Indeed,
\scriptsize
\begin{eqnarray*}
 \sum_{n=1}^\infty \sqrt{\frac{\|\ba_n\|_\infty^{2}}{p_n}\left(\left(\sigma^2 +\|\x\|_\infty^2\right) ^{n} + (1-p_n)d\|\x\|_\infty^{2n}\right)}
 &\le &\sum_{n=1}^\infty \sqrt{\frac{\|\ba_n\|_\infty^{2}}{p_n}\left(\sigma^2 +\|\x\|_\infty^2\right) ^{n}} + \sum_{n=1}^\infty\sqrt{\frac{\|\ba_n\|_\infty^{2}}{p_n}(1-p_n)d\|\x\|_\infty^{2n}}
 \\
&\le &\sum_{n=1}^\infty \left(2B\right)^n\sqrt{\left(\sigma^2 +\|\x\|_\infty^2\right) ^{n}} + \sqrt{d}\sum_{n=\left\lceil \frac{\log_3(d)}{2}\right\rceil +1}^\infty \left(2B\right)^n \|\x\|^n_\infty
\\
&\le & 
\sum_{n=1}^\infty \left(\frac{1}{3}\right)^{n} + \sqrt{d}\sum_{n=\left\lceil \frac{\log_3(d)}{2}\right\rceil+1}^\infty \left(\frac{1}{3}\right)^{n}
\\
&\le & 
\sum_{n=1}^\infty \left(\frac{1}{3}\right)^{n} + \sum_{n=1}^\infty \left(\frac{1}{3}\right)^{n} = 1
\end{eqnarray*}
\normalsize
and
\begin{eqnarray*}
\E\max\{n : \hat \ba_n\ne 0\}\le\left\lceil \frac{\log_3(d)}{2}\right\rceil + \sum_{n = \left\lceil \frac{\log_3(d)}{2}\right\rceil+1}^\infty 4^{-n}n\le \left\lceil \frac{\log_3(d)}{2}\right\rceil  + 1
\end{eqnarray*}
\end{remark}

\begin{proof}
By lemma \ref{lem:estimator_aritmetics} it is enough to show that for all $n$, $\hat \ba_nY_n$ is a $\sqrt{\frac{\|\ba_n\|_\infty^{2}}{p_n}\left(\left(\sigma^2 +\|\x\|_\infty^2\right) ^{n} + (1-p_n)d\|\x\|_\infty^{2n}\right)} $-estimator of $\ba_n\x^n$. Indeed,
\scriptsize
\begin{eqnarray*}
 \var \left( \inner{\bu, \hat \ba_n Y_n}\right) &=& \E\left(\inner{\bu, \hat \ba_n Y_n} - \inner{\bu,  \ba_n \x^n}\right)^2
 \\
&=& p_n\E\left(\inner{\bu, \frac{ \ba_n}{p_n} Y_n} - \inner{\bu,  \ba_n \x^n}\right)^2 + (1-p_n)\inner{\bu,  \ba_n \x^n}^2
\\
&=& \frac{1}{p_n}\E\inner{\bu,  \ba_n Y_n}^2 - 2\E\inner{\bu,  \ba_n Y_n}\inner{\bu,  \ba_n \x^n}  + p_n\inner{\bu,  \ba_n \x^n}^2 +(1-p_n)\inner{\bu,  \ba_n \x^n}^2
\\
&=& \frac{1}{p_n}\E\inner{\bu,  \ba_n Y_n}^2 -\inner{\bu,  \ba_n \x^n}^2
\\
&=& \frac{1}{p_n}\E\left(\inner{ \ba_n\bu,  Y_n}^2 -\inner{ \ba_n\bu,   \x^n}^2\right) + \frac{1-p_n }{p_n}\inner{\bu,  \ba_n \x^n}^2
\\
&\stackrel{\text{lemma }\ref{lem:estimator_aritmetics}}{\le} &\frac{\|\ba_n\bu\|_2^{2}}{p_n}\left(\left(\sigma^2 +\|\x\|_\infty^2\right) ^{n} + (1-p_n)\|\x^n\|_2^{2}\right) 
\\
&\le &\frac{\|\ba_n\|_\infty^{2}}{p_n}\left(\left(\sigma^2 +\|\x\|_\infty^2\right) ^{n} + (1-p_n)d\|\x\|_\infty^{2n}\right) 
\end{eqnarray*}
\normalsize
\end{proof}

\begin{definition}
A function $f:\reals\to\reals$ is {\em $B$-strongly-bounded} if for all $n\ge 1$, $\|f^{(n)}\|_\infty \le n!B^n $.  Likewise, $f$ is {\em strongly-bounded} if it is $B$-strongly-bounded for some $B$
\end{definition}
We note that
\begin{lemma}
If $f$ is $B$-strongly-bounded then $f$ is analytic and its Taylor coefficients around any point are bounded by $B^n$ 
\end{lemma}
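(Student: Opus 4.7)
The plan is to directly use Taylor's theorem with Lagrange remainder. Fix any $x_0 \in \reals$. By definition of $B$-strongly-bounded, $|f^{(n)}(x_0)| \le n! B^n$ for every $n \ge 1$, so the $n$-th Taylor coefficient $a_n := f^{(n)}(x_0)/n!$ satisfies $|a_n| \le B^n$, which immediately gives the coefficient bound.

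For analyticity, the remaining step is to show that the Taylor series actually converges to $f$ in a neighborhood of $x_0$. Taylor's theorem yields, for each $N \ge 1$ and each $x \in \reals$, some $\xi$ between $x_0$ and $x$ with
\[
f(x) - \sum_{k=0}^{N-1} a_k (x-x_0)^k = \frac{f^{(N)}(\xi)}{N!}(x-x_0)^N.
\]
Using $|f^{(N)}(\xi)| \le N! B^N$ from the hypothesis, the remainder is bounded by $(B|x-x_0|)^N$, which tends to $0$ as $N \to \infty$ whenever $|x-x_0| < 1/B$. Hence on the open interval of radius $1/B$ around $x_0$, the Taylor series converges to $f$; since $x_0$ was arbitrary, $f$ is (real) analytic on all of $\reals$.

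No real obstacle is expected: the only thing to be careful about is that the hypothesis is stated for $n \ge 1$, so the conclusion about coefficients should be read as a bound on $a_n$ for $n \ge 1$ (the value $a_0 = f(x_0)$ plays no role in either the coefficient claim or the remainder estimate, which starts from the $N$-th derivative). The argument also shows that the radius of convergence is uniform in $x_0$ (namely at least $1/B$), a fact that is mildly stronger than what the lemma asserts but comes for free from the calculation.
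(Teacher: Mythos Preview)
Your argument is correct. The paper itself states this lemma without proof (it is introduced with ``We note that'' and no proof environment follows), so there is no alternative approach to compare against; your use of the coefficient identity $a_n=f^{(n)}(x_0)/n!$ together with the Lagrange remainder bound is precisely the standard justification the paper is implicitly taking for granted.
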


The following lemma gives an example to a strongly bounded sigmoid function, as well as a strongly bounded smoothened version  of the ReLU (see figure \ref{fig:func}).

\begin{figure}\caption{The functions $ \ln\left(1 + e^x\right)$ and $\frac{e^x}{1 + e^x}$}\label{fig:func}
\begin{center}
\includegraphics[scale=0.4]{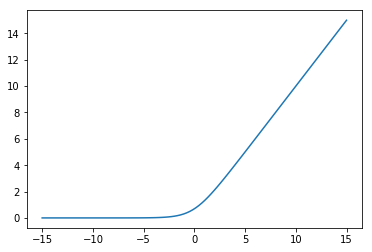}
\includegraphics[scale=0.4]{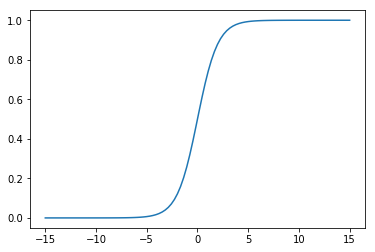}
\end{center}
\end{figure}

\begin{lemma}
The functions $ \ln\left(1 + e^x\right)$ and $\frac{e^x}{1 + e^x}$ are strongly-bounded
\end{lemma}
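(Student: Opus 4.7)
The plan is to exploit the fact that both functions extend meromorphically to the complex plane with well-separated singularities. Let $g(x) = \frac{e^x}{1+e^x}$ and $f(x) = \ln(1+e^x)$. Since $f'(x) = g(x)$, one has $f^{(n)} = g^{(n-1)}$ for every $n \geq 1$, so up to a trivial re-indexing it suffices to show that $\|g^{(n)}\|_\infty \leq n!\,B^n$ for some constant $B$ and every $n \geq 0$.

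Write $g(z) = \frac{1}{1+e^{-z}}$. This is meromorphic on $\complex$ with poles precisely at $z = i\pi(2k+1)$, $k\in\integers$, and is holomorphic on the strip $|\mathrm{Im}(z)| < \pi$. Fix any $r\in(0,\pi)$; the first step is to establish the uniform bound $M_r := \sup_{|\mathrm{Im}(z)|\le r} |g(z)| < \infty$. For $z = x+iy$ with $|y|\le r$, a direct calculation gives
\[
|1+e^{-z}|^2 = (1+e^{-x}\cos y)^2 + (e^{-x}\sin y)^2 = 1 + 2e^{-x}\cos y + e^{-2x}.
\]
Minimizing this quadratic in $u = e^{-x} > 0$: if $\cos y \geq 0$ the infimum is $1$; if $\cos y < 0$ the minimum is attained at $u = -\cos y$ and equals $\sin^2 y \geq \sin^2 r$ (since $|\sin|$ is decreasing on $[\pi/2,\pi]$). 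Either way $|1+e^{-z}| \geq \min(1, |\sin r|) > 0$, giving $M_r \leq 1/\min(1,\sin r)$.

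With $M_r$ in hand, applying Cauchy's integral formula on the circle $|z-x|=r$ around any $x\in\reals$ (which lies inside the strip of holomorphy) yields the standard Cauchy estimate
\[
|g^{(n)}(x)| \leq \frac{n!\, M_r}{r^n} \qquad \text{for all } x\in\reals \text{ and } n\geq 0.
\]
Choosing any $B \geq \max(1,M_r)/r$, this gives $\|g^{(n)}\|_\infty \leq n!\,B^n$, so $g$ is $B$-strongly-bounded. For $f$, since $f^{(n)} = g^{(n-1)}$ for $n\geq 1$, the same estimate (shifted by one) yields $\|f^{(n)}\|_\infty \leq (n-1)!\,M_r/r^{n-1} \leq n!\,B^n$ for the same choice of $B$, so $f$ is strongly-bounded as well. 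The only nontrivial step is the uniform strip bound on $|g(z)|$, which is the explicit calculation above; everything else reduces to a routine application of Cauchy's estimates.
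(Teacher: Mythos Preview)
Your proof is correct and follows essentially the same strategy as the paper: extend $g(z)=e^z/(1+e^z)$ holomorphically to a horizontal strip, bound it uniformly there, and apply Cauchy's estimates on circles of fixed radius $r$ about real centers; the result for $\ln(1+e^x)$ then follows because its derivatives are those of $g$. The only cosmetic differences are that the paper restricts to $r<\pi/2$ and bounds $|g|\le 1/\cos r$ directly, whereas you allow any $r\in(0,\pi)$ and obtain $|g|\le 1/\min(1,\sin r)$ via the quadratic minimization; both are equivalent for the purpose at hand.
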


\begin{proof}
Consider the complex function $f(z) = \frac{e^z}{1 + e^z}$. It is defined in the strip $\{z =x+iy : |y| < \pi\}$. By Cauchy integral formula, for any $r<\pi$, $a\in\reals$ and $n\ge 0$,
\[
f^{(n)}(a) = \frac{n!}{2\pi i} \int_{|z-a| = r}\frac{f(z)}{(z-a)^{n+1}}
\]
It follows that
\[
\left|f^{(n)}(a)\right| \le \frac{n!}{r^n}  \max_{|z-a| = r} |f(z)| \le \frac{n!}{r^n}  \max_{x + iy : |y| < r} |f(x+iy)|
\]
Now, if $|y| < r < \frac{\pi}{2}$, we have
\[
|f(x+iy)| = \frac{e^x}{|1 + e^{iy}e^x|} \le \frac{e^x}{|1 + \cos(y)e^x|} \le \frac{e^x}{|1 + \cos(r)e^x|} \le \frac{1}{\cos(r)}
\]
This implies that  $\frac{e^x}{1 + e^x}$ is strongly bounded. Likewise, since $\frac{e^x}{1 + e^x}$ is the derivative of $ \ln\left(1 + e^x\right)$, the function $ \ln\left(1 + e^x\right)$ is strongly bounded as well.
\end{proof}

\begin{lemma}\label{lem:adl_non_lin}
Let $\ch$ be a class of functions from $\cx$ to $\reals^{d}$ with approximate description length of $(n_s(m),n(m))$. Let $\rho:\reals\to\reals$ be $B$-strongly-bounded.
Then, $\rho\circ\ch$ has approximate description length of 
\[
\left(n_s(m) + O\left(n(m)B^2\log(md)\right),  O\left(n(m)B^2\log(d)\right) \right)
\]
\end{lemma}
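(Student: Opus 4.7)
Given the $(1, n_s(m), n(m))$-compressor $\cc = (E_s, E, D, \Omega, \mu)$ for $\ch|_A$, my plan is to build a compressor for $\rho\circ\ch|_A$ in two phases. The first, paid for in the seed, stores a deterministic centering $\tilde\x_a$ satisfying $\|\tilde\x_a - \x_a\|_\infty \le \frac{1}{12B}$ where $\x_a := h(a)$, so that the Taylor series of $\rho$ about $\tilde\x_a$ is controlled by the strong-boundedness of $\rho$. The second, paid for in the random encoding, produces a $1$-estimator of $\rho(\x_a) - \rho(\tilde\x_a)$ using Lemma~\ref{lem:extimator_of_omposition} with the exact parameter choice of the remark that follows it. The decoder will simply return $\rho(\tilde\x_a)$ plus the Taylor estimator.

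\emph{Centering phase.} Averaging $k_0 = O(B^2)$ base $1$-estimators drawn from $\cc$ yields a $\frac{1}{48B}$-estimator of $\x_a$ at cost $k_0\cdot n(m)$ bits. Repeating $k_1 = O(\log(md))$ times and taking the coordinate-wise median, Lemma~\ref{lem:median} gives a candidate $\tilde\x_a$ that is within $\frac{1}{12B}$ of $\x_a$ in each coordinate with failure probability $< \frac{1}{2md}$. A union bound over the $md$ coordinate/sample pairs shows that with probability $>\frac12$ the bound holds simultaneously for all $a\in A$, so by the probabilistic method there exists a deterministic choice of $k_0 k_1$ random encodings of $\cc$ that works. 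We put $E_s(h)$ together with these encodings into the seed of the new compressor; applying a Markov truncation to cap each encoding at $O(n(m))$ bits (replacing overlong ones by a harmless default) converts the expected-length guarantee of $E$ into a worst-case seed-length bound of $n_s(m) + O\!\left(B^2 n(m) \log(md)\right)$, while preserving the union bound up to a constant loss in success probability.

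\emph{Taylor phase.} Because $\rho$ is $B$-strongly-bounded, the lemma immediately preceding the sigmoid example implies that the coefficients of the Taylor expansion of $\rho$ around any real point are bounded by $B^n$ in absolute value. Applying this coordinate-wise around $\tilde\x_a$ gives a vector expansion $\rho(\x_a) - \rho(\tilde\x_a) = \sum_{n\ge 1} \ba_n (\x_a - \tilde\x_a)^n$ with $\|\ba_n\|_\infty \le B^n$. Drawing fresh base estimators from $\cc$ and subtracting $\tilde\x_a$ produces $1$-estimators of $\x_a - \tilde\x_a$; averaging $k_0 = O(B^2)$ of them yields a $\frac{1}{12B}$-estimator, and since $\|\x_a - \tilde\x_a\|_\infty \le \frac{1}{12B}$ we get $\sqrt{\sigma^2 + \|\x_a - \tilde\x_a\|_\infty^2} \le \frac{1}{6B}$, which is exactly the hypothesis of the remark after Lemma~\ref{lem:extimator_of_omposition}. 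Choosing $p_n = 1$ for $n \le \lceil \tfrac12 \log_3 d\rceil$ and $p_n = 4^{-n}$ otherwise, that remark delivers a $1$-estimator of $\rho(\x_a) - \rho(\tilde\x_a)$ while consuming $O(\log d)$ base estimators in expectation. Total expected random-encoding length is therefore $O\!\left(B^2 n(m) \log d\right)$, as needed.

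\emph{Main obstacle.} The one delicate point is the centering phase: we need a single, deterministic seed whose decoded content provides a uniformly good $\frac{1}{12B}$-approximation of $h$ at every $a\in A$, yet whose length is controlled by the expected-length parameter $n(m)$ rather than some unknown magnitude bound on $h$. The probabilistic method plus Lemma~\ref{lem:median} handles the uniform approximation, but one has to reconcile the expected-length guarantee for $E(\omega,h)$ with the worst-case length required of a seed; the Markov-style truncation described above is what makes this work cleanly. Everything after centering is routine calculus of estimators via Lemmas~\ref{lem:estimator_aritmetics} and~\ref{lem:extimator_of_omposition}.
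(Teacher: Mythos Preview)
Your proposal follows essentially the same route as the paper's proof: store a deterministic $\ell_\infty$-centering $\tilde h$ in the seed, then estimate $\rho\circ h - \rho\circ\tilde h$ via the Taylor construction of Lemma~\ref{lem:extimator_of_omposition} with exactly the $p_n$'s from the remark. The only packaging difference is that the paper obtains $\tilde h$ by directly invoking Lemma~\ref{lem:adl_to_cov} (an $\epsilon$-cover of $\ch|_A$ in $\ell_\infty$ of log-size $\lesssim n_s(m)+n(m)\log(md)/\epsilon^2$, so the seed is just an index into this cover), whereas you unfold that lemma inline via the median-of-averages trick.

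One point that deserves tightening is your Markov truncation. If you literally cap each \emph{individual} base encoding at $Cn(m)$ bits and replace overlong ones by a default, the averaged estimators cease to be unbiased, and controlling the median then forces either $C$ or $k_1$ to grow with $k_0$, costing an extra $B^2$ factor. The clean fix---which is also what Lemma~\ref{lem:adl_to_cov} does implicitly---is not to truncate at all: apply Markov to the \emph{total} length $\sum_{i\le k_0k_1}\len(E(\omega_i,h))$, whose expectation is $k_0k_1 n(m)$, so with probability $\ge 3/4$ it is at most $4k_0k_1 n(m)$; intersecting this with the event that the median is $\tfrac{1}{12B}$-close (probability $>1/2$) still has positive probability, and the probabilistic method hands you a deterministic seed of the required length $n_s(m)+O(B^2 n(m)\log(md))$. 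With this adjustment your argument is correct and matches the paper.
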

\begin{proof}
Fix a set $A\subset\cx$ of size $\le m$.  Let $\epsilon^2 = \sigma^2 = \frac{1}{72B^2}$ and note that $\sqrt{\sigma^2+\epsilon^2}\le\frac{1}{6B}$.
To generate a $1$-estimator to $\rho\circ h\in \rho\circ \ch$ on $A$ we first describe $\tilde h$, which forms the seed, such that $\forall i\in [m],\;\|\tilde h(x_i) - h(x_i) \|_\infty\le \epsilon$. Then, we generate $\sigma$-estimators $\hat h_1, \hat h_2, \ldots,$ to $h|_A$. Finally, we sample Bernoulli random variables $Z_1, Z_2,\ldots$ where the parameter of $Z_n$ is $p_n = \begin{cases}1 & n\le \left\lceil\frac{\log_3(d)}{2}\right\rceil \\ 4^{-n}&\text{otherwise}\end{cases}$. The final estimator is 
\[
\hat g(x) = \rho(\tilde h(x)) +  \sum_{n=1}^\infty  \frac{\rho^{(n)}(\tilde h(x))}{n!}   \frac{Z_n  }{p_n} Y_n\text{ where }Y_n = \prod_{i=1}^n\left(\hat h_i(x) - \tilde h(x)\right)
\]
By lemma \ref{lem:extimator_of_omposition} and the following remark, $\hat g$ is $1$-estimator of $\rho\circ h|_A$.

How many bits do we need in order to specify $\hat g$? 
By lemma \ref{lem:adl_to_cov} the restriction of $\ch|_A$ has  an $\epsilon$-cover, w.r.t. the $\infty$-norm, of log-size $\lesssim n_s(m) + \frac{n(m)\log(md)}{\epsilon^2}$. So the generation of the seed $\tilde h$ costs $n_s(m) + \frac{n(m)\log(md)}{\epsilon^2}$ bits. We also need to specify $N: = \max\{ n : Z_n\ne 0\}$, $Z_1,\ldots,Z_N$ and $\hat h_1,\ldots, \hat h_N$. This can be done by concatenating the descriptions of the pairs $(Z_n,\hat h_n)$ for $n=1,\ldots,N$. The bit cost of this is bounded (in expectation) by $\frac{\log_3(d)+4}{2}\left(\lceil 72B^2 \rceil n(m) + 1\right)$
\end{proof}

\section{Sample Complexity of Neural Networks}

We next utilize the tools we developed in order to analyze the sample complexity of networks. For simplicity we will focus on a standard fully connected architecture. We note that nevertheless the ADL approach is quite flexible, and can be applied to various other network architectures. This is however left for future investigation.
Fix the instance space $\cx$ to be the ball of radius $\sqrt{d}$ in $\reals^d$ (in particular  $[-1,1]^d\subset{\cx}$).
Consider the class
\[
\cn^\rho_{r,R}(d_0,\ldots,d_t) = \left\{W_t\circ\rho\circ W_{t-1}\circ\rho\ldots\circ \rho\circ W_{1} : W_i\in M_{d_{i-1}d_i} \| W_i  \|\le r , \| W_i  \|_F \le R \right\}
\]
and more generally, for matrices $W_i^0 \in M_{d_{i},d_{i-1}},\;i=1,\ldots,t$ consider 
\[
\cn^\rho_{r,R}(W^0_1,\ldots,W^0_t) = \left\{W_t\circ\rho\circ W_{t-1}\circ\rho\ldots\circ \rho\circ W_{1} : \| W_i - W_i^0 \|\le r , \| W_i - W_i^0 \|_F \le R \right\}
\]

\begin{theorem}\label{thm:nn_upper}
Fix a constants\footnote{The constant in the big-O notation will depend only on $t,r$ and $
sigma$.} $r>0$, $t>0$ and a strongly bounded activation $\sigma$.
Then, for every choice of matrices $W_i^0 \in M_{d_{i},d_{i-1}},\;i=1,\ldots,t$ with $d: = \max_{i}d_i$ and $\max_{i}\|W^0_i\| \le  r$ we have that the approximate description length of $\ch = \cn^\rho_{r,R}(W^0_1,\ldots,W^0_t)$ is
\[
\left(dR^2 O\left(\log^{t}(d)\right)\log(md), dR^2 O\left(\log^{t+1}(d)\right) \right)= \left( \tilde{O}\left(dR^2\right), \tilde{O}\left(dR^2\right)\right)
\]
In particular, if $\ell:\reals^{d_t}\times\cy\to \reals$ is bounded and Lipschitz w.r.t. $\|\cdot\|_\infty$, then for any distribution $\cd$ on $\cx\times\cy$
\[
\E_{S\sim\cd^m}\rep_\cd(S,\ch) \le  \tilde O\left( \sqrt{\frac{dR^2  }{m}}  \right)
\]
Furthermore, with probability at least $1-\delta$,
\[
\rep_\cd(S,\ch) \lesssim    \tilde O\left( \sqrt{\frac{dR^2  }{m}}  \right) + O\left(\sqrt{\frac{\ln\left(1/\delta\right)}{m}}\right)
\]
\end{theorem}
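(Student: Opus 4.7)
The plan is to establish the ADL bound by induction on the depth, invoking the compositional lemmas (\ref{lem:adl_lin_1}--\ref{lem:adl_non_lin}) one layer at a time, and then to convert to the claimed representativeness bound via Lemma \ref{lem:adl_to_cov}.

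First, I would set up the inductive invariant. Let $\cf_i$ denote the class of functions from $\cx$ to $\reals^{d_i}$ computed by the first $i$ affine-plus-activation blocks of networks in $\ch$ (so $\cf_0 = \{\mathrm{id}\}$ and $\ch = W_t \circ \rho \circ \cf_{t-1}$ after one final linear layer). The invariant to maintain is: (a) $\sup_{f \in \cf_i,\, x \in \cx}\|f(x)\|_2 \le B_i$ with $B_i = O(\sqrt{d})$, the hidden constant depending only on $t$, $r$, and the strong-boundedness constant of $\rho$; and (b) $\cf_i$ has approximate description length $(n_s^{(i)}(m), n^{(i)}(m))$ with both quantities at most $dR^2 \cdot \poly(\log(md))$. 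The base case $i=0$ is immediate: $\cf_0$ is a singleton and $\|x\|_2 \le \sqrt{d}$ on $\cx$.

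Second, for the inductive step, write $W_{i+1} = W_{i+1}^0 + \Delta_{i+1}$ with $\|\Delta_{i+1}\| \le r$ and $\|\Delta_{i+1}\|_F \le R$, so that
\[
\cf_{i+1} \;=\; \rho \circ \bigl(W_{i+1}^0 \circ \cf_i \;+\; \cl_{d_i,d_{i+1},r,R} \circ \cf_i\bigr).
\]
I apply Lemma \ref{lem:adl_lin_2} to the fixed summand (factor $O(r^2)$ on $n$, seed unchanged); Lemma \ref{lem:adl_lin_3} to the varying summand (factor $O(r^2)$ on $n$ plus an additive $O((d_i + B_i^2)(R^2+1)\log(Rd_id_{i+1}+1)) = O(dR^2\log d)$, seed unchanged); Lemma \ref{lem:adl_lin_1} to sum the two (doubles $n$, adds the seed costs); and finally Lemma \ref{lem:adl_non_lin} for the strongly bounded $\rho$ (multiplies $n$ by $O(\log d)$ with a $\rho$-dependent constant, and adds $O(n\log(md))$ to the seed). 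Updating the norm invariant uses $\|W_{i+1}\| \le \|W_{i+1}^0\| + \|\Delta_{i+1}\| \le 2r$ together with the Lipschitz bound $|\rho(z)| \le |\rho(0)| + B|z|$ (a consequence of strong boundedness), giving $B_{i+1} \le \sqrt{2d_{i+1}}\,|\rho(0)| + 2\sqrt{2}\,rB\,B_i$, so $B_i$ stays $O(\sqrt{d})$ at every depth.

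Third, unrolling $t$ iterations of this recursion yields $n^{(t)}(m) = dR^2 \cdot O(\log^{t+1} d)$ and $n_s^{(t)}(m) = dR^2 \cdot O(\log^{t} d)\log(md)$, matching the ADL stated in the theorem (one final application of Lemma \ref{lem:adl_lin_3} for the outermost linear layer $W_t$ only changes multiplicative constants). Plugging into Lemma \ref{lem:adl_to_cov}, in the multi-dimensional form appropriate for outputs in $\reals^{d_t}$ and losses Lipschitz in $\|\cdot\|_\infty$, gives $\E_{S\sim\cd^m}\rep_\cd(S,\ch) \lesssim \tilde O(\sqrt{dR^2/m})$, and the high-probability refinement follows from the same lemma's second assertion.

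The principal obstacle will be the constants bookkeeping: one must verify that the additive $dR^2\log d$ contributions from Lemma \ref{lem:adl_lin_3} compound only \emph{additively} across layers (the per-layer multiplicative factors $O(r^2 B^2 \log d)$ are absorbed into the $O(\cdot)$ constants because $t, r, B$ are treated as constants), and that the sup-norm recursion for $B_i$ closes at $O(\sqrt{d})$. The latter is the subtle point: a naive bound $\|\rho(z)\|_2 \le \sqrt{d_i}\,\|\rho\|_\infty$ would be vacuous for the smoothed ReLU, but the correct affine bound $\sqrt{2d_{i+1}}\,|\rho(0)| + \sqrt{2}\,B\|z\|_2$, combined with the \emph{spectral} bound $\|W\| \le 2r$ (as opposed to a Frobenius-only bound), produces only an $O(rB)$ multiplicative blow-up per layer, independent of width, which is exactly what keeps the final $\ell_2$ bound at $O(\sqrt{d})$ and thus $B_t^2 = O(d)$ compatible with the target $dR^2$ scaling.
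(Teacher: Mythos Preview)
Your proposal is correct and follows essentially the same approach as the paper: both proceed by induction on depth, packaging Lemmas \ref{lem:adl_lin_1}--\ref{lem:adl_non_lin} into a single ``one layer'' step and then unrolling the resulting recursion $n^{(i+1)}(m)=O(n^{(i)}(m)\log d)+O(dR^2\log^2 d)$. The paper's proof is terser---it simply states the combined per-layer lemma and says ``simple induction''---whereas you spell out the decomposition $W_{i+1}=W_{i+1}^0+\Delta_{i+1}$ and, importantly, justify the invariant $B_i=O(\sqrt{d})$ via the affine bound $|\rho(z)|\le|\rho(0)|+B|z|$ and the spectral control $\|W_{i+1}\|\le 2r$, a point the paper merely asserts.
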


The above theorem shows that the sample complexity $\ch$ is $\tilde{O}\left( \frac{dR^2}{\epsilon^2} \right)$.
We next show a corresponding  lower bound.
This lower bound is valid already for the simple case of $\cn^\rho_{1,R}(d,d,1)$, where $\rho$ is the ReLU activation, and  will match this aforementioned bound on the sample complexity up to poly-log factor. However, it will be valid for a family of  activations, that is not the family of strongly-bounded activations, and therefore there is still certain discrepancy between our upper and lower bounds.
The lower bound will be given in the form of {\em shattering}.

\begin{definition}
Let $\ch$ be a class of functions from a domain $X$ to $\reals$. We say that $\ch$ {\em $\gamma$-shatters} a set $A\subset X$ if for any $B \subset A$ there is $h\in\ch$ such that $h|_B\ge \gamma$ while $h|_{A\setminus B}\le -\gamma$. The {\em $\gamma$-fat shattering dimension} of $\ch$, denoted $\fat_\gamma(\ch)$, is the maximal cardinality of a strongly shaterred set.
We will also denote $\fat:=\fat_1$
\end{definition}
It is well known that many losses of interest, such as the large margin loss, ramp loss, the log-loss, the hinge loss, the 0-1 loss  and others, the sample complexity  of a class $\ch$ is lower bounded by $\Omega\left( \frac{\fat(\ch)}{\epsilon^2} \right)$. The following theorem shows that for $R\le \sqrt{d}$, and the ReLU activation $\rho(x)=max(0,x)$,
$\fat\left(\cn^\rho_{1,R}(d,d,1)\right) = \tilde\Omega\left(dR^2\right)$, implying that its sample complexity is $\tilde\Omega\left(\frac{dR^2}{\epsilon^2}\right)$.

\begin{theorem}\label{thm:nn_lower}
Let $\rho$ the ReLU activation. Then, for any $R\le \sqrt{d}$ we have that $\fat(\cn_{1,R}^{\rho}(d,R^2,1)) = \Omega\left(\frac{dR^2}{\log^2(d)}\right)$
\end{theorem}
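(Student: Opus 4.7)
My plan is to lower-bound $\fat_1(\cn_{1,R}^\rho(d, R^2, 1))$ by exhibiting $m = \Omega(dR^2/\log^2 d)$ points that are $1$-shattered. Since the pseudo-dimension of the full linear class $\{x\mapsto \langle u, x\rangle : u\in\reals^d\}$ is only $d$, the lower bound cannot come from embedding a single linear class; however, the network has the crucial flexibility that the rows of $W_1$ can vary with the target labeling $y$. My strategy is therefore to show that for each labeling $y\in\{\pm 1\}^m$, one can realize, inside $\cn_{1,R}^\rho$, a linear function $x\mapsto \langle u_y, x\rangle$ with $\|u_y\| = \Omega(R/\log d)$ on the test points, and then invoke the fat-shattering lower bound for linear classifiers of norm $M$ on a ball of radius $B$, which gives $\fat_1 \ge \Omega(M^2 B^2) = \Omega(dR^2/\log^2 d)$.

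The key linear realization is built from the identity $z = \rho(z) - \rho(-z)$. Pairing rows of $W_1$ as $\{\pm w_k\}_{k=1}^{R^2/2}$ and entries of $W_2$ as $\{\pm c_k\}_{k=1}^{R^2/2}$ gives $f(x) = \sum_k c_k \langle w_k, x\rangle = \langle u, x\rangle$ with $u = \sum_k c_k w_k = W_1'^T c$, but a direct norm count shows this only achieves $\|u\|\le 1/2$ (the spectral constraint on $W_1$ and the $\ell_2$-bound on $W_2$ each cost a $\sqrt 2$). To break this barrier I would restrict the test points to a thin positive cap of the input ball, implemented by reserving one input coordinate to a fixed positive value $\alpha = \Theta(\sqrt{d/\log d})$; then one can arrange the $w_k$'s so that $\langle w_k, x\rangle > 0$ for every test point and every $k$, at which point each $\rho$ acts as the identity and only a single row per hidden unit is needed. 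Under this positivity reduction, the output becomes $\langle W_1^T a, x\rangle$ with $\|W_1^T a\| \le \|W_1\|_F\|a\|_\infty \lesssim R$, boosting the attainable linear-function norm by a factor of roughly $R$ at the price of the cap-shrinking factor. For the test point set itself, I would take $m$ i.i.d.\ random $\pm 1$ vectors in $\reals^d$ lying in the cap, and combine the reduced linear realization with the classical max-margin estimator $u_y \propto \sum_i y_i x_i$; Hoeffding and a union bound over $y$ then give a margin $\ge 1$ for every labeling once $m\le cdR^2/\log^2 d$ for a small enough $c$.

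The main obstacle is the joint bookkeeping of the three norm constraints $\|W_1\|\le 1$, $\|W_1\|_F\le R$, and $\|W_2\|_2\le 1$ against the competing demands of (i) universal positivity of every $\langle w_k, x_i\rangle$ and (ii) maximizing the attainable $\|u_y\|$. The two powers of $\log d$ in the denominator both enter through this balance: one factor comes from union-bounding the $R^2 m$ positivity events, which forces the bias-coordinate offset to be of order $\sqrt{\log d}$ and shrinks the effective input radius from $\sqrt d$ to $\sqrt{d/\log d}$, and a second factor enters when one propagates this shrinkage through both $\|u_y\|$ and the max-margin construction of Step 2. Verifying that, despite this shrinkage, the spectral constraint $\|W_1\|\le 1$ remains compatible with $\|W_1\|_F = \Theta(R)$ and with the need for $w_k$'s pointing into the positive cap—i.e.\ that the $w_k$'s can be taken near-orthogonal with a common small positive bias direction—is the delicate technical core of the argument.
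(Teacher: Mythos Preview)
Your approach has a genuine gap at the core step. Once you arrange all pre-activations to be positive, the network indeed computes $x\mapsto\langle u,x\rangle$ with $u=W_1^\top a$ where $a=W_2^\top\in\reals^{R^2}$. But for $W_2\in M_{1,R^2}$ the spectral and Frobenius norms coincide, so the class constraint gives $\|a\|_2=\|W_2\|\le 1$; and the \emph{spectral} constraint $\|W_1\|\le 1$ then forces
\[
\|u\|=\|W_1^\top a\|\le\|W_1\|\,\|a\|_2\le 1,
\]
regardless of how you spend the Frobenius budget $\|W_1\|_F\le R$. (Your displayed estimate $\|W_1^\top a\|\le\|W_1\|_F\|a\|_\infty$ is in the wrong direction to help you, and in fact fails in general.) So the linear class realizable on the cap is only $\{x\mapsto\langle u,x\rangle:\|u\|\le 1\}$ on inputs of norm $O(\sqrt d)$, whose $1$-fat-shattering dimension is $O(d)$. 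You lose the entire factor of $R^2$: the spectral-norm constraint on $W_1$ is precisely what makes the Frobenius budget invisible after linearization. No reduction to a single linear class can give the stated bound; the nonlinearity must be exploited.

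The paper's proof does this through quadratic activations. It first shows (Theorem~\ref{thm:quadratic_shatter}) that the class $\cq_{d,k,B}=\{\sum_{i=1}^k\lambda_i\langle\bu_i,x\rangle^2:\bu_i\text{ orthonormal},\;|\lambda_i|\le B/\sqrt k\}$ nicely shatters $\Omega(dk/\log d)$ points in $\{\pm1\}^d$: the embedding $x\mapsto xx^\top$ moves the problem into the matrix space $H_{d,k}$ of dimension $\Theta(dk)$, random outer products are almost orthonormal there (Lemma~\ref{lem:random_outer}), and a Gaussian-measure counting argument (Lemma~\ref{lem:shatter_by_large_measure}) yields the shattering. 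To transfer to ReLU, the paper shows $\rho$ is ``nice'': there is a distribution $\mu$ on $[-2,2]\times[-C,C]$ with $\E_{(a,b)\sim\mu}b\,\rho(x-a)=x^2$ on $[-1,1]$, so each $\langle\bu_i,x\rangle^2$ is approximated by an empirical average of $n=O(\log d)$ shifted ReLUs, with Hoeffding and a union bound making the approximation uniform on the shattered set. Taking $k=R^2/\log d$ then gives $\fat(\cn_{1,R}^\rho(d,R^2,1))=\Omega(dR^2/\log^2 d)$, one $\log d$ coming from the quadratic step and the second from the ReLU-to-quadratic randomization.
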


\subsection{Proof of Theorem \ref{thm:nn_upper}}
We note that 
\[
\cn^\rho_{r,R}(W^0_1,\ldots,W^0_t) = \cn^\rho_{r,R}(W^0_t)\circ\ldots\circ\cn^\rho_{r,R}(W^0_1)
\]
The following lemma analyzes the cost, in terms of approximate description length, when moving from a class $\ch$ to $\cn^\rho_{r,R}(W^0)\circ\ch$.

\begin{lemma}
Let $\ch$ be a class of functions from $\cx$ to $\reals^{d_1}$ with approximate description length $(n_s(m), n(m))$ and  $\|h(x)\|\le M$ for any $x\in \cx$ and $h\in\ch$. Fix $W^0\in M_{d_2,d_1}$. 
Then, $\cn^\rho_{r,R}(W^0_t)\circ\ch$ has approximate description length of 
\[
\left(n_s(m) + n'(m)B^2\log(md_2), n'(m)B^2\log(d_2)\right)
\]
for
\[
n'(m) = n(m)O(r^2+\|W^0\|^2 + 1) + O\left((d_1 + M^2)(R^2 + 1)\log(Rd_1d_2 + 1) \right)
\]
\end{lemma}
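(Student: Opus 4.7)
The plan is to decompose $\cn^\rho_{r,R}(W^0)\circ\ch$ into a linear step followed by a pointwise non-linearity $\rho$, and handle each step with the appropriate composition lemma. Writing any admissible weight as $W = W^0+\Delta$ with $\Delta\in\cl_{d_1,d_2,r,R}$, the linear step produces the intermediate class $\tilde \ch := \{x\mapsto Wh(x) : h\in\ch,\ \Delta\in\cl_{d_1,d_2,r,R}\}$ of functions from $\cx$ to $\reals^{d_2}$; the full class is then $\rho\circ\tilde\ch$. Applying Lemma~\ref{lem:adl_non_lin} at the very end is what produces the factors $B^2\log(md_2)$ and $B^2\log(d_2)$ in the statement, so it remains only to show that $\tilde\ch$ has ADL $(n_s(m), n'(m))$.

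For the linear step I would construct a compressor directly, rather than summing the two separately compressed classes $W^0\circ\ch$ and $\cl_{d_1,d_2,r,R}\circ\ch$ via Lemma~\ref{lem:adl_lin_1} (which would force the seed to $2n_s(m)$). Given $h\in\ch$ and $\Delta$, I use $\ch$'s $(1,n_s(m),n(m))$-compressor to produce a $\sqrt{1/k_1}$-estimator $\hat h$ of $h$ by averaging $k_1$ independent outputs that share the single seed (cost: $k_1 n(m)$ random bits), independently draw a $k_2$-sketch $\hat\Delta$ of $\Delta$ (cost: $k_2\cdot O(\log(Rd_1d_2+1))$ random bits), and return $\hat Y(x) = (W^0+\hat\Delta)\hat h(x)$. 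Writing
\[
\hat Y - Wh \;=\; W(\hat h - h) + (\hat\Delta-\Delta)\hat h,
\]
the cross term vanishes in expectation by independence of $\hat h$ and $\hat\Delta$ together with $\E\hat\Delta=\Delta$. The two diagonal terms are controlled by $\|W\|^2/k_1 \le 2(\|W^0\|^2+r^2)/k_1$ and, via Lemma~\ref{lem:sketch} together with $\E\|\hat h\|^2 \le d_1/k_1 + M^2$, by $(1/4+2R^2)(d_1/k_1+M^2)/k_2$. Choosing $k_1 = O(r^2+\|W^0\|^2+1)$ and $k_2 = O((d_1+M^2)(R^2+1))$ brings the total variance for every unit $\bu\in\sphere^{d_2-1}$ below $1$, and the random bits sum to the claimed
\[
n'(m) = n(m)\cdot O(r^2+\|W^0\|^2+1) + O\bigl((d_1+M^2)(R^2+1)\log(Rd_1d_2+1)\bigr).
\]

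For the non-linear step, $\tilde\ch$ has outputs in $\reals^{d_2}$ and $\rho$ is $B$-strongly-bounded, so Lemma~\ref{lem:adl_non_lin} immediately gives that $\rho\circ\tilde\ch = \cn^\rho_{r,R}(W^0)\circ\ch$ has ADL $\bigl(n_s(m) + O(n'(m)B^2\log(md_2)),\ O(n'(m)B^2\log(d_2))\bigr)$, which is the stated bound.

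The main obstacle I expect is the variance bookkeeping in the linear step: specifically, I must use the joint independence of $\hat h$ and $\hat\Delta$ (and $\E\hat\Delta=\Delta$) to cleanly kill the cross term $2\E\langle W^T\bu,\hat h-h\rangle\langle\bu,(\hat\Delta-\Delta)\hat h\rangle$, and I must bound $\E\|\hat h\|^2$ by the $M$-contribution plus the variance contribution $d_1/k_1$ (mirroring steps (1)--(2) in the proof of Lemma~\ref{lem:adl_lin_3}) so that the sketch term's dependence on $\|h\|^2$ does not blow up $n'(m)$ beyond the claimed expression. Once this variance calculation is in place, choosing $k_1,k_2$ and invoking Lemma~\ref{lem:adl_non_lin} are routine.
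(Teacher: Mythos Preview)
Your proposal is correct and is essentially what the paper does. The paper's proof is the single sentence ``follows by combining Lemmas~\ref{lem:adl_lin_1}, \ref{lem:adl_lin_2}, \ref{lem:adl_lin_3} and \ref{lem:adl_non_lin}'', i.e.\ write $W = W^0 + \Delta$, treat the linear layer as $W^0\circ\ch$ plus $\cl_{d_1,d_2,r,R}\circ\ch$, and finish with the non-linear composition lemma; your write-up simply unwraps the linear step into one explicit compressor $(W^0+\hat\Delta)\hat h$, which lets you share a single seed for $\ch$ and so land on $n_s(m)$ rather than the $2n_s(m)$ a black-box application of Lemma~\ref{lem:adl_lin_1} would give. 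The variance decomposition and the choices of $k_1,k_2$ you describe are exactly the calculation in the proof of Lemma~\ref{lem:adl_lin_3} with the extra $W^0$-term folded in, and the final appeal to Lemma~\ref{lem:adl_non_lin} is identical.
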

The lemma is follows by combining lemmas \ref{lem:adl_lin_1}, \ref{lem:adl_lin_2}, \ref{lem:adl_lin_3} and \ref{lem:adl_non_lin}.
We note that in the case that $d_1, d_2\le d$,  $M = O(\sqrt{d_1})$, $B,r,\|W^0 \|  =O(1)$ (and hence $R = O\left(\sqrt{d}\right)$) and $R\ge 1$ we get that  $\cn^\rho_{r,R}(W^0)\circ\ch$ has approximate description length of 
\[
\left(n_s(m) + O\left(n(m)\log(md)\right), O\left(n(m)\log(d)\right) + O\left(d_1R^2\log^2(d) \right)\right)
\]
Theorem \ref{thm:nn_upper} now follows by simple induction.

\section{Proof of theorem \ref{thm:nn_lower}}

\subsection{Shattering with Quadratic Activation}
In this section we will consider the fat shatering dimension of depth two networks with quadratic activations. We will later use it as a building block for establishing lower bounds on the fat shatering dimension of networks with other activations. Specifically, for $k\le d$ and $B>0$ let $\cq_{d,k,B}$ be the class of functions from the $d$-cube $\{\pm 1\}^d$ to the reals given by 
\[
q(\x) = \sum_{i=1}^k \lambda_i\inner{\bu_i,\x}^2
\]
Where $\bu_1,\ldots,\bu_k$ are orthonormal, and $\max_i |\lambda_i|\le \frac{B}{\sqrt{k}}$. 
We will show that there is a universal constant 
$B>0$ for which $\fat(\cq_{d,k,B}) = \Omega\left(\frac{dk}{\log(d)}\right)$. In fact, we will show 
a slightly stronger result, that will be useful later in order to handle other activation functions. We will use the following notion 
of shattering
\begin{definition}
We say that $\cq_{d,k,B}$ {\em nicely-shatters} the set $A \subset \{\pm 1\}^d$ if $A$ is $1$-shattered by the sub-class
\[
\cq_{d,k,B}(A)=
\left\{q(\x) = \sum_{i=1}^k \lambda_i\inner{\bu_i,\x}^2 \in \cq_{d,k,B} : \forall \x\in A, i\in [k],\; |\inner{\bu_i,\x}|\le \sqrt{2\ln(20d|A|)}  \right\}
\]
\end{definition}

\begin{theorem}\label{thm:quadratic_shatter}
For a universal constant $B>0$, $\cq_{d,k,B}$ with $k\le d$ nicely-shatters a set of size $\Omega\left(\frac{dk}{\log(d)}\right)$
\end{theorem}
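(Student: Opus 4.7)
I would use the probabilistic method. Let $m = c_0\,dk/\log d$ for a small universal constant $c_0>0$, and sample $\x_1,\ldots,\x_m\in\{\pm 1\}^d$ i.i.d.\ uniformly. The plan is to show that with positive probability over the choice of $A=\{\x_1,\ldots,\x_m\}$, for \emph{every} labeling $y\in\{\pm 1\}^m$ there is some $q_y\in\cq_{d,k,B}(A)$ with $y_j q_y(\x_j)\ge 1$ for all $j$.

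For each labeling $y$, the natural candidate is the centered rank-up-to-$m$ matrix
\[
W_y \;=\; \frac{\alpha}{m}\sum_{j=1}^{m} y_j\bigl(\x_j\x_j^\top - I_d\bigr),
\]
where $\alpha$ is a scaling constant to be chosen. Direct expansion gives
\[
\x_l^\top W_y \x_l \;=\; \frac{\alpha\, y_l\,(d^2-d)}{m}\;+\; \frac{\alpha}{m}\sum_{j\neq l} y_j\bigl(\inner{\x_j,\x_l}^2-d\bigr),
\]
i.e.\ a signal term of magnitude $\asymp \alpha d^2/m$ carrying the correct sign $y_l$, plus a noise term. Conditionally on $\x_l$, each summand $y_j(\inner{\x_j,\x_l}^2-d)$ is a centered sub-exponential random variable of scale $d$, so Bernstein gives $\lesssim d\sqrt{\log(m 2^m/\delta)}/\sqrt{m}$ after a union bound over all $l\in[m]$ and all $2^m$ labelings. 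For $m \ll d^2/\log d$, this is at most half the signal, so the sign of $\x_l^\top W_y\x_l$ matches $y_l$ with the right margin.

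The issue is that $W_y$ has rank up to $m\gg k$, so I must replace it by a rank-$\le k$ approximation $M_y$; I would take $M_y$ to be the top-$k$ spectral truncation of $W_y$. Two things then need to be controlled: (i) $\|M_y\|_{op}=O(1/\sqrt{k})$ after absorbing $\alpha$ into a universal constant, so that the eigenvalues fit the constraint $|\lambda_i|\le B/\sqrt{k}$; this follows from matrix Bernstein on $\sum y_j(\x_j\x_j^\top-I)$, which gives spectral norm $O(\alpha\sqrt{d/m})\le O(1/\sqrt k)$ once $m\gtrsim dk$; and (ii) $\|W_y-M_y\|_{op}\cdot d$ is smaller than half the margin, so that the truncation preserves the correct sign on every $\x_l$. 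Finally, the nicely-shattering bound $|\inner{\bu_i,\x_l}|\le \sqrt{2\ln(20dm)}$ is routine: the eigenvectors $\bu_i$ of $M_y$ lie in $\mathrm{span}(\x_1,\ldots,\x_m)$ with coefficients controlled by the spectrum, and for any unit $\bu$ in that span $\inner{\bu,\x_l}$ is sub-Gaussian with parameter $1$ (Khintchine on the $\pm1$ coordinates of $\x_l$), so a union bound over $i\le k$, $l\le m$ and a $\delta$-net of candidate $\bu_i$'s closes the estimate.

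The main obstacle I anticipate is step (ii): naively the $(k{+}1)$-th eigenvalue of $W_y$ is of order $\sqrt{d/m}$, which only beats the margin $d^2/m$ when $m\gg d^2/k$, far larger than the target $m\asymp dk/\log d$. So a bare spectral truncation of the explicit $W_y$ above is too lossy; the construction must be refined to produce a rank-$k$ matrix directly. The plausible route is to replace the ansatz by the output of a rank-constrained feasibility problem ``$\mathrm{rank}(M)\le k$, $\|M\|_{op}\le B/\sqrt{k}$, $y_j\x_j^\top M\x_j\ge 1$,'' and to prove feasibility by combining the signal/noise bound above with a random-projection of $W_y$ onto a $k$-dimensional subspace chosen (depending on $y$) to align with the top directions of $\mathrm{span}\{y_j\x_j\x_j^\top\}$. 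Making that projection step lose only a constant factor in the margin, uniformly over the $2^m$ labelings, is where I'd expect the bulk of the technical work and where the $\log d$ loss in the final bound comes from the necessary net-and-union-bound.
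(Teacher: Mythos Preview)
Your proposal has a genuine gap, and in fact the obstruction appears even before the rank truncation you flag. The noise bound ``$\lesssim d\sqrt{\log(m2^m/\delta)}/\sqrt m$'' does not give what you claim: since $\log(m2^m)\asymp m$, the $\sqrt m$ in the denominator is cancelled and the noise is $\Theta(\alpha d)$. Seen more directly, once the $\x_j$'s are fixed you must handle \emph{every} labeling, and the adversarial choice $y_j=\sign(\inner{\x_j,\x_l}^2-d)$ makes the cross term equal to $\frac{\alpha}{m}\sum_{j\ne l}|\inner{\x_j,\x_l}^2-d|\asymp \alpha d$, while the signal is only $\alpha d^2/m$; this forces $m\lesssim d$, far short of $dk/\log d$. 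So the ansatz $W_y=\frac{\alpha}{m}\sum_j y_j(\x_j\x_j^\top-I)$ cannot separate all $2^m$ labelings once $m\gg d$, and no rank-$k$ projection of it can repair that. (The eigenvector step has a similar issue: your $\bu_i$'s depend on all of $\x_1,\ldots,\x_m$ and on $y$, so the net you need is $\Omega(m)$-dimensional and the union bound again costs a factor $\sqrt m$ rather than $\sqrt{\log d}$.)

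The paper's route avoids explicit per-labeling constructions altogether. It works in the space $H_{d,k}$ of symmetric matrices supported on the first $k$ rows/columns, where every matrix automatically has rank $\le 2k$; this turns the problem into linear shattering in a $\Theta(dk)$-dimensional inner product space. One shows (i) the images $\Psi_k(\x_i)$ of random cube points are almost orthonormal in $H_{d,k}$, and (ii) the ``nice'' set $\tilde\cq_{d,k,B}(X)$ has Gaussian measure $\ge 8/10$. The shattering then comes from a \emph{counting} argument: under the standard Gaussian on $H_{d,k}$, each fixed $\{-1,*,1\}$-labeling of $\x_1,\ldots,\x_D$ has probability at most $(1/3+\epsilon_0)^D$, so the large set $\tilde\cq$ must realize at least $\frac{7}{10}(9/4)^D$ labelings; Steele's generalized Sauer--Shelah lemma then extracts a shattered subset of size $\Omega(D)$. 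The conceptual point that your approach is missing is that one never builds a witness $W_y$ for every $y$ via a union bound; one only needs the \emph{number} of labelings realized by the nice set to be exponentially large.
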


Denote by $H_{d,k}$ the space of $d\times d$ symmetric matrices $W$ such that $W_{i,j} = 0$ whenever $\min(i,j)\ge k+1$.
Denote by $\Psi_k:\{\pm 1\}^d\to H_{d,k}$ the mapping
\[
\left(\Psi_k(\x)\right)_{ij} = \begin{cases} \frac{x_ix_j}{\sqrt{k(2d-k)}} & \min(i,j)\le k \\ 0 & \text{otherwise} \end{cases}
\]
We say that a subset $X$ of an inner product space $H$ is $\gamma$-shattered by another subset $F\subset H$, if $X$ is $\gamma$-shattered by the function class $\{\x\mapsto \inner{\f,\x} : \f\in F \}$.
 
\begin{lemma}
Fix $\x_1,\ldots,\x_D\in \{\pm 1\}^d$. Suppose that $\Psi_{k}(\x_1),\ldots,\Psi_k(\x_D)$ are $1$-shaterred by 
\[
\tilde\cq_{d,k,B}(X) =  \{W\in H_{d,k} : \|W\|\le \sqrt{d}B\text{ and } \left|\inner{\bu,\x_i}\right|\le \|\bu\|\sqrt{2\ln(20dD)}\text{ for any $\x_i$ and   eigenvector }\bu\text{ of }W\}
\]
Then, $\x_1,\ldots,\x_D\in \{\pm 1\}^d$ are nicely shaterred by $\cq_{d,\min(2k,d),\sqrt{2}B}$
\end{lemma}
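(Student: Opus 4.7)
The plan is to produce, for each matrix $W\in\tilde\cq_{d,k,B}(X)$ that witnesses the shattering of $\Psi_k(\x_1),\ldots,\Psi_k(\x_D)$, an element $q\in\cq_{d,\min(2k,d),\sqrt{2}B}(A)$ with $A=\{\x_1,\ldots,\x_D\}$ that realizes exactly the same sign pattern on $A$. The starting observation is the identity
\[
\inner{W,\Psi_k(\x)} \;=\; \frac{\x^{T}W\x}{\sqrt{k(2d-k)}},
\]
valid for every $W\in H_{d,k}$ because the bottom-right $(d-k)\times(d-k)$ block of $W$ vanishes---precisely where $\Psi_k(\x)$ has no support. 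Writing $W=\sum_i \mu_i\bu_i\bu_i^{T}$ in an orthonormal eigenbasis instantly turns this pairing into $\sum_i \lambda_i\inner{\bu_i,\x}^{2}$ with $\lambda_i=\mu_i/\sqrt{k(2d-k)}$, which is the shape demanded by $\cq_{d,\cdot,\cdot}$.

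Second, I would bound the number of required eigenvectors. Decomposing $W$ in block form with a zero $(d-k)\times(d-k)$ corner, its column space is spanned by the first $k$ columns together with the columns of a block whose rank is at most $k$, so $\operatorname{rank}(W)\le 2k$, and trivially $\le d$. Padding the spectral expansion with zero coefficients therefore presents $q$ using exactly $\min(2k,d)$ orthonormal eigenvectors, as the target class requires.

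Third, I would verify that $q$ lands in the distinguished sub-class $\cq_{d,\min(2k,d),\sqrt{2}B}(A)$. The coefficient bound reduces, via $|\mu_i|\le\|W\|\le \sqrt d\,B$, to checking
\[
\frac{\sqrt d}{\sqrt{k(2d-k)}}\;\le\;\frac{\sqrt 2}{\sqrt{\min(2k,d)}},
\]
i.e.\ $d\min(2k,d)\le 2k(2d-k)$. A case split on whether $2k\le d$ or $2k>d$ reduces this to $k\le d$ in the first regime and to a quadratic inequality that trivially holds for $k>d/2$ in the second; this is where the factor $\sqrt{2}$ is consumed. The nice-shattering constraint on eigenvectors is inherited for free: the $\bu_i$ are unit vectors, and the hypothesis $|\inner{\bu,\x_i}|\le\|\bu\|\sqrt{2\ln(20dD)}$ on $\tilde\cq_{d,k,B}(X)$ gives exactly the required bound on $A$. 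Shattering itself is then immediate from $q(\x_i)=\inner{W,\Psi_k(\x_i)}$.

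The whole argument is essentially a change of variables between bilinear pairings on $H_{d,k}$ and quadratic forms on $\{\pm 1\}^d$; the one delicate point is the rank/dimension bookkeeping that yields $\min(2k,d)$ rather than the naive $d$, together with the matching $\sqrt 2$ inflation that absorbs the slack between $\sqrt d$ and $\sqrt{2d-k}$ in the norm comparison.
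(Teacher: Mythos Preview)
Your proposal is correct and follows essentially the same route as the paper: rewrite $\inner{W,\Psi_k(\x)}=\frac{\x^T W\x}{\sqrt{k(2d-k)}}$, spectrally decompose $W$, use the block structure of $H_{d,k}$ to bound $\operatorname{rank}(W)\le\min(2k,d)$, and read off both the coefficient bound and the eigenvector condition from the hypotheses on $\tilde\cq_{d,k,B}(X)$. The only cosmetic difference is in verifying $|\lambda_i|\le\frac{\sqrt2 B}{\sqrt{\min(2k,d)}}$: the paper first observes $|\lambda_i|\le\frac{B\sqrt d}{\sqrt{k(2d-k)}}\le\frac{B}{\sqrt k}$ (using $k\le d$) and then $\frac{B}{\sqrt k}\le\frac{\sqrt2 B}{\sqrt{k'}}$ (using $k'\le 2k$), which avoids your case split.
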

\begin{proof}
Let $h:[D]\to\{\pm 1\}$. There is $W\in H_{d,k}$ such that (1) $\|W\|\le \sqrt{d}B$, (2) $\left|\inner{\bu,\x_i}\right|\le \|\bu\|\sqrt{2\ln(20dD)}$ for any $i\in[D]$ and eigenvector $\bu$ of $W$, and (3) $h(i)\inner{W,\Psi_k(\x_i)}\ge 1$ for any $i\in [D]$.
Let $\bu_1,\ldots,\bu_{k'}$ be normalized and orthonormal sequence of eigenvectors of $W$, that span the space of spanned by the eigenvectors of $W$ corresponding to non-zero eigenvalues. Such a sequence exists since $W$ is symmetric (and is unique, up to sign and order in case that $W$ don't have eigenvalues of multiplicity $>1$).
Since $W$ is of rank at most $\min(2k,d)$, $k'\le \min(2k,d)$. Since $\|W\| \le B\sqrt{d}$, there are scalars $\tilde \lambda_1,\ldots,\tilde \lambda_{k'} \in [-B\sqrt{d},B\sqrt{d}]$ such that $W = \sum_{i=1}^{k'}\tilde\lambda_i \bu_i\bu_i^T$.
Let $\lambda_i = \frac{\tilde\lambda_i}{\sqrt{k(2d-k)}}\in \left[-\frac{B}{\sqrt{k}},\frac{B}{\sqrt{k}}\right]\subset \left[-\frac{\sqrt{2}B}{\sqrt{k'}},\frac{\sqrt{2}B}{\sqrt{k'}}\right]$. We will conclude the proof by showing that $q(\x) = \inner{W,\Psi_{k}(\x)}$ for the function $q\in \cq_{d,\min(2k,d),\sqrt{2}B}(X)$ given by $q(\x) = \sum_{i=1}^{k'} \lambda_i\inner{\bu_i,\x}^2$. Indeed,
\begin{eqnarray*}
\inner{W,\Psi_{k}(\x)} &=& \frac{1}{\sqrt{k(2d-k)}}\sum_{\min(i,j)\le k}W_{ij}x_ix_j
\\
&\stackrel{W\in H_{d,k}}{=}& \frac{1}{\sqrt{k(2d-k)}}\sum_{i,j=1}^dW_{ij}x_ix_j
\\
&=& \frac{1}{\sqrt{k(2d-k)}}\inner{W,\x\x^T}
\\
&=& \frac{1}{\sqrt{k(2d-k)}}\sum_{i=1}^{k'} \tilde{\lambda}_i\inner{\bu_i\bu_i^T,\x\x^T}
\\
&=& \sum_{i=1}^{k'} \lambda_i\inner{\bu_i,\x}^2
\end{eqnarray*}

\end{proof}

Theorem \ref{thm:quadratic_shatter} is therefore implied by the following theorem.

\begin{theorem}\label{thm:quadratic_matrix_shatter}
For a universal constant $B>0$, and any $k\le d$ there is a choice of $D = \Omega\left(\frac{dk}{\log(d)}\right)$ points $\x_1,\ldots,\x_D$ for which $\Psi_{k}(\x_1),\ldots,\Psi_k(\x_D)$ are $1$-shattered by $\tilde\cq_{d,k,B}(X)$
\end{theorem}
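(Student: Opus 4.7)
The plan is a probabilistic construction. Choose $\x_1,\ldots,\x_D\in\{\pm 1\}^d$ i.i.d.\ uniformly at random with $D=c\,dk/\log d$ for a small universal constant $c>0$, and show that with positive probability the configuration $\Psi_k(\x_1),\ldots,\Psi_k(\x_D)$ is $1$-shattered by $\tilde\cq_{d,k,B}(X)$ for a universal $B$. The key reduction I would exploit is LP duality for the spectral-norm constraint: ignoring the eigenvector condition momentarily, the shattering requirement that \emph{every} $h\in\{\pm 1\}^D$ be realized by some $W_h\in H_{d,k}$ with $\|W_h\|\le B\sqrt d$ and $h(i)\langle W_h,\Psi_k(\x_i)\rangle\ge 1$ is equivalent, via the duality between spectral and nuclear norms, to the single $h$-free inequality
\[
\Bigl\|\sum_i q_i\,\Psi_k(\x_i)\Bigr\|_{*} \;\ge\; \frac{\|q\|_1}{B\sqrt d}\qquad\text{for every } q\in\reals^D,
\]
where $\|\cdot\|_{*}$ denotes nuclear norm. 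This replaces a union bound over $2^D$ sign patterns by a single uniform-in-$q$ restricted-isometry-type condition on the random rank-$\le 2$ symmetric ensemble $\{\Psi_k(\x_i)\}$.

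To establish this nuclear-norm lower bound I would use the inequality $\|X\|_{*}\ge \|X\|_F^2/\|X\|$ and control the two factors separately for $X=\sum_i q_i\Psi_k(\x_i)$. A direct computation shows $\E\langle\Psi_k(\x_i),\Psi_k(\x_j)\rangle=1/(2d-k)$ for $i\ne j$ and $=1$ for $i=j$; Hanson--Wright applied to each of these quadratic Rademacher forms shows that the Gram matrix $G$ is within $o(1)$ in operator norm of $I+\tfrac{1}{2d-k}\mathbf 1\mathbf 1^{T}$, so $\|X\|_F^{2}=q^{T}Gq\gtrsim \|q\|_2^{2}$ once the constant component of $q$ is separated out. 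For the spectral factor one computes $\E[\Psi_k(\x)^{2}]=O(1/k)P_k+O(1/d)I$ where $P_k$ projects onto the first $k$ coordinates; matrix Bernstein, applied conditionally on the magnitudes $|q_i|$, then yields $\|X\|\lesssim \|q\|_2\sqrt{\log d/k}$ with high probability, extended from the Rademacher case by a decoupling / chaining argument. Combining the two estimates and substituting $D=c\,dk/\log d$ gives $\|X\|_{*}\gtrsim \|q\|_2\sqrt{k/\log d}\ge \|q\|_1/(B\sqrt d)$, where the last step uses $\|q\|_1\le \sqrt D\,\|q\|_2$; this yields the target for a universal $B$ depending only on $c$.

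The eigenvector coherence condition must then be grafted on, and this is the main obstacle I anticipate. Since any eigenvector $\bu$ of $W_h$ with nonzero eigenvalue lies in the column space of $W_h$, which is contained in the subspace $V:=\operatorname{span}\{\x_i : i\in[D]\}+\operatorname{span}\{\be_j : j\le k\}$ of dimension $O(D)$, a naive $\epsilon$-net argument on $V$ gives only the much-too-weak bound $|\langle\bu,\x_i\rangle|\lesssim \sqrt{D}$ in place of the required $\sqrt{2\ln(20dD)}$. To overcome this I would take $W_h$ to be the dual-optimal (minimum spectral norm) witness, and show that its spectral decomposition is effectively concentrated on $O(\log(dD))$ dominant eigenvectors -- the remaining singular directions contributing negligibly to the shattering inner products -- so that the $\epsilon$-net is run on a low-dimensional subspace and the subgaussian tail of $\langle\bu,\x_i\rangle$ (for $\bu\in\sphere^{d-1}$ and uniform $\x_i\in\{\pm 1\}^d$) then gives the required bound. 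Balancing the \emph{nuclear-norm richness} needed for the duality-based shattering against the \emph{low effective column rank} needed for eigenvector coherence is the technical crux of the argument.
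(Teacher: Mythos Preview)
Your approach is genuinely different from the paper's, and it contains a concrete error that breaks the argument for the full range of $k$.

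\textbf{The quantitative gap.} Your claimed spectral bound $\|\sum_i q_i\Psi_k(\x_i)\|\lesssim \|q\|_2\sqrt{\log d/k}$ is false once $k\gg\log^2 d$. Take $q=\mathbf 1/\sqrt D$, so $\|q\|_2=1$. Since $(\Psi_k(\x_i))_{11}=(x_i)_1^2/\sqrt{k(2d-k)}=1/\sqrt{k(2d-k)}$ deterministically,
\[
\Bigl\|\sum_i q_i\Psi_k(\x_i)\Bigr\|\;\ge\;\Bigl|\sum_i q_i\,(\Psi_k(\x_i))_{11}\Bigr|\;=\;\frac{\sqrt D}{\sqrt{k(2d-k)}}\;\approx\;\sqrt{\frac{c}{2\log d}},
\]
which exceeds $\sqrt{\log d/k}$ whenever $k\gtrsim\log^2 d$. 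The culprit is that $\E\Psi_k(\x)=\tfrac{1}{\sqrt{k(2d-k)}}\sum_{a\le k}E_{aa}\ne 0$: matrix Bernstein controls the centred fluctuation, but the mean already violates your target. Tracing this through your $\|X\|_*\ge\|X\|_F^2/\|X\|$ bound, the conclusion $\|X\|_*\gtrsim\|q\|_2\sqrt{k/\log d}$ degrades to $\|X\|_*\gtrsim\|q\|_2\sqrt{\log d}$, and the final comparison with $\|q\|_1/(B\sqrt d)$ then forces $k\lesssim\log^2 d$, far short of the full range $k\le d$ that the theorem requires. You might try to excise the $\mathbf 1$-direction in $q$, but this interacts with every step (Frobenius lower bound, spectral upper bound, the $\|q\|_1\le\sqrt D\|q\|_2$ passage) and it is not clear the numerology can be rescued.

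Even setting this aside, the uniform-in-$q$ spectral bound is not obtainable by matrix Bernstein plus a net: a net on $\sphere^{D-1}$ has $\exp(\Theta(D))=\exp(\Theta(dk/\log d))$ points, while the Bernstein tail at the scale you want is only $d\exp(-\Theta(\log d))$. Your appeal to ``decoupling/chaining'' does not bridge this gap.

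\textbf{The eigenvector obstacle.} Your fix, that the dual-optimal $W_h$ has effective spectral rank $O(\log(dD))$, is unsupported. The eigenspace of any witness $W_h$ lives in $\operatorname{span}\{\Psi_k(\x_i)\}$, which has dimension up to $D$, and there is no a~priori reason for the optimal witness to be approximately low-rank; the duality argument in fact pushes in the opposite direction, since attaining the nuclear-norm lower bound with equality requires $W_h$ to align with many singular directions of $\sum_i q_i\Psi_k(\x_i)$.

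\textbf{How the paper sidesteps both issues.} The paper never attempts to realise \emph{every} sign pattern. It first shows (Lemma~\ref{lem:random_outer}) that one can pick $D=\Omega(dk/\log d)$ points with $\Psi_k(\x_i)$ \emph{almost orthonormal} in $H_{d,k}$. It then argues on the $W$ side: for a standard Gaussian $W\in H_{d,k}$, each sign pattern $h$ is hit with probability at most $(1/3+\epsilon_0)^D$ (Lemma~\ref{lem:prob_to_hit_h}), while the ``good'' set $\tilde\cq_{d,k,B}(X)$ has Gaussian measure $\ge 8/10$ (Lemma~\ref{lem:Q_is_large}). A counting argument via Steele's generalized Sauer lemma then forces the realised patterns to shatter a subset of size $\Omega(D)$. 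The eigenvector coherence comes essentially for free, because the eigenvectors of a Gaussian symmetric matrix are sign-symmetric in distribution, so Hoeffding gives $|\langle\bu,\x_i\rangle|\le\sqrt{2\ln(20dD)}$ with high probability directly. Both of your obstacles -- the uniform nuclear-norm inequality and the low-rank witness -- are thus bypassed rather than confronted.
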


The remaining part of this section is devoted to the proof of theorem \ref{thm:quadratic_matrix_shatter}.
We will first show a lemma that shows that any ``large" subset $\cw$ of an inner product space $V$ shatters a contant fraction of any collection of vectors that are ``almost orthogonal". Theorem \ref{thm:quadratic_matrix_shatter} will then follow by showing that there are $\Omega\left(\frac{dk}{\log(d)}\right)$ vectors $\x_1,\ldots\x_D \in \left\{\pm 1\right\}^{d}$ such that $\Psi_k(\x_1),\ldots,\Psi_k(\x_D)$ are ``almost orthonormal" in $H_{d,k}$, and that $\tilde\cq_{d,k,B}(X)$ is a ``large" subset of $H_{d,k}$

Let $V$ be an inner product. We say that a sequence of unit vectors $\x_1,\ldots,\x_D\in V$ is $b$-almost-orthonormal if for any $i\in [D]$, $\|P_{V_{i-1}}\x_i\|^2 \le b$, where $V_i = \mathrm{span}\{\x_j\}_{j=1}^i$ (and $P_{V_i}$ is the orthogonal projection on $V_i$).
In this section we will prove the following lemma:

\begin{lemma}\label{lem:shatter_by_large_measure}
There are universal constants $a,b>0$ for which the following holds.  Let $\x_1,\ldots,\x_D\in V$ be  $\left(\frac{b^2}{2\log(20D)}\right)$-almost-orthonormal and let $\cw\subset V$ be a set of measure $\ge \frac{8}{10}$ according to the standard Gaussian measure on $V$. Then $\cw$ $a$-shatters a set of size $\Omega\left(D\right)$
\end{lemma}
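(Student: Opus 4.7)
I want to reduce the almost-orthonormal setup to an exactly orthonormal Gaussian problem on $\reals^D$, and then to extract an $\Omega(D)$-size shattered set via a Sauer--Shelah argument on a thresholded sign map. The $b^2/(2\log(20D))$ budget is chosen precisely so that a Gaussian tail at level $\sqrt{2\log(20D)}$ combined with a union bound over $D$ coordinates leaves a constant-probability event on which off-diagonal couplings are negligible.

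\textbf{Reduction to orthonormal Gaussians.} Let $V_i=\mathrm{span}\{\x_j\}_{j\le i}$ and take $\y_i$ to be a unit vector in $V_i\cap V_{i-1}^\perp$, so $\x_i=\beta_i\y_i+\z_i$ with $\z_i\in V_{i-1}$, $\|\z_i\|^2\le b^2/(2\log(20D))$, and $\beta_i\ge 1-O(b^2)$. For a standard Gaussian $\bw\in V$, the scalars $g_i=\inner{\bw,\y_i}$ are i.i.d.\ $N(0,1)$ and $\inner{\bw,\x_i}=\beta_ig_i+\inner{\bw,\z_i}$, where the noise $\inner{\bw,\z_i}$ is $N(0,\|\z_i\|^2)$. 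Hence $\Pr(|\inner{\bw,\z_i}|\ge b)\le 2e^{-\log(20D)}\le 1/(10D)$, and by a union bound the event $E=\{\forall i,\,|\inner{\bw,\z_i}|<b\}$ has probability $\ge 9/10$, so the Gaussian mass of $\cw\cap E$ is at least $7/10$. Pushing forward via $\bw\mapsto(g_1,\ldots,g_D)$ yields a set $\cw'\subset\reals^D$ of standard Gaussian mass $\ge 7/10$ on which the target margin $\epsilon_i\inner{\bw,\x_i}\ge a$ is implied by $\epsilon_ig_i\ge a_0:=(a+b)/\min_i\beta_i$; taking $b$ a small universal constant relative to $a$ keeps $a_0$ a universal constant as well.

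\textbf{Sauer--Shelah on the thresholded sign map.} Pick $a_0$ small enough that both $2\Phi(a_0)-1\le 1/100$ and $2\Phi(a_0)-1<\Phi(-a_0)$. Then $\E[\#\{i:|g_i|<a_0\}]\le D/100$, so by Markov the set $\cw'':=\cw'\cap\{\#\{i:|g_i|<a_0\}\le D/10\}$ has Gaussian mass $\ge 6/10$. Define $\sigma:\reals^D\to\{-1,0,+1\}^D$ by $\sigma(\mathbf{g})_i=\sign(g_i)$ if $|g_i|\ge a_0$ and $0$ otherwise. The maximal fibre measure of $\sigma$ is $\Phi(-a_0)^D\le 2^{-D}$ (because the ``zero-factor'' $2\Phi(a_0)-1$ is strictly smaller than $\Phi(-a_0)$), so $\cs:=\sigma(\cw'')$ has cardinality $\ge(6/10)\cdot 2^D$ in $\{-1,0,+1\}^D$ while every $s\in\cs$ has at most $D/10$ zero coordinates. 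Grouping elements by their nonzero support $N(s)\subset[D]$, the number of possible supports of size $\ge 9D/10$ is $\sum_{j\le D/10}\binom{D}{j}\le 2^{H(1/10)D}$, so pigeonhole produces one fixed support $N^*$ with $|N^*|\ge 9D/10$ whose contribution $\cs^*\subset\{\pm1\}^{N^*}$ has size $\ge 2^{(1-H(1/10))D-O(1)}$. Classical binary Sauer--Shelah applied to $\cs^*$ then yields a shattered subset $T^*\subset N^*$ of cardinality $\Omega(D)$, and lifting through $\sigma$ and the preceding Gaussian reduction shows that $\{\x_i:i\in T^*\}$ is $a$-shattered by $\cw$.

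\textbf{Main obstacle.} The delicate step is converting a count-based lower bound on $|\sigma(\cw'')|$ into shattering with the \emph{same} constant margin $a_0$ that appears in the threshold. A direct Sauer--Shelah on the unthresholded sign map gives only margin-$0$ sign-shattering, while insisting on the set $\{\mathbf{g}:|g_i|\ge a_0,\,\forall i\in T\}$ for a single $T$ costs an exponentially small factor $(2\Phi(-a_0))^{|T|}$ against $\mu(\cw)$, so no fixed $T$ can support a union bound. The two-step conditioning above -- first Markov to discard the $1/10$-fraction of Gaussian mass with too many small coordinates, then pigeonhole to isolate a single dominant nonzero support $N^*$ -- converts the intrinsically 3-valued counting into a clean binary Sauer--Shelah application and simultaneously pins down the universal constants $a,b$.
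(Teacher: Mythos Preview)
Your argument is correct and arrives at the same conclusion, but the route is genuinely different from the paper's.

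The paper does \emph{not} first orthonormalize. It works directly with the vectors $\x_i$ and proves a conditional fibre bound (Lemma~\ref{lem:prob_to_hit_h}): conditioning on the first $k-1$ threshold events \emph{and} on the ``noise'' event that $\langle \bw,P_{V_{i-1}}\x_i\rangle$ is small for $i\le k-1$, the $k$-th coordinate $\langle \bw,\x_k\rangle$ is a Gaussian with mean in $[-b,b]$ and variance in $[1-b^2,1]$, so it lands in any of the three intervals with probability at most $\tfrac13+\epsilon_0$. Multiplying gives a fibre bound of $(\tfrac13+\epsilon_0)^D$ and hence $|\ch|\ge \tfrac{7}{10}(9/4)^D$. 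The shattering is then extracted in one step via Steele's multiclass Sauer--Shelah lemma for $\{-1,*,+1\}$-valued functions, which directly gives $\dim(\ch)=\Omega(D)$.

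Your approach replaces both ingredients. By Gram--Schmidt you get genuinely i.i.d.\ Gaussians $g_i$, so the fibre bound $\Phi(-a_0)^D$ is immediate without any conditional argument; the price is that you must separately control the coupling noise on the event $E$, which you do with the same Gaussian tail/union bound the paper uses. More interestingly, you avoid Steele's lemma entirely: the Markov step caps the number of ``$*$'' coordinates at $D/10$, the pigeonhole on the zero-support reduces to a single binary cube $\{\pm 1\}^{N^*}$, and then the classical Sauer--Shelah lemma finishes. This is more elementary (no multiclass combinatorics) and nicely modular, at the cost of a couple of extra reductions and somewhat worse implicit constants. Both proofs hinge on the same quantitative tuning --- the $b^2/(2\log(20D))$ budget is exactly what makes the union bound over $D$ noise terms leave constant mass --- so the choice of constants in the statement is forced in the same way by either argument.
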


Theorem \ref{thm:quadratic_matrix_shatter} therfore follows from the following two lemmas.

\begin{lemma}\label{lem:random_outer}
Let $V\subset H_{d,K}$ be a linear subspace of dimension $D$. Let $X\in\{\pm 1\}^{d}$ be a uniform vector. Then
\[
\E\|P_V\Psi_k(X)\|^2 \le \frac{k + 2D +2 }{k(2d-k)}
\]
\end{lemma}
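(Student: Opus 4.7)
The plan is to reduce the problem to Bessel's inequality after expanding the inner product $\inner{W,\Psi_k(X)}$ in the Walsh basis of $L^2(\{\pm 1\}^d)$.

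First, rewrite $\Psi_k(X) = \frac{1}{\sqrt{k(2d-k)}} M(X)$ where $M(X)_{ij} = X_iX_j$ if $\min(i,j)\le k$ and $0$ otherwise. The crucial observation is that the nonzero entries of $M(X)$, viewed as random variables on $\{\pm 1\}^d$, are either the constant $1$ (on the diagonal, since $X_i^2=1$) or Walsh functions $\chi_{\{i,j\}}=X_iX_j$ for distinct $i,j$, and these are mutually orthogonal in $L^2$. Next, fix a Frobenius-orthonormal basis $e_1,\dots,e_D$ of $V$ and write
\[
\E\|P_V\Psi_k(X)\|^2 = \frac{1}{k(2d-k)}\sum_{l=1}^D \E\inner{e_l,M(X)}^2.
\]

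For each $e_l\in H_{d,k}$, parameterize by $\alpha_i^{(l)} = (e_l)_{ii}$ for $i\le k$ (entries with $i>k$ vanish on $H_{d,k}$) and $\beta_{ij}^{(l)} = (e_l)_{ij}$ for $i<j$ with $i\le k$. Using symmetry of $e_l$,
\[
\inner{e_l,M(X)} = \sum_{i\le k}\alpha_i^{(l)} + 2\sum_{i<j,\,i\le k}\beta_{ij}^{(l)} X_iX_j,
\]
and by orthogonality of the Walsh basis,
\[
\E\inner{e_l,M(X)}^2 = \Bigl(\sum_i\alpha_i^{(l)}\Bigr)^2 + 4\sum_{i<j}\bigl(\beta_{ij}^{(l)}\bigr)^2.
\]

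Summing over $l$, the off-diagonal contribution is easy: since $\|e_l\|_F^2 = \sum_i(\alpha_i^{(l)})^2 + 2\sum_{i<j}(\beta_{ij}^{(l)})^2$, we get $4\sum_l\sum_{i<j}(\beta_{ij}^{(l)})^2 \le 2\sum_l\|e_l\|_F^2 = 2D$. The main obstacle is the diagonal/trace term, because the naive bound $(\sum_i\alpha_i^{(l)})^2\le k\sum_i(\alpha_i^{(l)})^2$ would give $kD$, which is far too weak. The key trick is to recognize the trace as an inner product: let $I_k\in H_{d,k}$ be the matrix with $1$'s on the first $k$ diagonal entries and $0$'s elsewhere. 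Then $\sum_i\alpha_i^{(l)} = \tr(e_l) = \inner{e_l,I_k}_F$, and Bessel's inequality applied to the orthonormal system $\{e_l\}_{l=1}^D$ in $H_{d,k}$ yields
\[
\sum_{l=1}^D \Bigl(\sum_i\alpha_i^{(l)}\Bigr)^2 = \sum_{l=1}^D \inner{e_l,I_k}_F^2 = \|P_V I_k\|_F^2 \le \|I_k\|_F^2 = k.
\]
Combining both contributions gives $\sum_l\E\inner{e_l,M(X)}^2 \le k + 2D$, which after dividing by $k(2d-k)$ yields the claim (the extra $+2$ in the stated bound is slack).
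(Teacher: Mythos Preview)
Your proof is correct and follows essentially the same line as the paper: compute $\E\inner{E,\Psi_k(X)}^2$ for a basis element $E$, bound it by $\tr(E)^2 + 2\|E\|_F^2$ (up to the normalizing factor), and then control $\sum_l \tr(e_l)^2$. The only real difference is in that last step. The paper handles the trace term by a case split: if $I_k\notin V$ it passes to $\tilde V=\mathrm{span}(V\cup\{I_k\})$, costing an extra dimension and hence the $+2$, and then chooses a basis with $E_1=\tfrac{1}{\sqrt{k}}I_k$ so all other traces vanish. Your direct application of Bessel's inequality, $\sum_l\inner{e_l,I_k}_F^2=\|P_VI_k\|_F^2\le\|I_k\|_F^2=k$, is cleaner, avoids the case split, and yields the slightly sharper bound $\frac{k+2D}{k(2d-k)}$; the $+2$ in the stated lemma is indeed slack.
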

Note that the lemma implies that there are $D$ vectors $\x_1,\ldots,\x_D\in \left\{\pm 1 \right\}^{d}$ for which $\Psi_k(\x_1),\ldots,\Psi_k(\x_D)$ are $\left( \frac{k + 2D}{k(2d-k)} \right)$-almost-orthonormal. In particular, for any constant $b>0$, there are $D=\Omega\left(\frac{dk}{\log(d)}\right)$ vecotrs $\x_1,\ldots,\x_D\in \left\{\pm 1 \right\}^{d}$ for which $\Psi_k(\x_1),\ldots,\Psi_k(\x_D)$ are $\left( \frac{b^2}{2\log(20D)} \right)$-almost-orthonormal.

\begin{lemma}\label{lem:Q_is_large}
For large enough $B>0$ and any choice of vectors $\x_1,\ldots,\x_D\in \left\{\pm 1 \right\}^{d}$, the Gaussian measure of $\tilde\cq_{d,k,B}(X)$ is $\ge \frac{8}{10}$
\end{lemma}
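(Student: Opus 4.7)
I will write the target set as $\tilde\cq_{d,k,B}(X) = S_1\cap S_2$, where $S_1 = \{W\in H_{d,k} : \|W\|\le B\sqrt{d}\}$ and $S_2$ is the event that $|\inner{\bu,\x_i}|\le \|\bu\|\sqrt{2\ln(20dD)}$ holds for every $i\in[D]$ and every eigenvector $\bu$ of $W$. I will show each of these two events has Gaussian measure at least $9/10$ for $B$ a sufficiently large absolute constant; the union bound then yields $\mu(\tilde\cq_{d,k,B}(X))\ge 8/10$. The operator-norm event $S_1$ is the easy part: decomposing a standard Gaussian $W\in H_{d,k}$ into a block matrix
\[
W = \begin{pmatrix} A & B^T\\ B & 0 \end{pmatrix}
\]
with $A\in \reals^{k\times k}$ a GOE-type block (diagonal $N(0,1)$, off-diagonal $N(0,1/2)$) and $B\in\reals^{(d-k)\times k}$ having i.i.d.\ $N(0,1/2)$ entries (independent of $A$), the triangle inequality gives $\|W\|\le \|A\|+\|B\|$, and standard non-asymptotic bounds on spectral norms of Gaussian matrices yield $\|A\|\lesssim\sqrt{k}$ and $\|B\|\lesssim\sqrt{k}+\sqrt{d-k}\lesssim \sqrt{d}$ with probability $1-e^{-\Omega(d)}$.

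The eigenvector event $S_2$ is the interesting part. The crucial structural observation is that the standard Gaussian on $H_{d,k}$ is invariant under conjugation by $\mathrm{diag}(P,Q)$ for every $(P,Q)\in O(k)\times O(d-k)$, since such an operation preserves both the subspace $H_{d,k}$ and the Frobenius inner product. Conjugation preserves eigenvalues and maps each eigenvector $\bu = \binom{u^{(1)}}{u^{(2)}}$ to $\binom{Pu^{(1)}}{Qu^{(2)}}$, so the marginal distribution of the $j$-th eigenvector (indexed by eigenvalue order, treated up to sign) is $(O(k)\times O(d-k))$-invariant. Because the unique probability measure on $c_1\sphere^{k-1}\times c_2\sphere^{d-k-1}$ invariant under the product action of $O(k)\times O(d-k)$ is the product of the two uniform measures, it follows that, conditional on $c_1 := \|u_j^{(1)}\|$ and $c_2 := \|u_j^{(2)}\|$ (which satisfy $c_1^2+c_2^2=1$ since $\|\bu_j\|=1$), the directions $u_j^{(1)}/c_1$ and $u_j^{(2)}/c_2$ are independent and uniformly distributed on their respective spheres.

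Once this structural fact is in place, the rest is a routine concentration calculation. Fix $i\in[D]$ and $j\in[d]$. Since $\x_i\in\{\pm 1\}^d$, we have $\|\x_i^{(1)}\|^2=k$ and $\|\x_i^{(2)}\|^2=d-k$, so
\[
\inner{\bu_j,\x_i} = c_1\sqrt{k}\,\eta_1 + c_2\sqrt{d-k}\,\eta_1',
\]
where $\eta_1$ (resp.\ $\eta_1'$) is distributed as the first coordinate of a uniform unit vector in $\sphere^{k-1}$ (resp.\ $\sphere^{d-k-1}$), and the two are independent given $c_1,c_2$. A short Taylor-series MGF computation (comparing the even moments $(2n-1)!!/[m(m+2)\cdots(m+2n-2)]$ to $(2n-1)!!/m^n$) shows the first coordinate of a uniform unit vector in $\sphere^{m-1}$ is sub-Gaussian with variance proxy $1/m$, so each summand above is sub-Gaussian with proxy $c_\ell^2$ and their sum with proxy $c_1^2+c_2^2=1$. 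The standard sub-Gaussian tail bound then gives $\Pr(|\inner{\bu_j,\x_i}|>\sqrt{2\ln(20dD)})\le 2e^{-\ln(20dD)} = 1/(10dD)$, and a union bound over the $d$ eigenvectors and $D$ vectors delivers $\Pr(S_2^c)\le 1/10$. The only step I expect to require genuine care is the symmetry-and-uniqueness argument establishing the conditional product-of-uniforms distribution of the eigenvector blocks; once that is in place, eigenvector delocalization reduces to sphere concentration.
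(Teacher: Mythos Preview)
Your argument is correct and reaches the same conclusion by the same overall plan (split into the operator-norm event and the eigenvector-delocalization event, then union bound), but the symmetry you invoke for the eigenvector part is different from the paper's.

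The paper exploits only the \emph{sign} symmetry: the standard Gaussian on $H_{d,k}$ is invariant under conjugation by any diagonal $\pm 1$ matrix, so for a fixed (randomly signed) normalized eigenvector $\bu$, conditional on $(u_1^2,\ldots,u_d^2)$ the coordinates $u_1,\ldots,u_d$ have independent uniform signs. Hence $\inner{\bu,\x_i}=\sum_j u_j x_{i,j}$ is a Rademacher sum with $\sum_j (u_j x_{i,j})^2 = 1$, and Hoeffding's inequality gives $\Pr\bigl(|\inner{\bu,\x_i}|>t\bigr)\le 2e^{-t^2/2}$ directly. You instead use the larger $O(k)\times O(d-k)$ symmetry and reduce to sub-Gaussianity of a coordinate of the uniform distribution on a sphere; this is valid and yields exactly the same tail bound, but it requires the extra (easy) moment computation and the uniqueness-of-invariant-measure argument, whereas the sign-symmetry route avoids both. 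On the other hand, your approach is more robust: it would continue to work for $\x_i$'s not in $\{\pm 1\}^d$ (only the block norms $\|\x_i^{(1)}\|,\|\x_i^{(2)}\|$ enter), while the paper's sign argument leans on $|x_{i,j}|$ being constant. For the operator-norm event, your block decomposition $\|W\|\le\|A\|+\|B\|$ and separate random-matrix bounds is a clean way to make precise what the paper states via a citation.
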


\begin{proof}
(of lemma \ref{lem:random_outer})
We first assume that $I_k := \sum_{i=1}^kE_{ii} \in V$. Here $E_{ii}$ is the matrix whose all elements are $0$ except the $ii$ entry which is $1$.
Let $E_1,\ldots, E_D$ be an orthonormal basis to $V$ such that $E_1 = \frac{1}{\sqrt{k}}I_k$. In particular, for all $i>1$, $\tr(E_i) = \sqrt{k}\inner{E_1,E_i} = 0$.  We note that for every $E\in H_{d,k}$ we have
\begin{eqnarray*}
\E\left[\inner{\Psi_k(X),E}^2\right] &=& \frac{1}{k(2d-k)}\E\left[\inner{XX^T,E}^2\right] 
\\
&=& \frac{\sum_{i\ne j} E_{ii}E_{jj} + 2E^2_{ij} + \sum_{i=1}^d E^2_{ii}}{k(2d-k)}
\\
&\le & \frac{\tr^2(E) + 2\|E\|_F^2}{k(2d-k)}
\end{eqnarray*}
Hence,
\begin{eqnarray*}
\E\|P_V\Psi_k(X)\|^2 &=& \sum_{i=1}^D \E\inner{\Psi_k(X),E_i}^2
\\
&\le & \frac{1}{k(2d-k)}\sum_{i=1}^D \tr^2(E_i) + 2\|E_i\|_F^2
\\
&=& \frac{\tr^2(E_1)  + 2D}{k(2d-k)} 
\\
&=& \frac{k + 2D}{k(2d-k)}
\end{eqnarray*}
In case that $I_k\notin V$, let $\tilde{V}$ be the linear span of $V\cup \{I_k\}$. By what we have shown and the fact that $\dim(\tilde{V}) = D+1$ we have
\[
\E\|P_V\Psi_k(X)\|^2 \le \E\|P_{\tilde{V}}\Psi_k(X)\|^2 = \frac{k + 2D +2 }{k(2d-k)}
\]
\end{proof}

\begin{proof}
(of lemma \ref{lem:Q_is_large}. Sketch)
Let $W\in H_{k,d}$	be a standard Gaussian, and let $\bu$ be the $k$'th normalized eigenvector of $W$ (with sign determined uniformly at random). It is not hard to see that the distribution of  $\bu$ is invariant to any diagonal $\pm 1$ matrix $U$. It follows that  given $(u_1^2,\ldots,u_d^2)$, $u_1x_1,\ldots,u_dx_d$ are independent random variables, and Hoefdding's bound implies that
$\Pr\left(|\inner{\bu,\x_i}| \ge \sqrt{2\ln(20dD)} \right)\le \frac{1}{10dD}$. Via  a union bound we conclude that the probability that $|\inner{\bu,\x_i}| \ge \sqrt{2\ln(20dD)}$ for some $i$ and normalized eigenvector $\bu$ is at most $\frac{1}{10}$. The lemma follows from that, together with the fact (e.g. Corollary 5.35 at \cite{vershynin2010introduction}) that with probability at least $1 - 2e^{-\frac{t^2}{2}}$, $\|W\|\le \sqrt{2d} + t$
\end{proof}

To prove lemma \ref{lem:shatter_by_large_measure} we will use Steele's generalization \cite{steele1978existence} of the VC dimension and Sauer-Shelah lemma

\begin{definition}
Let $\ch\subset Y^X$. A set $A\subset X$ is {\em shattered} if $\ch|_A = Y^A$. The {\em dimension} of $\ch$, denoted $\dim(\ch)$, is the maximal cardinality of a shattered set.
\end{definition}

\begin{lemma}[\cite{steele1978existence}] \label{lem:Steele} 
For any $\ch\subset Y^X$, $|\ch|\le \sum_{i=0}^{\dim(\ch)} \binom{|X|}{i}(|Y|-1)^{|X|-i}$
\end{lemma}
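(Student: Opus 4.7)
The plan is to prove Lemma \ref{lem:Steele} by induction on $n=|X|$, generalizing the standard shifting/restriction proof of the Sauer--Shelah lemma. The base case $n=0$ is immediate since $Y^\emptyset$ contains only the empty function, so $|\ch|\le 1=\binom{0}{0}(|Y|-1)^0$.

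For the inductive step, I would fix an arbitrary point $x_0\in X$, set $X'=X\setminus\{x_0\}$, and consider the restriction map $r:\ch\to Y^{X'}$, $r(h)=h|_{X'}$. Let $\ch'=r(\ch)$, and let $\ch''\subseteq\ch'$ be the subfamily of those $g\in\ch'$ whose fiber $r^{-1}(g)$ has full size $|Y|$ (i.e., every extension of $g$ by a value at $x_0$ lies in $\ch$). Counting fibers gives
\[
|\ch|=\sum_{g\in\ch'}|r^{-1}(g)| \le |Y|\cdot|\ch''| + (|Y|-1)\left(|\ch'|-|\ch''|\right) = (|Y|-1)|\ch'| + |\ch''|.
\]

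The crux of the argument is two dimension inequalities. First, $\dim(\ch')\le\dim(\ch)$: any $A\subseteq X'$ shattered by $\ch'$ is automatically shattered by $\ch$, because $\ch'|_A=\ch|_A$. Second, $\dim(\ch'')\le\dim(\ch)-1$: if $A\subseteq X'$ is shattered by $\ch''$, then $A\cup\{x_0\}$ is shattered by $\ch$. Indeed, given any target $\phi:A\cup\{x_0\}\to Y$, shattering yields $g\in\ch''$ with $g|_A=\phi|_A$, and since $g\in\ch''$ all $|Y|$ extensions of $g$ lie in $\ch$, in particular the one taking the value $\phi(x_0)$ at $x_0$. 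This is the only genuinely content-bearing step and is the place I expect the proof could go wrong if the definition of $\ch''$ is taken to be ``at least two extensions'' rather than ``all extensions''; the full-fiber definition is what makes the shattering argument work for arbitrary $Y$.

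Applying the inductive hypothesis with $d=\dim(\ch)$ to $\ch'$ (on $n-1$ points with dimension at most $d$) and to $\ch''$ (on $n-1$ points with dimension at most $d-1$), and then combining via Pascal's identity $\binom{n-1}{i}+\binom{n-1}{i-1}=\binom{n}{i}$ will collapse the two sums into the desired bound $\sum_{i=0}^{d}\binom{n}{i}(|Y|-1)^{n-i}$. This last step is purely algebraic bookkeeping and, once the two dimension inequalities are in hand, completes the induction.
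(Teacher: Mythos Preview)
Your induction argument is correct: the fiber-counting inequality $|\ch|\le (|Y|-1)|\ch'|+|\ch''|$, the two dimension inequalities, and the Pascal-identity telescoping all go through exactly as you describe, and the edge cases ($\ch''=\emptyset$ when $\dim(\ch)=0$, etc.) are handled by your remarks. Note that the paper does not give its own proof of this lemma---it simply cites \cite{steele1978existence}---so there is nothing to compare against; your argument is essentially the standard shifting proof of Sauer--Shelah adapted to general alphabets, which is indeed the content of Steele's result.
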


In the sequel we denote for vectors $\bv,\x$ in an inner product space $V$ and $a\in \reals$,
\[
h_{\bv,a}(\x):=
\begin{cases}
1 & \inner{\bv,\x} \ge a
\\
* & -a < \inner{\bv,\x} < a
\\
-1 & \inner{\bv,\x} \le -a
\end{cases}
\]

\begin{lemma}\label{lem:prob_to_hit_h}
Let $a>0$ be the scalar such that $\Pr_{X\sim\cn(0,1)}\left(X\in (-a,a)\right) = \frac{1}{3}$. 
Let $b > 0 $ small enough such that for any $\mu \in [-b,b]$ and $\sigma^2 \in [1-b^2,1]$
\[
\max\left\{ \Pr_{X\sim\cn(\mu,\sigma^2)}\left(X\le -a\right),  \Pr_{X\sim\cn(\mu,\sigma^2)}\left( X \ge a\right),\Pr_{X\sim\cn(\mu,\sigma^2)}\left(-a < X < a\right), \right\}  \le \frac{1}{3} + \epsilon_0
\]
Fix unit vectors $\x_1,\ldots,\x_D\in V$ such that for any $k\in [D]$, $\|P_{V_{k-1}}\x_k\|^2 \le \frac{b^2}{2\ln(20D)}$, where
$V_k = \mathrm{span}\{\x_i\}_{i=1}^k$. Fix also $h:[D]\to \{-1,1,*\}$ and a standard Gaussian $\bw\in V$. Then, 
\[
\Pr\left( \forall i\in [D], h_{\bw,a}(\x_i) = h(i)  \text{ and } \inner{  \bw, P_{V_{i-1}} \x_i }^2\le \left\|P_{V_{i-1}} \x_i\right\|^2 2\ln(20D)\right) \le \left(\frac{1}{3} + \epsilon_0\right)^{D}
\]
\end{lemma}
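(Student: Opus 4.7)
The plan is to process the indices $i=1,\ldots,D$ sequentially, exploiting the almost-orthonormality of the $\x_i$ to decouple each $\inner{\bw,\x_i}$ from its predecessors up to a small conditional mean. Write $G_i := \inner{\bw,\x_i}$ and let $\cf_i$ denote the $\sigma$-algebra generated by $G_1,\ldots,G_i$. Decomposing $\x_i = P_{V_{i-1}}\x_i + P_{V_{i-1}^\perp}\x_i$ gives $G_i = \inner{\bw, P_{V_{i-1}}\x_i} + \inner{\bw, P_{V_{i-1}^\perp}\x_i}$; the first summand lies in $\cf_{i-1}$ (it is a linear combination of $G_1,\ldots,G_{i-1}$), while the second is a centered Gaussian of variance $\sigma_i^2 := \|P_{V_{i-1}^\perp}\x_i\|^2$ that is orthogonal to, and hence (by joint Gaussianity) independent of, each $G_j$ for $j<i$. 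Thus the conditional law of $G_i$ given $\cf_{i-1}$ is $\cn(\mu_i,\sigma_i^2)$ with $\mu_i := \inner{\bw, P_{V_{i-1}}\x_i}$.

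Next I would introduce the events $A_i := \{h_{\bw,a}(\x_i) = h(i)\}$ and $B_i := \{\mu_i^2 \le \|P_{V_{i-1}}\x_i\|^2\cdot 2\ln(20D)\}$, the latter being $\cf_{i-1}$-measurable. On $B_i$, the almost-orthonormality hypothesis forces $|\mu_i|\le \|P_{V_{i-1}}\x_i\|\sqrt{2\ln(20D)}\le b$, and $\sigma_i^2 = 1-\|P_{V_{i-1}}\x_i\|^2 \in [1-b^2,1]$ since $\x_i$ is a unit vector. The hypothesis in the lemma (applied with $\mu=\mu_i$ and $\sigma^2=\sigma_i^2$) therefore gives
\[
\Pr(A_i\mid \cf_{i-1})\le \tfrac{1}{3}+\epsilon_0 \quad\text{on } B_i,
\]
uniformly over $h(i)\in\{+1,-1,*\}$, since the three options correspond respectively to $\{G_i\ge a\}$, $\{G_i\le -a\}$, and $\{-a<G_i<a\}$, each of which has $\cn(\mu_i,\sigma_i^2)$-probability at most $\tfrac13+\epsilon_0$.

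Finally I would iterate: since $B_i$ and $\bigcap_{j<i}(A_j\cap B_j)$ are both $\cf_{i-1}$-measurable,
\[
\Pr\!\left(\bigcap_{j\le i}(A_j\cap B_j)\right) = \E\!\left[\mathbf{1}_{\bigcap_{j<i}(A_j\cap B_j)}\,\mathbf{1}_{B_i}\,\Pr(A_i\mid \cf_{i-1})\right]\le \left(\tfrac{1}{3}+\epsilon_0\right)\Pr\!\left(\bigcap_{j<i}(A_j\cap B_j)\right),
\]
and a straightforward induction then yields the stated $(\tfrac{1}{3}+\epsilon_0)^D$ bound. The only delicate point is setting up the conditional Gaussian picture correctly; once the identification of $(\mu_i,\sigma_i^2)$ is in place the rest is mechanical, and the parameters $a,b,\epsilon_0$ in the lemma were chosen precisely so that this conditioning argument goes through uniformly for every admissible label $h(i)$.
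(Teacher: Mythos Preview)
Your proof is correct and is essentially the same argument as the paper's: both decompose $\inner{\bw,\x_k}$ into the $\cf_{k-1}$-measurable part $\mu_k=\inner{\bw,P_{V_{k-1}}\x_k}$ and an independent Gaussian of variance $1-\|P_{V_{k-1}}\x_k\|^2$, use the ``good'' event $B_k$ to pin $(\mu_k,\sigma_k^2)$ inside $[-b,b]\times[1-b^2,1]$, and then telescope the conditional bound $\tfrac13+\epsilon_0$. The only cosmetic difference is that the paper writes the decomposition as $\bw=\bw_1+\bw_2+\bw_3$ along $V_{k-1}\oplus(V_k\ominus V_{k-1})\oplus V_k^\perp$ and bundles your $A_i\cap B_i$ into a single nested event, whereas you phrase the same thing via the filtration $(\cf_i)$ and separate $A_i$ from $B_i$.
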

\begin{proof}
Let  $A_k$ be the event 
\[
A_k = \left\{\bw :  \forall i\in [k], h_{\bw,a}(\x_i) = h(i)  \text{ and } \inner{  \bw, P_{V_{i-1}} \x_i }^2\le \left\|P_{V_{i-1}} \x_i\right\|^2 2\ln(20D) \right\}
\]
Since $A_k\subset A_{k-1}$,
\[
\Pr(A_k) = \Pr(A_k|A_{k-1})\Pr(A_{k-1})
\]
Hence,
\[
\Pr\left(A_D\right) = \prod_{k=1}^D  \Pr(A_k|A_{k-1})
\]
It is therefore enough to show that $\Pr(A_k|A_{k-1}) \le \frac{1}{3} + \epsilon_0$.

To see this, write $\bw = \bw_1 + \bw_2 + \bw_3$ where $\bw_1,\bw_2,\bw_3$ are independent standard Gaussians on $V_{k-1}$, the orthogonal complement of $V_{k-1}$ in $V_k$ and $V_k^\perp$. Note that $\bw\in A_{k-1}$ if and only if $\bw_1\in A_{k-1}$.
It holds that given that $\bw_1\in A_{k-1}$, $\bw\in A_k$ only if 
\[
\inner{\bw,\x_k} =  \inner{\bw_1,\x_k} + \inner{\bw_2,\x_k} \in I_{h(k)}\text{ , where $I_{-1} = (-\infty,-a], I_1 = [a,\infty)$ and $I_* = (-a,a)$}
\]
Now, given $\bw_1\in A_{k-1}$, $\inner{\bw,\x_k} =  \inner{\bw_1,\x_k} + \inner{\bw_2,\x_k}$ is a Gaussian of  variance $\sigma^2 = 1 - \|P_{V_{k-1}}\x_k\|^2\ge 1-b^2$ and mean $\mu = \inner{ \bw_1, \x_k} = \inner{  P_{V_{k-1}} \bw,  \x_k } = \inner{  \bw, P_{V_{k-1}} \x_k }$ whose absolute value satisfies
\[
|\mu|\le \left\|P_{V_{k-1}} \x_k \right\| \sqrt{2\ln(20D)} \le b
\]
It therefore follows that the probability that $\inner{W,\x_k\x_k^T}  \in I_{h(k)}$ is bounded by $\frac{1}{3} + \epsilon_0$.
\end{proof}

\begin{proof}(of lemma \ref{lem:shatter_by_large_measure})
Let $a,b$ be as in lemma \ref{lem:prob_to_hit_h} with $\epsilon_0 = \frac{1}{9}$ and denote by $\mu$ the standard Gaussian measure on $V$.
Define 
\[
\tilde\cw = \cw\cap\{\bw :  \forall i\in [D], \inner{  \bw, P_{V_{i-1}} \x_i }^2\le \left\|P_{V_{i-1}} \x_i\right\|^2 2\ln(20D) \}
\]
Since $\Pr_{X\sim\cn(0,1)}\left(|X|\ge t\right)\le 2e^{\frac{t^2}{2}}$ we have 
\[
\mu(\tilde\cw) \ge \mu(\cw) - \mu\left(\cw :  \exists i\in [D], \inner{  \bw, P_{V_{i-1}} \x_i }^2 > \left\|P_{V_{i-1}} \x_i\right\|^2 2\ln(20D)\right) \ge \frac{7}{10}
\]
We will show that $\cw$ $a$-shatters  a set of size $\Omega\left(D\right)$. Let $\ch = \{h_{\bw,a} : \bw \in \tilde W \}$. For any $h:[D]\to \{-1,1,*\}$ define $\tilde{\cw}_h = \{\bw\in \tilde\cw : h_{\bw,a} = h\}$. By lemma \ref{lem:prob_to_hit_h}, $\mu(\tilde{\cw}_h) \le \left(\frac{4}{9}\right)^D$. On the other hand
\[
\sum_{h\in\ch} \mu(\tilde{\cw}_h) = \sum_{h:[D]\to \{-1,1,*\}}\mu(\tilde{\cw}_h) = \mu(\tilde\cw) \ge \frac{7}{10}
\]
It follows that
\[
|\ch| \ge \frac{7}{10}\left(\frac{9}{4}\right)^D
\]
On the other hand, by Steele's lemma \ref{lem:Steele}
\[
|\ch| \le 2^{D} \binom{D}{\le \dim(\ch)} \le 2^D\left(\frac{eD}{\dim(\ch)}\right)^{\dim(\ch)}
\]
Hence
\[
\left(\frac{eD}{\dim(\ch)}\right)^{\dim(\ch)} \ge \frac{7}{10}\left(\frac{9}{8}\right)^D
\]
It follows that
\[
\dim(\ch) = \Omega\left(D\right)
\]
\end{proof}

\subsection{Shattering with other Activations}

\begin{definition}
We say that an activation $\rho:\reals\to\reals$ is {\em nice}. 
If there is a constant $C>0$ and a distribution $\mu$ on $[-2,2]\times [-C,C]$ such that for any $x\in [-1,1]$ it holds that $\E_{(a,b)\sim\mu}b\rho(x-a) = x^2$
\end{definition}

\begin{lemma}
The ReLU activation $\rho(x) = \max(0,x)$ is nice. 
\end{lemma}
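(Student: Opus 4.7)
The plan is to produce an explicit signed measure on $[-2,2]$ whose pushforward under $a \mapsto \rho(\cdot - a)$ equals $x^2$ on $[-1,1]$, and then normalize it into a probability distribution on $[-2,2]\times[-C,C]$.

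First, I would exploit the fact that $\frac{\partial^2}{\partial a^2}\rho(x-a) = \delta_{a=x}$ in the distributional sense, so any $C^2$ function $f$ on $[-1,1]$ can be reconstructed as $\int_{-1}^{1} f''(a)\,\rho(x-a)\,da$ up to an affine correction. Taking $f(x) = x^2$, so $f'' \equiv 2$, I would directly compute, for $x \in [-1,1]$,
\[
\int_{-1}^{1} 2\rho(x-a)\,da \;=\; \int_{-1}^{x} 2(x-a)\,da \;=\; (x+1)^2 \;=\; x^2 + 2x + 1.
\]
So I need to cancel the affine tail $2x+1$ using only functions of the form $\rho(x-a)$ with $a\in[-2,2]$. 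On $[-1,1]$ we have $\rho(x+1)=x+1$ and $\rho(x+2)=x+2$ (both arguments are nonnegative there), giving the identity $3\rho(x+1)-\rho(x+2) = 3(x+1)-(x+2) = 2x+1$. Combining yields the exact representation
\[
x^2 \;=\; \int_{-1}^{1} 2\rho(x-a)\,da \;-\; 3\rho(x+1) \;+\; \rho(x+2), \qquad x\in[-1,1].
\]

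To finish, I would repackage this identity as an expectation. The total variation of the coefficient measure is $\int_{-1}^1 2\,da + 3 + 1 = 8$, so I define $\mu$ on $[-2,2]\times[-C,C]$ with $C=8$ as the following mixture: with probability $1/2$, draw $a$ uniformly from $[-1,1]$ and set $b = 8$; with probability $3/8$, set $(a,b) = (-1,-8)$; with probability $1/8$, set $(a,b) = (-2, 8)$. A direct check shows $\E_{(a,b)\sim\mu}[b\rho(x-a)]$ equals the three-term expression above, hence equals $x^2$ for every $x\in[-1,1]$, which is precisely the definition of niceness with constant $C=8$.

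There is no real obstacle here; the only subtlety is guessing (or deriving via the distributional second-derivative trick) the exact affine correction $3\rho(x+1)-\rho(x+2)$, after which verification is a one-line computation and normalization to a probability measure is automatic.
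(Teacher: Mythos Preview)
Your proof is correct. You and the paper both rely on the identity $\partial_a^2 \rho(x-a)=\delta_{a=x}$, but you implement it differently: you integrate $f''\equiv 2$ only over $[-1,1]$ and then repair the resulting affine boundary error $2x+1$ by hand with two point masses at $a=-1,-2$, whereas the paper avoids boundary terms altogether by replacing $x^2$ with a smooth compactly supported extension $f$ equal to $x^2$ on $[-1,1]$ and supported in $[-2,2]$, and then invokes $f=f''*\rho$ to get $x^2=\int f''(a)\rho(x-a)\,da$ directly. Your route is more elementary and yields an explicit constant $C=8$; the paper's route is slicker (no affine correction to guess) but non-constructive in the constant, since $C=\|f''\|_1$ depends on the choice of smooth bump.
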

\begin{proof}
We first claim that if $f:\reals\to\reals$ is smooth and compactly supported then $f = f''*\sigma$.
Indeed,
\begin{eqnarray*}
(f''*\rho)(x) &=& \int_{-\infty}^\infty f''(t)\rho(x-t)dt	
\\
&=&  \int_{-\infty}^\infty f''(t)\int_{-\infty}^{x-t}  \rho'(\tau)d\tau dt	
\\
&=&  \int_{-\infty}^\infty \int_{-\infty}^{x-t} f''(t)  \rho'(\tau)d\tau dt	
\\
&=&  \int_{-\infty}^\infty \int_{-\infty}^{x-\tau} f''(t)  \rho'(\tau)dt d\tau 	
\\
&=&  \int_{-\infty}^\infty  \sigma'(\tau) \int_{-\infty}^{x-\tau} f''(t) dt d\tau 	
\\
&=&  \int_{0}^\infty  \int_{-\infty}^{x-\tau} f''(t) dt d\tau 	
\\
&=&  \int_{0}^\infty  f'(x-\tau)  d\tau 	
\\
&=&  \int_{-\infty}^x  f'(\tau)  d\tau 	
\\
&=&  f(x) 	
\end{eqnarray*}
%
%
Now, let $f:\reals\to\reals$ be a function that is smooth, coincides with $x^2$ on $[-1,1]$ and supported in $[-2,2]$. 
For any $x\in [-1,1]$ we have
\[
x^2 = f(x) =  \left(f''*\rho\right)(x) =  \int_{-\infty}^\infty \rho(x-a)f''(a)  da = \int_{-\infty}^\infty \left( \|f''\|_1 \sign(f(a))\right)\rho(x-a)\frac{\left| f''(a) \right|}{\|f''\|_1}  da 
\]
The lemma thus holds for the distribution $\mu$ of the random variable $\left(a, \left( \|f''\|_1 \sign(f(a))\right)\right)$ where $a$ is sampled according to the density function $\frac{\left| f''(a) \right|}{\|f''\|_1}$
\end{proof}

Theorem \ref{thm:nn_lower} now follows from the following theorem.
\begin{theorem}
Let $\rho$ be a nice activation. Then, for any $R\le \sqrt{d}$ we have that $\fat(\cn_{1,R}^{\rho}(d,R^2,1)) = \Omega\left(\frac{dR^2}{\log^2(d)}\right)$
\end{theorem}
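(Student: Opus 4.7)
The strategy is to lift the fat-shattering bound for $\cq_{d,k,B}$ from Theorem \ref{thm:quadratic_shatter} to $\cn_{1,R}^{\rho}$. Choose $k$ of order $R^2/\mathrm{polylog}(d)$ so that the nicely-shattered set $A$ produced by Theorem \ref{thm:quadratic_shatter} has size $\Omega(dR^2/\log^2(d))$. On $A$ every direction satisfies $|\inner{\bu_i,\x}|\le M=O(\sqrt{\log d})$, and for every dichotomy $h:A\to\{\pm 1\}$ there is a witness $q_h(\x)=\sum_{i=1}^k\lambda_i\inner{\bu_i,\x}^2$ with $|\lambda_i|\le B/\sqrt{k}$ and $h(\x)q_h(\x)\ge 1$ on $A$. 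The plan is to approximate each such $q_h$ by a network $\hat q_h\in\cn_{1,R}^\rho$, with sup-error at most $1/2$ on $A$, so that the $\hat q_h$'s jointly witness $(1/2)$-fat-shattering of $A$.

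To convert $q_h$ into a network we invoke niceness of $\rho$. The identity $x^2=\E_{(a,b)\sim\mu}b\rho(x-a)$ on $[-1,1]$, combined with positive-homogeneity of ReLU and a rescaling by $M$, gives
\[
\inner{\bu_i,\x}^2 \;=\; M\,\E_{(a,b)\sim\mu}\,b\,\rho(\inner{\bu_i,\x}-Ma)\qquad\text{for }|\inner{\bu_i,\x}|\le M.
\]
Sampling $N=R^2/k$ i.i.d.\ pairs $(a_{ij},b_{ij})$ per direction and setting
\[
\hat q_h(\x)=\sum_{i=1}^k\sum_{j=1}^N\frac{\lambda_i M b_{ij}}{N}\rho(\inner{\bu_i,\x}-Ma_{ij})
\]
gives an unbiased estimator of $q_h$. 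Each summand is $O(M^2/(\sqrt{k}\,N))$-bounded on $A$, so Hoeffding plus a union bound over $|A|$ points guarantees, for a suitably balanced polylogarithmic choice of $k,N$ with $kN=R^2$, a deterministic realization $(a_{ij},b_{ij})$ with $\sup_{\x\in A}|\hat q_h(\x)-q_h(\x)|\le 1/2$, as required.

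The last and main step is to realize $\hat q_h$ as an element of $\cn_{1,R}^{\rho}(d',R^2,1)$ with $d'=d+O(R^2)=\Theta(d)$; since $R^2\le d$, the extra bias coordinates are absorbed by the augmentation $\tilde\x=(\x,\mathbf 1_{R^2})$ of norm at most $\sqrt{d'}$. The natural wiring places the orthonormal $\bu_i$ in the first $d$ coordinates of each row of $W_1$ and puts the bias coefficient $-Ma_{ij}$ into a dedicated coordinate $\be_{(i,j)}$ of the augmented part, which makes $W_1 W_1^T$ block-diagonal with $k$ identical $N\times N$ blocks and hence lets the spectrum be controlled by a single block. A uniform rescaling of $W_1$ by $s=\Theta(M)$, compensated in $W_2$ via the ReLU identity $\rho(y)=s\rho(y/s)$ for $s>0$, simultaneously delivers $\|W_1\|\le 1$, $\|W_1\|_F\le R$, and $\|W_2\|\le 1$. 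Verifying this joint normalization — in particular, preventing $\|W_2\|$ from blowing up after the $W_1$ rescaling — is the main technical obstacle and is precisely where the hypothesis $R\le\sqrt{d}$ is used to keep the augmented input radius comparable to $\sqrt{d}$.
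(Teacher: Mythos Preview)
Your overall strategy---invoke Theorem~\ref{thm:quadratic_shatter} to produce a large nicely-shattered set $A$ with witnesses $q_h\in\cq_{d,k,B}(A)$, then approximate each $q_h$ on $A$ by a depth-two $\rho$-network using the ``niceness'' identity $x^2=\E_{(a,b)\sim\mu}b\rho(x-a)$ and Hoeffding---is exactly the paper's. The choice $k=\Theta(R^2/\log d)$ with $\Theta(\log d)$ samples per direction also matches.

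There is, however, a genuine gap. You write the key step as
\[
\inner{\bu_i,\x}^2 \;=\; M\,\E_{(a,b)\sim\mu}\,b\,\rho(\inner{\bu_i,\x}-Ma),
\]
obtained ``combined with positive-homogeneity of ReLU'', and you again use $\rho(y)=s\rho(y/s)$ to transfer the $W_1$ rescaling into $W_2$. But the theorem is stated for an arbitrary \emph{nice} activation, and niceness by no means implies positive homogeneity (the definition only asserts the integral identity on $[-1,1]$ for some $\mu$). So as written your argument proves the ReLU case but not the theorem.

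The paper avoids homogeneity altogether: instead of stretching the identity to $[-M,M]$, it shrinks the argument. With $L=\sqrt{2\ln(20d|A|)}$ (your $M$), it feeds $\inner{\bu_i,\x}/L\in[-1,1]$ into $\rho$, applies niceness directly, and places the compensating factor $L^2$ in the \emph{output} weights. Concretely, the estimator is
\[
f_{\ba,\bb}(\x)=\sum_{j=1}^n\sum_{i=1}^k \frac{\lambda_i b_{i,j}L^2}{n}\,\rho\!\left(\frac{\inner{\bu_i,\x}}{L}-a_{i,j}\right),
\]
so $\E f_{\ba,\bb}=q_h$ and each summand is $O\bigl(L^2/(n\sqrt{k})\bigr)$-bounded on $A$, since $|\rho(\cdot)|$ is evaluated at arguments in $[-3,3]$. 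With $n=\Theta(\log d)$, Hoeffding plus a union bound over $|A|$ gives the $1/2$-approximation. The hidden matrix is $n$ stacked copies of $U$ scaled by $1/L$, so its spectral norm is $\sqrt{n}/L=O(1)$ and Frobenius norm is $\sqrt{nk}/L=O(\sqrt{k\log d})$; the output vector has squared norm $\le nk\cdot O(L^4/(kn^2))=O(L^4/n)=O(\log d)$, all of which are absorbed by the ``$\cn_{O(1),O(R)}$ then rescale'' step. No homogeneity is used anywhere.

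A smaller issue: your rescaling bookkeeping does not close even for ReLU. With your dedicated-bias wiring, $\|W_1\|\le\sqrt{N}+2M$, so $s=\Theta(\sqrt{N}+M)$; after compensating, $\|sW_2\|^2=\Theta\bigl((N+M^2)M^2/N\bigr)\ge\Theta(M^2)=\Theta(\log d)$, not $O(1)$. The paper's version sidesteps this because the $1/L$ scaling of the input simultaneously controls the hidden spectral norm \emph{and} keeps the argument of $\rho$ bounded, so the output weights only carry a factor $L^2/n=O(1)$ in squared norm.
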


\begin{proof} 

In the proof we will allow neurons to have bias terms. This can be standardly eliminated by adding constant dimensions to shattered  vectors.
Fix $k\le d$ and let $A = \left\{\x_1,\ldots,\x_D\right\}\subset \{\pm 1\}^d$ be $D = \Theta\left(\frac{dk}{\log(d)}\right)$ vectors that are nicely shattered by $\cq_{d,k,B}$ for the universal constant $B$ from theorem \ref{thm:quadratic_shatter}. We will show that $\x_1,\ldots,\x_D$ are $1$-shattered by $\cn_{O(1),O(R)}^{\rho}(d,O(k\log(d)),1)$. By simple scaling arguments it follows that $\cn_{1,R}^{\rho}(d,k\log(d),1)$ shatters a set of size $ \Theta\left(\frac{dk}{\log(d)}\right)$. Choosing $k = \frac{R^2}{\log(d)}$ will establish the theorem.

Fix $g:A\to\{\pm 1\}$ it is enough to show that there is $f\in \cn_{O(1),O(R)}^{\rho}(d,O(k\log(d)),1)$ such that
\begin{equation}\label{eq:need_to_show}
\forall \x\in A,\;\;f(\x)g(\x)\ge 1
\end{equation}
Since $A$ is nicely shattered by $\cq_{d,k,B}$ there are orthogonal unit vectors $\bu_1,\ldots,\bu_k$ and numbers $\lambda_1,\ldots,\lambda_k \in \left[-\frac{B}{\sqrt{k}},\frac{B}{\sqrt{k}}\right]$ such that
\begin{equation*}
\forall \x\in A,\;\;f(\x)q(\x)\ge 1\text{ for }q(\x) = \sum_{i=1}^k\lambda_i\inner{\bu_i,\x}^2
\end{equation*}
and
\begin{equation*}
\forall i\in [k]\text{ and }\x\in A,\;\; \left|\inner{\bu_i,\x}\right| \le \sqrt{2\ln(20dD)}
\end{equation*}
We will create a random network with $nk$ hidden neurons,  where $n$ will be determined later. 
Denote by $U\in M_{kd}$ the matrix whose $i$'s row is $\bu_i$, and let $L:=\sqrt{2\ln(20dD)}$. The hidden weight matrix (without the biases) will be
\[
\frac{1}{L}
\left.\left[\begin{tabular}{llll}
$U$\\
$U$\\
$\vdots$\\
$U$\\
\end{tabular}
\right]\right\}n\text{ times}
\]
To generate the biases and the output weights we will sample $nk$ independent pairs $\left\{(a_{i,j},b_{i,j})\right\}_{1\le i\le k,1\le j\le n}$ from the distribution $\mu$ on $[-2,2]\times [-C,C]$ that satisfies $\E_{(a,b)\sim\mu}b\rho(x-a) = x^2$ for any $x\in[-1,1]$. The bias of the $(i(k-1)+j)$'th neuron will be $-a_{i,j}$, and the corresponding output weight will be $\frac{2\lambda_ib_{i,j}L}{n}$.
The network will then calculate the function
\[
f_{\ba,\bb}(\x) = \sum_{j=1}^n\sum_{i=1}^k \frac{2\lambda_ib_{i,j}L}{n}\rho\left(\frac{\inner{\bu_i,x}}{L} - a_{i,j}\right)
\]
Now, we have that for any $\x\in A$, $\E f_{\ba,\bb}(\x) = 2g(\x)$. Likewise, $f_{\ba,\bb}(\x)$ is a sum of $nk$ independent random variables,  bounded by $O\left(\frac{\log(d)}{n\sqrt{k}}\right)$. Using Hoeffding's bound and union bound, we can  choose $n = O(\log(d))$ so that with positive probability $\forall \x\in A, \; \left| f_{\ba,\bb}(\x) - 2q(\x) \right|<1$, implying the \eqref{eq:need_to_show} holds.
Finally, the spectral norm of the hidden weight matrix is
\[
\sqrt{\frac{n}{2\ln(20dD)}} = O(1)
\]
Hence, since the rank is at most $kn$, the Frobenius norm is
\[
O\left(\sqrt{kn}\right) = O\left(\sqrt{k\log(d)}\right)
\]
As for the output weights, the squared norm is 
\[
\frac{L^2}{n^2}\sum_{j=1}^n\sum_{i=1}^k\lambda_i^2b_{i_j}^2 \le  \frac{L^2}{n^2}\sum_{j=1}^n\sum_{i=1}^k \frac{O(1)}{k} = O(1)
\]
This implies that $f_{\ba,\bb}\in  \cn_{O(1),O(R)}^{\rho}(d,O(k\log(d)),1)$

\end{proof}

\section{Future Work}
As we elaborate next our work leaves many open directions for further research.
First, we used ADL in order to analyze the sample complexity of fully connected neural networks.
We believe however that our approach is quite flexible and can be used to analyze the sample complexity of many other classes of functions. Natural candidates are convolutional and residual networks, as well as magnitude bounds in terms of of norms other than the spectral and Euclidean norm. We also believe that ADL can be useful beyond supervised learning, and can be used to analyze the sample complexity of sub-space learning (such as PCA and dictionary learning), clustering, and more.
In even more generality, it is interesting to explore the scope ADL in analyzing sample complexity.
Is ADL a ``complete" framework? That is, does learnability implies low ADL? 

Second, our current analysis leaves much to be desired. There are many poly-log factors in our bounds, the activation is required to be strongly bounded (and in particular, the ReLU activation is not captured), 
the loss function should be bounded, it is not clear whether the use of seeds in necessary, etc. Getting over these shortcomings is left for future work, which will hopefully lead to a cleaner theory.

Lastly, we note that our lower bound, theorem \ref{thm:nn_lower}, requires that $R\le \sqrt{d}$. We believe that this requirement in unnecessary, and the lower bound should hold for much larger  $R$'s.

\subsection*{Acknowledgements}
The authors acknowledge Kunal Talwar for many discussions in early stages of this work. The authors also acknowledge Haim Kaplan, Aryeh Kontorovich, and Yoram Singer for many useful comments.

\bibliography{bib}

\end{document}